\begin{document}







\title{Structure Learning in Bayesian Networks of Moderate Size by Efficient Sampling}



\author{
       \name Ru He      \email rhe@iastate.edu \\
       \addr Department of Computer Science and Department of Statistics\\
       Iowa State University\\
       Ames, IA 50011, USA
       \AND
       \name Jin Tian   \email jtian@iastate.edu \\
       \addr Department of Computer Science\\
       Iowa State University\\
       Ames, IA 50011, USA
       \AND
       \name Huaiqing Wu \email isuhwu@iastate.edu \\
       \addr Department of Statistics\\
       Iowa State University\\
       Ames, IA 50011, USA
       }

\editor{}

\maketitle


\begin{abstract}
We study the Bayesian model averaging approach to learning Bayesian network structures
(DAGs) from data.
We develop new algorithms including the first algorithm that is able to efficiently sample DAGs
according to the exact structure posterior.
The DAG samples can then be used to construct estimators for the posterior of any feature.
We theoretically prove good properties of our estimators and
empirically show that our estimators considerably outperform
the estimators from the previous state-of-the-art methods.
\end{abstract}

\begin{keywords}
  Bayesian Networks, Structure Learning, Bayesian Model Averaging, Sampling
\end{keywords}

\section{Introduction \label{sec-intro}}
Bayesian networks are graphical representations of multivariate joint probability distributions
and have been widely used in
various data mining tasks for probabilistic inference and causal modeling \citep{pearl:2k,spirtes:etal2001}.
The core of a Bayesian network (BN) representation is its Bayesian network structure.
A Bayesian network structure is a DAG (directed acyclic graph)
whose nodes represent the random variables $X_1, X_2, \cdots, X_n$ in the problem domain
and whose edges correspond to the direct probabilistic dependencies.
Semantically, a Bayesian network structure $G$ encodes a set of conditional independence assumptions:
for each variable (node) $X_i$, $X_i$ is conditionally independent of its non-descendants given its parents.
With the above semantics,
a Bayesian network structure
provides a compact representation for joint distributions and
supports efficient algorithms for answering probabilistic queries.
Furthermore,
with its semantics,
a Bayesian network structure can often provide a deep insight into the problem domain
and open the door to the cause-and-effect analysis.

In the last two decades, there have been a large number of researches focusing
on the problem of learning Bayesian network structure(s) from the data.
These researches deal with a common real situation where
the underlying Bayesian network is typically unknown
so that it has to be learned from the observed data.
One motivation for the structure learning
is to use the learned structure for inference or decision making.
For example, we can use the learned model to predict or classify a new instance of data.
Another structure-learning motivation,
which is more closely related to the semantics of Bayesian network structures,
is for discovering the structure of the problem domain.
For example, in the context of biological expression data,
the discovery of the causal and dependence relation among different genes is often of primary interests.
With the semantics of a Bayesian network structure $G$,
the existence of an edge from node $X$ to node $Y$ in $G$ can be interpreted as the fact
that variable $X$ directly influences variable $Y$;
and the existence of a directed path from node $X$ to node $Y$ can be interpreted as the fact
that $X$ eventually influences $Y$.
Furthermore, under certain assumptions \citep{heckerman:etal99,spirtes:etal2001},
the existence of a directed path from node $X$ to node $Y$
indicates that $X$ causes $Y$.
Thus, with the learned Bayesian network structure,
we can answer interesting questions
such as whether gene $X$
controls
gene $Y$ which in turn controls 
gene $Z$
by examining whether
there is a directed path from node $X$ via node $Y$ to node $Z$
in the learned structure.
Just as mentioned
by \citet{friedman:kol03},
the extraction of these kinds of interesting structural features is
often the primary goal in the discovery task.


There are several general approaches to learning BN structures.
One approach is to treat it as a model selection problem.
This approach defines a scoring criterion that measures
how well a BN structure (DAG) fits the data
and finds the DAG (or a set of equivalent DAGs) with the optimal score
\citep{silander:myl06,Jaakkola:AISTATS2010,Yuan:IJCAI2011,Malone:AAAI2011,Malone:UAI2011,Cussens:UAI2011,Yuan:UAI2012,Malone:UAI2013,Cussens:UAI2013,Yuan:JAIR2013}.
(In Bayesian approach, the score of a DAG $G$ is simply the posterior $p(G|D)$ of $G$ given data $D$.)
However, when the data size is small relative to the number of variables,
the posterior $p(G|D)$ often gives significant
support to a number of DAGs,
and using a single maximum-a-posteriori (MAP) model could lead to unwarranted conclusions
\citep{friedman:kol03}.
It is therefore desirable to use the Bayesian model averaging approach
by which the posterior probability of any feature of interest is computed
by averaging over all the possible DAGs
\citep{heckerman:etal99}.

Bayesian model averaging is, however, computationally challenging
because
the number of possible network structures is
between $2^{n(n-1)/2}$ and $n! 2^{n(n-1)/2}$,
super-exponential  in the number of variables $n$.
Tractable algorithms have been developed for special cases of averaging over trees
\citep{meila:jaa06}
and averaging over DAGs given a node ordering
\citep{dash:coo04}.
Since 2004,
dynamic programming (DP) algorithms have been developed for computing exact posterior
probabilities of structural features such as edges or subnetworks
\citep{Koivisto:soo04,Koivisto:06,tian:he2009}.
These algorithms have
exponential time and space complexity and are capable of handling
Bayesian networks of moderate size with up to around $25$ variables (mainly due to their space cost $O(n 2^n)$).
A big limitation of these
algorithms is that they can only compute posteriors of modular
features such as an edge but can not compute non-modular
features such as a path
(``is there a path from node $X$ to node $Y$''),
a combined path
(``is there a path from node $X$ via node $Y$ to node $Z$''
or ``is there a path from node $X$ to node $Y$ and no path from node $X$ to node $Z$''),
or a limited-length path
(``is there a path of length at most 3 from node $X$ to node $Y$'').
Recently,
\citet{Parviainen:ECML2011}
have developed a DP algorithm that can compute the exact posterior
probability of a path feature under a certain assumption.
(The assumption, called order-modular assumption,
will be discussed in details soon.)
This DP algorithm has (even higher) exponential time and space complexity and
can only handle a Bayesian network with fewer than $20$ variables (mainly due to its space cost $O(3^n)$).
Since this DP algorithm can only deal with a path feature,
all the other non-modular
features (such as a combined path)
which various users would be interested in
for their corresponding problems
still can not be computed by any DP algorithm proposed so far.
Note that generally
the posterior $p(f | D)$ of a combined feature $f = (f_1, f_2, \ldots, f_J)$
can not be obtained only from the posterior of each individual feature $p(f_j | D)$
($j \in \{1, 2, \ldots, J\}$)
because the independence among these features does not hold generally.
Actually,
by comparing $p(f_2 | f_1, D)$ with $p(f_2 | D)$,
a user can know  the effect of the feature $f_1$  upon the feature $f_2$;
but to obtain $p(f_2 | f_1, D)$ $(= p(f_1, f_2 | D) / p(f_1 | D))$,
the user typically needs to obtain $p(f_1, f_2 | D)$ first.
Another limitation of all these DP algorithms
is that it is very expensive for
them
to perform data
prediction tasks. They can compute the exact posterior of a new
observational data case $p(x|D)$ but the algorithms have to be re-run for
each new data case $x$.

One solution 
to computing the posterior of an arbitrary non-modular feature is drawing
DAG
samples $\{G_1, \ldots, G_T\}$
from the posterior $p(G|D)$,
which can then be used to approximate the full Bayesian model averaging
by estimating
the posterior of an arbitrary feature $f$ as $p(f|D) $ $ \approx \frac{1}{T}\sum_{i=1}^T f(G_i)$, or the posterior predictive distribution as $p(x|D) $ $ \approx \frac{1}{T}\sum_{i=1}^T p(x|G_i)$.
A number of algorithms have been developed for drawing sample
DAGs
using the bootstrap technique
\citep{friedman:etal99}
or  the Markov Chain Monte Carlo (MCMC) techniques
\citep{madigan:yor95,friedman:kol03,eaton:mur07b,Marco:ML08,niinimaki:etal11,Niinimaki:IJCAI2013}.
\citet{madigan:yor95}
developed the \emph{Structure MCMC} algorithm that
uses the Metropolis-Hastings algorithm in the space of DAGs.
\citet{friedman:kol03}
developed the \emph{Order MCMC} procedure that operates
in the space of orders.  
The Order MCMC was shown to be able to considerably improve over the Structure MCMC the mixing and convergence of the Markov chain and  to outperform the bootstrap approach of
\citet{friedman:etal99}
as well.
\citet{eaton:mur07b}
developed the \emph{Hybrid MCMC} method (i.e., DP+MCMC method) that
first runs the DP algorithm of
\citet{Koivisto:06}
to develop a global proposal distribution and then runs the MCMC phase in the
DAG space.
Their experiments showed that the Hybrid MCMC
converged faster than both the Structure MCMC and the Order MCMC
so that the Hybrid MCMC resulted in more accurate structure learning performance.
An improved MCMC algorithm (often denoted as \emph{REV-MCMC}) traversing in the DAG space
with the addition of a new edge reversal move
was developed by
\citet{Marco:ML08}
and was shown to be superior to the Structure MCMC and nearly
as efficient as the Order MCMC in the mixing and convergence.
Recently,
\citet{niinimaki:etal11}
have proposed the \emph{Partial Order MCMC} method 
which
operates in the space of partial orders.
The Partial Order MCMC includes the Order MCMC as its special case (by setting the parameter bucket size $b$ to be 1)
and has been shown to be superior to the Order MCMC in terms of the mixing and the structural learning performance
when a more appropriate bucket size $b > 1$ is set.
One common drawback of these MCMC algorithms is that there is no guarantee on the quality of the
approximation in finite runs. The approach to approximating full Bayesian model
averaging using the \emph{$K$-best} Bayesian network structures was studied
by
\citet{tian:he2010}
and was shown to be
competitive with the Hybrid MCMC.


Several of these state-of-the-art algorithms work in the order space,
including the exact algorithms
\citep{Koivisto:soo04,Koivisto:06,Parviainen:ECML2011}
and the approximate algorithms:
the Order MCMC
\citep{friedman:kol03}
and the Partial Order MCMC
\citep{niinimaki:etal11}.
They all assume a special form of the structure prior,
termed as \emph{order-modular} prior
\citep{friedman:kol03,Koivisto:soo04},
for computational convenience.
(Please refer to the beginning of Section \ref{sec-subsec-DPs} for the definition of the order-modular prior.)
However, the assumption of order-modular prior has the consequence that  
the
corresponding
prior $p(G)$ cannot represent some desirable priors 
such as a uniform prior over the DAG space;
and the computed posterior probabilities are biased
since a DAG that has a larger number of topological orders
will be assigned
a larger prior probability.
Whether a computed posterior with the bias from the order-modular prior is inferior to its counterpart without such a bias
depends on the application scenario
and is beyond the scope of this paper.
For the detailed discussion about this issue,
please see the related papers
\citep{friedman:kol03,Marco:ML08,Parviainen:ECML2011}.
One method that helps the Order MCMC
\citep{friedman:kol03}
to correct this bias was proposed by
\citet{ellis:won08}.

In this paper,
first we develop a new algorithm that
uses
the results of the DP algorithm of
\citet{Koivisto:soo04}
to
efficiently sample orders according to the \emph{exact} order posterior
under the assumption of order-modular prior.
Next, we develop a time-saving strategy for the process of sampling DAGs consistent with given orders.
(Such a DAG-sampling process is
based on sampling parents for each node
as described by
\citet{friedman:kol03}
by assuming a bounded node in-degree.)
The resulting algorithm (called DDS)  is the first algorithm  that is
able to sample DAGs according to the \emph{exact} DAG posterior with the same order-modular prior assumption.
We empirically show that our DDS algorithm is both considerably more accurate and considerably more efficient than the Order MCMC and the
Partial Order MCMC
when $n$ is moderate so that our DDS algorithm is applicable.
Moreover,
the estimator based on our DDS algorithm has several desirable properties;
for example,
unlike the existing MCMC algorithms,
the quality of our estimator can be guaranteed by controlling the number of DAGs sampled by our DDS algorithm.
 The main application of our DDS algorithm is to address the limitation
 of the exact DP algorithms
 \citep{Koivisto:soo04,Koivisto:06,Parviainen:ECML2011}
 (whose usage is restricted to modular features or path features) 
 in order to
 estimate
 the posteriors of
various \emph{non-modular} features
arbitrarily specified by users.
Additionally our DDS algorithm can also be used
to efficiently perform data
prediction tasks in
estimating
$p(x|D)$ for a large number of data cases (while the exact DP algorithm has to be re-run for each case $x$).
Finally, we develop an algorithm (called IW-DDS)
to correct the bias (due to the order-modular prior) in the DDS algorithm
by extending the idea of
\citet{ellis:won08}.
We theoretically prove that the estimator based on our IW-DDS
has several desirable properties;
then we
empirically show that our estimator is
superior to the estimators based on the Hybrid MCMC method
\citep{eaton:mur07b}
and the $K$-best algorithm
\citep{tian:he2010},
two state-of-the-art algorithms that can estimate the posterior of any feature
without
the order-modular prior assumption.
Analogously,
our IW-DDS algorithm
mainly addresses the limitation of the exact DP algorithm of
 \citet{tian:he2009} (whose usage is restricted to modular features)
 in order to estimate the posteriors of arbitrary \emph{non-modular} features
 and can additionally be used to efficiently perform data prediction tasks
 when an application situation prefers to avoid the bias from order-modular prior.

The rest of the paper is organized as follows.
In Section~\ref{sec-back} we briefly review the Bayesian approach to learning Bayesian networks from data,
the related DP algorithms 
\citep{Koivisto:soo04,Koivisto:06}
and the Order MCMC algorithm
\citep{friedman:kol03}.
In Section~\ref{sec-sam}
we present our order sampling algorithm, DDS algorithm, and IW-DDS algorithm;
and prove good properties of the estimators based on our algorithms.
We empirically demonstrate the advantages of our algorithms in Section~\ref{sec-exp}.
Section~\ref{sec-con} concludes the paper.
Finally,
Appendix~\ref{sec-appendix-proofs} provides
the proofs of
all the conclusions
including the propositions, theorems and corollary referenced in the
paper.

\section{Bayesian Learning of Bayesian Networks \label{sec-back}}
A Bayesian network is a DAG
$G$ that encodes a joint probability distribution over a set $X= \{X_1,\ldots,X_n\}$ of random variables with each node of the DAG representing a variable in $X$. 
For convenience we 
typically work on the index set $V=\{1, \ldots, n\}$ and represent a variable $X_i$ by its index $i$.
We use $X_{Pa_i}\subseteq X$ to represent the parent set of $X_i$ in a DAG $G$
and use $Pa_i \subseteq V$ to represent the corresponding index set.
(A pair $(i, Pa_i)$ is often called a family.)
Thus, a DAG $G$ can be represented as a vector $(Pa_1, \ldots, Pa_n)$. 

Assume that we are given a training data set $D=\{x^1, x^2, \ldots, x^m\}$, where each $x^i$ is a particular instantiation over the set of variables $X$. We only consider situations where the data are complete, that is, every variable in $X$ is assigned a value.
In the Bayesian approach to learning Bayesian networks from the training data $D$,
we compute the posterior probability of a DAG $G$ as
\begin{equation*}
p(G|D) = \frac{p(D|G)p(G)}{p(D)} = \frac{p(D|G)p(G)}{\sum_{G} p(D|G)p(G) }.
\end{equation*}
Assuming global and local parameter independence, and parameter modularity,
$p(D|G)$ can be decomposed into a product of local marginal likelihoods (often called local scores) as
\citep{cooper:her92, heckerman:etal95a}
\begin{align}\label{eq-p_D_given_G}
p(D|G) = \prod_{i=1}^n p(X_i|X_{Pa_i}:D) :=  \prod_{i=1}^n score_i(Pa_i:D),
\end{align}
where, with appropriate parameter priors,
$score_i(Pa_i:D)$
(the local score for a family $(i, Pa_i)$)
has a closed form solution. In this paper we will assume that these local scores can be computed efficiently from data. The standard assumption for structure prior $p(G)$ is %
\emph{structure-modular} prior
\citep{friedman:kol03}:
\begin{align}\label{eq-smodularity-1}
p(G) =  \prod_{i=1}^n p_i(Pa_i),
\end{align}
where $p_i$ is some nonnegative function over the subsets of $V - \{i\}$.

Combing Eq.~(\ref{eq-p_D_given_G}) and Eq.~(\ref{eq-smodularity-1}), we have
\begin{align}\label{eq-p_G_D_struc}
p_{\nprec}(G, D) = p(D|G)p(G) =  \prod_{i=1}^n score_i(Pa_i:D) p_i(Pa_i).
\end{align}
Note that the subscript $\nprec$ is intentionally added by us to mean
that the corresponding probability is the one obtained
without order-modular prior assumption.
This is different from the probability computed with order-modular prior assumption,
which will be marked by the subscript $\prec$ for the distinction.


We can compute the posterior probability of any hypothesis of interest
by averaging over all the possible DAGs.
For example, we are often interested in computing the posteriors of structural features.
Let $f$ be a structural feature represented by an indicator function such that $f(G)$ is 1 if the feature is present in $G$ and 0 otherwise.
By the full Bayesian model averaging, we have the posterior of 
$f$ as
\begin{align}
p(f|D) = \sum_G f(G) p(G|D). \label{eq-HR-pre1}
\end{align}

Note that
$p_{\nprec}(f|D)$ will be obtained if $p(G|D)$ in Eq.~(\ref{eq-HR-pre1}) is $p_{\nprec }(G|D)$;
$p_{\prec}(f|D) $ will be obtained if $p(G|D)$ in Eq.~(\ref{eq-HR-pre1}) is $p_{ \prec}(G|D)$.
This difference is the key to understanding the bias issue which will be described in details later.

Since summing over all the possible DAGs is generally infeasible for any problem with $n > 6$
using a contemporary computer,
one approach to computing the posterior of $f$ is to draw DAG samples $\{G_1, \ldots, G_T\}$
from the posterior
$p_{\nprec }(G|D)$ or $p_{\prec}(G|D)$,
which can then be used to estimate the posterior
$p_{\nprec }(f|D)$ or $p_{\prec}(f|D)$ as
\begin{align} \label{eq-2-HR}
\hat{p}(f|D)=  \frac{1}{T}\sum_{i=1}^T f(G_i).
\end{align}

\subsection{The DP Algorithms}
\label{sec-subsec-DPs}
The DP algorithms
\citep{Koivisto:soo04,Koivisto:06}
work in the order space rather than the DAG space.
We define an \emph{order} $\prec$ of variables
as a total order (a linear order) on $V$ represented as a vector $(U_1, \ldots, U_n)$,
where $U_i$ is the set of predecessors of $i$ in the order $\prec$.
To be more clear we may use $U^{\prec}_i$.
We say that a DAG $G = (Pa_1, \ldots, Pa_n)$ is consistent with an order $(U_1, \ldots, U_n)$, denoted by $G \subseteq \prec$,
 if $Pa_i \subseteq U_i$ for each $i$.
If $S$ is a subset of $V$, we let $\mathcal{L}(S)$ denote the set of linear orders on $S$.
 In the following we will largely follow the notation from
\citet{Koivisto:06}.

The algorithms working in the order space assume \emph{order-modular} prior defined as follows:
if $G$ is consistent with $\prec$, then
\begin{align}\label{eq-smodularity}
p(\prec, G) =  \prod_{i=1}^n q_i(U_i)\rho_i(Pa_i),
\end{align}
where each $q_i$ and $\rho_i$ is some function from the subsets of $V - \{i\}$ to the nonnegative real numbers.
(If $G$ is not consistent with $\prec$, then $p(\prec, G) = 0$.)

A \emph{modular feature} is defined as:
\begin{align*}
f(G) = \prod_{i=1}^n f_i(Pa_i),
\end{align*}
where $f_i(Pa_i)$ is an indicator function returning a $0/1$ value.
For example, an edge feature $j\rightarrow i$ can be represented by setting $f_i(Pa_i)=1$ if and only if $j\in Pa_i$ and setting $f_l(Pa_l)=1$ for all $l\neq i$.

With the order-modular prior, we are interested in the posterior $p_{\prec}(f|D) = p_{\prec}(f,D) / p_{\prec}(D)$.
$p_{\prec}(f|D)$ can be obtained if the joint probability $p_{\prec}(f, D)$ can be computed
(since $p_{\prec}(D)$
$ = p_{\prec}(f \equiv 1,D)$
where $f \equiv 1$, meaning that $f$ always equals 1, can be easily achieved by setting each $f_i(Pa_i)$ to be the constant 1).
Koivisto and Sood (2004) show that
\begin{align}\label{eq-p_f_prec_D}
p(f, \prec,D)
& =  \prod_{i=1}^n \alpha_i(U^{\prec}_i),
\end{align}
and
\begin{align} \label{eq-p_prec(f, D)}
p_{\prec}(f, D)
& =  \sum_{\prec} \prod_{i=1}^n \alpha_i(U^{\prec}_i),
\end{align}
where the function $\alpha_i$ is defined for each $i\in V$ and each $S\subseteq V-\{i\}$ as
\begin{align*}
\alpha_i(S)  =  q_i(S)\sum_{Pa_i\subseteq S} \beta_i(Pa_i),
\end{align*}
in which
the function $\beta_i$ is defined for each $i\in V$ and each $Pa_i \subseteq V-\{i\}$ as
\begin{align*}
\beta_i(Pa_i) =  f_i(Pa_i)\rho_i(Pa_i)score_i(Pa_i:D).
\end{align*}
Accordingly, the DP algorithm of \citet{Koivisto:soo04} consists of the following three steps.
The first step computes $\beta_i(Pa_i)$ for each $i\in V$ and each $Pa_i \subseteq V-\{i\}$.
The time complexity of this step is $O(n^{k+1}C(m))$ under the assumption of the maximum in-degree $k$,
where
$n$ is the number of variables,
and $C(m)$ is the cost of computing a single local marginal 
likelihood $score_i(Pa_i:D)$ for $m$ data instances.
The second step computes $\alpha_i(S)$ for each $i\in V$ and each $S\subseteq V-\{i\}$.
With the assumed maximum in-degree $k$, this step takes $O(k n 2^n)$ time
by using the truncated M\"{o}bius transform technique \citep{Koivisto:soo04}
which is extended from the standard fast M\"{o}bius transform algorithm \citep{kennes:sme90}.
The third step computes $p_{\prec}(f, D)$
by defining the following function (called forward contribution) for each $S\subseteq V$:
\begin{align}\label{eq-L_S}
L(S) = \sum_{\prec\in\mathcal{L}(S)} \prod_{i\in S} \alpha_i(U^{\prec}_i),
\end{align}
where $U^{\prec}_i$ is the set of variables in $S$ ahead of $i$ in the order $\prec\in\mathcal{L}(S)$.
It can be shown that for every $S\subseteq V$ the corresponding $L(S)$ can be computed recursively
using the DP technique according to the following equation
\citep{Koivisto:soo04,Koivisto:06}:
\begin{align}\label{eq-L_S_recursive}
L(S) = \sum_{i\in S} \alpha_i(S-\{i\})L(S-\{i\}),
\end{align}
starting with $L(\emptyset)=1$ and ending with $L(V)$.
From Eq.~(\ref{eq-p_prec(f, D)}) and Eq.~(\ref{eq-L_S}),
we have
\begin{align}\label{eq-L_V}
p_{\prec}(f, D) = L(V).
\end{align}
The third step takes $O(n 2^n)$ time when $L(V)$ is computed using the above DP technique.
In summary, the whole DP algorithm of
\citet{Koivisto:soo04}
can compute the posterior of any modular feature (such as an edge feature)
in $O(n^{k+1}C(m) + kn2^n)$ time and $O(n 2^n)$ space.

As the extended work of
\citet{Koivisto:soo04},
\citet{Koivisto:06}
includes the DP algorithm of
\citet{Koivisto:soo04}
as its first three steps and appends two additional steps
so that
all the $n(n-1)$ edges
can be computed in $O(n^{k+1}C(m) + kn2^n)$ time and $O(n 2^n)$ space.
The foundation of the two additional steps is
the introduction of the following function (called backward contribution) for each $T \subseteq V$:
\begin{align}\label{eq-backward-contr}
 R(T) = \sum_{\prec' \in \mathcal{L}(T)} \prod_{i\in T} \alpha_i( (V-T) \cup  U^{\prec'}_i ).
\end{align}
Like $L(S)$,
$R(T)$ can also be computed recursively using some DP technique. 
Please refer to
the paper of
\citet{Koivisto:06}
for further details of the two additional steps.

While the introduced DP algorithms
\citep{Koivisto:soo04,Koivisto:06}
make significant contributions to the structure learning of Bayesian networks,
they have one fundamental limitation:
they can only compute the posteriors of modular features.
In the next section, we will show how to use the results of the DP algorithm of
\citet{Koivisto:soo04}
to efficiently draw DAG samples,
which can then be used to compute the posteriors of arbitrary features.

\subsection{Order MCMC}
The idea of the Order MCMC is
to use the Metropolis-Hastings algorithm to draw order samples $\{\prec_1, \ldots, \prec_{N_{o}}\}$
that have $p(\prec|D)$ as the invariant distribution,
where $N_o$ is the number of sampled orders.
For this purpose we need to be able to compute $p(\prec,D)$,
which can be obtained from Eq.~(\ref{eq-p_f_prec_D}) by setting
$f \equiv 1$.
Let $\beta'_i(Pa_i)$ denote $\beta_i(Pa_i)$ resulted from setting each $f_i(Pa_i)$ to be the constant 1.
Similarly,
we define $\alpha'_i(S)$ and $L'(S)$ as the special cases of $\alpha_i(S)$ and $L(S)$
respectively by setting each $f_i(Pa_i)$ to be the constant 1.
Then from Eq.~(\ref{eq-p_f_prec_D}) and (\ref{eq-L_V}) we have
\begin{align}\label{eq-p_prec_D}
p(\prec,D)
& =  \prod_{i=1}^n \alpha'_i(U^{\prec}_i),
\end{align}
and
\begin{align}\label{eq-120330237}
p_{\prec}(D) = L'(V).
\end{align}

The Order MCMC can estimate the posterior of a modular feature as
\begin{align} \label{eq-3-HR}
\hat{p}_{\prec}(f|D) = \frac{1}{N_{o}}\sum_{i=1}^{N_{o}} p(f|\prec_i,D).
\end{align}
For example, from Propositions 3.1 and 3.2
stated by
\citet{friedman:kol03}
as well as the definitions of $\beta'_i$ and $\alpha'_i$,
the posterior of a particular choice of parent set $Pa_i \subseteq U^{\prec}_i$ for node $i$ given an order 
is
\begin{align}\label{eq-120330246}
p((i, Pa_i)|\prec,D) = \frac{\beta'_i(Pa_i)}{\alpha'_i(U^{\prec}_i)/q_i(U^{\prec}_i)},
\end{align}
and the posterior of the edge feature $j\rightarrow i$ given an order 
is
\begin{align} \label{eq-1-HR}
p(j\rightarrow i|\prec,D)= 1- \frac{\alpha'_i(U^{\prec}_i-\{j\})/q_i(U^{\prec}_i-\{j\})}{\alpha'_i(U^{\prec}_i)/q_i(U^{\prec}_i)}.
\end{align}

In order to compute arbitrary non-modular features, we further draw DAG samples after drawing $N_o$ order samples.
Given an order, a DAG can be sampled by drawing parents for each node according to Eq.~(\ref{eq-120330246}).
Given DAG samples $\{G_1, \ldots, G_T\}$, we can then estimate any feature posterior $p_{\prec}(f | D)$ using $\hat{p}_{\prec}(f | D)$
shown in  Eq.~(\ref{eq-2-HR}).

\section{Order Sampling Algorithm and DAG Sampling Algorithms \label{sec-sam}}

In this section we present our order sampling algorithm, DDS algorithm and IW-DDS algorithm.
We also prove good properties of the estimators based on our algorithms.

\subsection{Order Sampling Algorithm \label{sec-order-samp-algo}}
In this subsection,
we show that using the results including $\alpha'_i(S)$ (for each $i \in V$ and each $S\subseteq V-\{i\}$) and $L'(S)$
(for each $S\subseteq V$)
computed from the DP algorithm of
\citet{Koivisto:soo04},
we can draw an order sample efficiently by drawing each element in the order one by one.
Let an order $\prec$ be represented as ($\sigma_1, \ldots, \sigma_n$) 
where $\sigma_i$ is the $i$th element in the order.
\begin{proposition}\label{prop-1}
The conditional probability that the $k$th ($1 \leq k \leq n$) element in the order is $\sigma_k$
given that the $n - k$ elements after it along the order are $\sigma_{k+1}, \ldots, \sigma_n$ respectively is as follows:
\begin{align}\label{eq-120330240}
p(\sigma_k| \sigma_{k+1}, \ldots, \sigma_n, D) = \frac{L'(U^{\prec}_{\sigma_k})\alpha'_{\sigma_k}(U^{\prec}_{\sigma_k})}{L'(U^{\prec}_{\sigma_{k+1}})},
\end{align}
where $\sigma_k \in V - \{\sigma_{k+1}, \ldots, \sigma_n \} $,
and $U^{\prec}_{\sigma_i}$
$= V - \{\sigma_i, \sigma_{i+1}, \ldots, \sigma_n\}$
so that $U^{\prec}_{\sigma_i}$ denotes the set of predecessors of $\sigma_i$ in the order $\prec$.
\\ Specifically for $k=n$, we essentially have
\begin{align}\label{eq-120330240-init}
p(\sigma_n=i|D)= \frac{L'(V-\{i\})\alpha'_i(V-\{i\})}{L'(V)},
\end{align}
where $i \in V$.
\end{proposition}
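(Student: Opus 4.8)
The plan is to expand the conditional probability by its definition and recognize the two sums that appear as values of the function $L'$. First I would write
\[
p(\sigma_k \mid \sigma_{k+1}, \ldots, \sigma_n, D) = \frac{p(\sigma_k, \sigma_{k+1}, \ldots, \sigma_n, D)}{p(\sigma_{k+1}, \ldots, \sigma_n, D)},
\]
reading each event in the numerator and denominator as a partial specification of the order: namely, that the last positions of $\prec$ are occupied by the listed elements in the listed arrangement, with the remaining (earlier) positions unconstrained. Thus the numerator is obtained by summing the joint probability $p(\prec', D)$ of Eq.~(\ref{eq-p_prec_D}) over all total orders $\prec'$ whose tail agrees with $(\sigma_k, \sigma_{k+1}, \ldots, \sigma_n)$.

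Next I would use $p(\prec', D) = \prod_{i=1}^n \alpha'_i(U^{\prec'}_i)$ and split this product over the fixed tail and the free head. For any order $\prec'$ contributing to the numerator, the predecessor sets of the tail elements are forced, $U^{\prec'}_{\sigma_j} = V - \{\sigma_j, \ldots, \sigma_n\} = U^{\prec}_{\sigma_j}$ for $j \ge k$, while the first $k-1$ elements range over all linear orders $\tau \in \mathcal{L}(U^{\prec}_{\sigma_k})$ of the set $U^{\prec}_{\sigma_k} = V - \{\sigma_k, \ldots, \sigma_n\}$. Hence
\[
p(\sigma_k, \ldots, \sigma_n, D) = \Bigl(\textstyle\prod_{j=k}^n \alpha'_{\sigma_j}(U^{\prec}_{\sigma_j})\Bigr) \sum_{\tau \in \mathcal{L}(U^{\prec}_{\sigma_k})} \prod_{i \in U^{\prec}_{\sigma_k}} \alpha'_i(U^{\tau}_i) = \Bigl(\textstyle\prod_{j=k}^n \alpha'_{\sigma_j}(U^{\prec}_{\sigma_j})\Bigr) L'(U^{\prec}_{\sigma_k}),
\]
by the definition of $L'$ (the primed analogue of Eq.~(\ref{eq-L_S})). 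The same computation with $k$ replaced by $k+1$ gives $p(\sigma_{k+1}, \ldots, \sigma_n, D) = \bigl(\prod_{j=k+1}^n \alpha'_{\sigma_j}(U^{\prec}_{\sigma_j})\bigr) L'(U^{\prec}_{\sigma_{k+1}})$. Dividing, the tail products telescope down to the single factor $\alpha'_{\sigma_k}(U^{\prec}_{\sigma_k})$, which is exactly Eq.~(\ref{eq-120330240}). For the boundary case $k = n$ there is no conditioning event, so $p(\sigma_n = i \mid D) = p(\sigma_n = i, D)/p(D)$; the same argument gives $p(\sigma_n = i, D) = \alpha'_i(V - \{i\}) L'(V - \{i\})$ and $p(D) = p_{\prec}(D) = L'(V)$ by Eq.~(\ref{eq-120330237}), yielding Eq.~(\ref{eq-120330240-init}).

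The only real care needed is the bookkeeping in the first step: precisely interpreting the conditioning events as tails of orders, and confirming that each total order with a prescribed tail is counted exactly once so that the inner sum is genuinely $L'$ of the complementary set. Once that set-up is in place, the rest is the telescoping cancellation of the $\alpha'$ factors, which is routine.
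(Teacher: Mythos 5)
Your proof is correct and follows essentially the same route as the paper: the paper first establishes (as a lemma) the tail-marginal identity $p(\sigma_k,\ldots,\sigma_n,D) = L'(U^{\prec}_{\sigma_k})\prod_{i=k}^n \alpha'_{\sigma_i}(U^{\prec}_{\sigma_i})$ by summing $p(\prec',D)=\prod_i\alpha'_i(U^{\prec'}_i)$ over the free head orders, exactly as you do, and then obtains Eq.~(\ref{eq-120330240}) from the definition of conditional probability. Your telescoping division and the $k=n$ case via $p_{\prec}(D)=L'(V)$ match the paper's argument, so there is nothing to correct.
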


Note that all the proofs in this paper are provided in Appendix \ref{sec-appendix-proofs}.

It is clear that for each $k \in \{1, \ldots, n\},$ $\sum_{i\in U^{\prec}_{\sigma_{k+1}}} p(\sigma_k=i| \sigma_{k+1}, \ldots, \sigma_n, D)=1$
because of Eq.~(\ref{eq-L_S_recursive}) and $U^{\prec}_{\sigma_k} = U^{\prec}_{\sigma_{k+1}} - \{\sigma_k\}$.
Thus, $p(\sigma_k | \sigma_{k+1}, \ldots, \sigma_n, D)$ is a 
probability mass function (pmf) with $k$ possible $\sigma_k$  values from $U^{\prec}_{\sigma_{k+1}}$.


Based on Proposition~\ref{prop-1}, we propose the following order sampling algorithm to sample an order $\prec$:
\begin{itemize}
\item Sample $\sigma_n$, the last element of the order $\prec$, according to Eq.~(\ref{eq-120330240-init}). 
\item For each $k$ from $n-1$ down to $1$: given the sampled $(\sigma_{k+1}, \ldots, \sigma_n)$,
      sample $\sigma_k$, the $k$th element of the order $\prec$, according to Eq.~(\ref{eq-120330240}).
\end{itemize}

Sampling an order using the above algorithm takes only $O(n^2)$ time
since sampling each element $\sigma_k$ ($k \in \{1, \ldots,n \}$) in the order takes $O(n)$ time.


The following proposition guarantees the correctness of our order sampling algorithm.
\begin{proposition}\label{prop-2}
An order $\prec$ sampled according to our order sampling algorithm has its pmf equal to the exact posterior $p(\prec|D)$
under the order-modular prior,
because
\begin{align}\label{eq-120330235}
\prod_{k=1}^n p(\sigma_k| \sigma_{k+1}, \ldots, \sigma_n, D) = p(\prec|D).
\end{align}
\end{proposition}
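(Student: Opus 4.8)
The plan is to prove Equation~(\ref{eq-120330235}) by a telescoping-product argument, after which Proposition~\ref{prop-2} follows immediately: if the product of the conditional pmfs used by the sampling algorithm equals the joint posterior $p(\prec|D)$, then the order returned by the algorithm is distributed exactly according to $p(\prec|D)$.

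First I would write out the left-hand side of~(\ref{eq-120330235}) using Proposition~\ref{prop-1}. For $k = n$, the factor is $L'(V-\{\sigma_n\})\alpha'_{\sigma_n}(V-\{\sigma_n\}) / L'(V)$ by~(\ref{eq-120330240-init}), and for $1 \le k \le n-1$ the factor is $L'(U^{\prec}_{\sigma_k})\alpha'_{\sigma_k}(U^{\prec}_{\sigma_k}) / L'(U^{\prec}_{\sigma_{k+1}})$ by~(\ref{eq-120330240}). Now observe that $U^{\prec}_{\sigma_n} = V - \{\sigma_n\}$, so the $k=n$ factor has exactly the same shape as the others, with the convention $U^{\prec}_{\sigma_{n+1}} = V$. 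Thus the whole product is
\begin{align*}
\prod_{k=1}^{n} \frac{L'(U^{\prec}_{\sigma_k})\,\alpha'_{\sigma_k}(U^{\prec}_{\sigma_k})}{L'(U^{\prec}_{\sigma_{k+1}})}
= \left(\prod_{k=1}^{n} \alpha'_{\sigma_k}(U^{\prec}_{\sigma_k})\right) \cdot \prod_{k=1}^{n} \frac{L'(U^{\prec}_{\sigma_k})}{L'(U^{\prec}_{\sigma_{k+1}})}.
\end{align*}
The second product telescopes: since $U^{\prec}_{\sigma_1} = \emptyset$ and $U^{\prec}_{\sigma_{n+1}} = V$, it collapses to $L'(\emptyset)/L'(V) = 1/L'(V)$, using $L'(\emptyset)=1$. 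The first product is exactly $\prod_{i=1}^n \alpha'_i(U^{\prec}_i)$ after reindexing by node rather than by position, which by~(\ref{eq-p_prec_D}) equals $p(\prec,D)$. Hence the left-hand side equals $p(\prec,D)/L'(V)$, and by~(\ref{eq-120330237}) this is $p(\prec,D)/p_{\prec}(D) = p(\prec|D)$.

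The remaining point to nail down is that the product of conditionals genuinely is the joint pmf of the sampled order, i.e.\ that the sampling scheme is a legitimate factorization. Here I would invoke the chain rule: the algorithm draws $\sigma_n$, then $\sigma_{n-1}$ given $\sigma_n$, and so on, so by construction the pmf of the returned sequence $(\sigma_1,\ldots,\sigma_n)$ is $p(\sigma_n|D)\prod_{k=1}^{n-1} p(\sigma_k|\sigma_{k+1},\ldots,\sigma_n,D)$, which is precisely the left-hand side of~(\ref{eq-120330235}). One should also check consistency of the conditionals---that each factor in Proposition~\ref{prop-1} is a valid pmf summing to one over the admissible choices of $\sigma_k$---but this has already been noted in the excerpt as a consequence of the recursion~(\ref{eq-L_S_recursive}) together with $U^{\prec}_{\sigma_k} = U^{\prec}_{\sigma_{k+1}} - \{\sigma_k\}$.

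I do not expect a serious obstacle: the argument is essentially a telescoping cancellation plus the chain rule. The only mild care needed is bookkeeping with the index conventions---treating the $k=n$ base case uniformly by setting $U^{\prec}_{\sigma_{n+1}} = V$, and correctly switching between indexing the product over positions $k$ and over nodes $i$ when matching against~(\ref{eq-p_prec_D}). If anything is subtle, it is making sure the support of the sampled order is all of $\mathcal{L}(V)$ (so that no order with positive posterior is missed), which follows because each conditional assigns positive probability to every remaining candidate whenever the corresponding $L'$ and $\alpha'$ values are positive; this is already implicit in Proposition~\ref{prop-1}.
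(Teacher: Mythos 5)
Your proposal is correct and follows essentially the same route as the paper: multiply the conditionals from Proposition~\ref{prop-1}, let the $L'$ factors telescope using $L'(\emptyset)=1$, and identify the remaining $\prod_{i}\alpha'_i(U^{\prec}_i)/L'(V)$ with $p(\prec,D)/p_{\prec}(D)=p(\prec|D)$ via Eq.~(\ref{eq-p_prec_D}) and Eq.~(\ref{eq-120330237}). Your extra remarks on the chain-rule factorization and the validity of each conditional pmf are sound bookkeeping that the paper leaves implicit.
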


The key to our order sampling algorithm is that
we realize that
the results
including $\alpha'_i(S)$ 
and $L'(S)$ 
computed from the DP algorithm
of \citet{Koivisto:soo04}
are already sufficient to guide the order sampling process.
In an abstract point of view,
the results computed from the DP algorithm
of \citet{Koivisto:soo04}
are analogous to
the answers provided by
the $\#$P-oracle stated in Theorem 3.3
of
\citet{Jerrum:TSC1986}.
Theorem 3.3
of \citet{Jerrum:TSC1986}
states that
with the aid of a $\#$P-oracle
that is always able to
provide
the exact counting information
(the exact number) of accepting configurations
from a currently given configuration,
a probabilistic Turing Machine can serve as a uniform generator
so that every accepting configuration
will be reached with an equal positive probability.
In our situation, instead of providing the exact counting information,
the results
computed from the DP algorithm
of \citet{Koivisto:soo04}
is able to provide the exact joint probability $p(\sigma_k, \sigma_{k+1}, \ldots, \sigma_n, D)$
for a subsequence $(\sigma_k, \sigma_{k+1}, \ldots, \sigma_n)$ of any order $\prec $  
for any $k \in \{1, 2, \ldots, n \}$, which is shown in the proof for Proposition~\ref{prop-1} in Appendix \ref{sec-appendix-proof-prop1}.
As a result, the order sampling can be efficiently performed
based on the definition of conditional probability distribution.


\subsection{DDS Algorithm \label{sec-DDS-algo}}
After drawing an order sample,
we can then easily sample a DAG  by drawing parents for each node according to Eq.~(\ref{eq-120330246}) as described
by
\citet{friedman:kol03}
(assuming a maximum in-degree $k$).
This naturally leads to our algorithm, termed Direct DAG Sampling (DDS), as follows:
\begin{itemize}
\item Step 1: Run the DP algorithm
              of \citet{Koivisto:soo04}
              with each $f_i(Pa_i)$ set to be the constant 1.
\item Step 2 (Order Sampling Step): Sample $N_o$ orders such that each order $\prec$ is independently sampled according to our order sampling algorithm.
\item Step 3 (DAG Sampling Step): For each sampled order $\prec$,
                one DAG is independently sampled by drawing a parent set for each node of the DAG according to Eq.~(\ref{eq-120330246}).
\end{itemize}


The correctness of our DDS algorithm is guaranteed by
the following
theorem.

\begin{theorem}\label{thm-DDS}
The $N_o$ DAGs sampled according to 
the DDS algorithm
are independent and identically distributed (iid) with the pmf equal to
the exact posterior $p_{\prec}(G|D)$
under the order-modular prior.

\end{theorem}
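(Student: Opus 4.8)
The plan is to prove the two assertions separately. Independence and identical distribution are essentially immediate: in the DDS algorithm the pair $(\prec_t, G_t)$ for each $t \in \{1,\dots,N_o\}$ is produced by the same recipe — first draw $\prec_t$ with the order sampling algorithm of Section~\ref{sec-order-samp-algo}, then draw $G_t$ from a distribution that depends only on $\prec_t$ (Step~3) — using fresh randomness across different $t$. Hence the pairs $(\prec_t, G_t)$ are iid, and therefore so are the DAGs $G_t$. It then remains only to identify the common marginal pmf of a single sampled DAG $G$ with $p_\prec(G|D)$.

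To compute that marginal, I would condition on the sampled order. Since Step~3 draws a parent set $Pa_i \subseteq U^{\prec}_i$ independently for each node $i$ according to Eq.~(\ref{eq-120330246}), the conditional probability of obtaining a particular $G = (Pa_1,\dots,Pa_n)$ given $\prec$ equals $\prod_{i=1}^n p((i,Pa_i)|\prec,D)$ when $G \subseteq \prec$ and is $0$ otherwise; in particular a sampled DAG is always consistent with its order. Combining this with Proposition~\ref{prop-2}, which identifies the pmf of the sampled order with $p(\prec|D)$, gives
\begin{align*}
p(G) = \sum_{\prec\,:\,G\subseteq\prec} p(\prec|D)\,\prod_{i=1}^n p((i,Pa_i)|\prec,D).
\end{align*}
Now I would substitute $p(\prec|D) = \prod_{i=1}^n \alpha'_i(U^{\prec}_i) / L'(V)$, which follows from Eqs.~(\ref{eq-p_prec_D}) and~(\ref{eq-120330237}), together with $p((i,Pa_i)|\prec,D) = \beta'_i(Pa_i)\,q_i(U^{\prec}_i)/\alpha'_i(U^{\prec}_i)$, which follows from Eq.~(\ref{eq-120330246}) and the definition $\alpha'_i(S) = q_i(S)\sum_{Pa_i\subseteq S}\beta'_i(Pa_i)$. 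The key algebraic step — really the heart of the proof — is that the $n$ factors $\alpha'_i(U^{\prec}_i)$ cancel between numerator and denominator, leaving the summand equal to $\prod_{i=1}^n q_i(U^{\prec}_i)\beta'_i(Pa_i)/L'(V)$.

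Finally, I would recognize $\prod_{i=1}^n q_i(U^{\prec}_i)\beta'_i(Pa_i)$ as exactly $p(\prec,G,D)$ under the order-modular prior: expanding $\beta'_i(Pa_i) = \rho_i(Pa_i)\,score_i(Pa_i:D)$ and regrouping factors recovers $p(\prec,G)$ from Eq.~(\ref{eq-smodularity}) times $p(D|G)$ from Eq.~(\ref{eq-p_D_given_G}). Summing over all orders $\prec$ consistent with $G$ then yields $p_\prec(G,D)$, and dividing by $L'(V) = p_\prec(D)$ (Eq.~(\ref{eq-120330237})) gives $p(G) = p_\prec(G|D)$, as desired. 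I do not expect a serious obstacle here; the only place I would take extra care is the bookkeeping for degenerate terms — orders or families for which $\alpha'_i$ or $q_i$ vanishes — which receive probability zero under the sampling scheme and can simply be omitted from the sums, so that no genuine division by zero occurs.
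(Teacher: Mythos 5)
Your proposal is correct and follows essentially the same route as the paper's proof: condition on the sampled order, apply total probability over the orders consistent with $G$, identify $\sum_{\prec\,:\,G\subseteq\prec} p(\prec,G|D)$ with $p_{\prec}(G|D)$, and deduce iid-ness from the independent repetitions of the order- and DAG-sampling steps. The only cosmetic difference is that you carry out the identification by explicit cancellation of the $\alpha'_i$ factors and recognition of the order-modular joint $\prod_i q_i(U^{\prec}_i)\rho_i(Pa_i)score_i(Pa_i:D)$, whereas the paper obtains the same identity by setting $f(G)=I[G=G_i]$ in the feature decomposition of $p_{\prec}(f|D)$.
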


The time complexity of the DDS algorithm is as follows.
Step 1 takes $O(n^{k+1}C(m) + kn2^n)$ time
\citep{Koivisto:soo04},
which has been discussed in Section \ref{sec-subsec-DPs}.
In Step 2, sampling each order takes $O(n^2)$ time.
In Step 3, sampling each DAG  takes $O(n^{k+1})$ time.
Thus, the overall time complexity of our DDS algorithm is $O(n^{k+1}C(m) + kn2^n + n^{2} N_o + n^{k+1} N_o)$.
Since typically we assume $k \geq 1$,
the order sampling process (Step 2) does not affect the overall time complexity of the DDS algorithm
because of its efficiency.

The time complexity of our DDS algorithm depends on the assumption of the maximum in-degree $k$.
Such an assumption is fairly innocuous, as discussed on page 101 in
the article of
\citet{friedman:kol03},
because DAGs with very large families tend to have low scores.
(The maximum-in-degree assumption is also justified
in the context of biological expression data
on page 270 in
the article of \citealt{Marco:ML08}.)
Accordingly, this assumption has been widely used
in the literature
\citep{friedman:kol03,Koivisto:soo04,ellis:won08,Marco:ML08,niinimaki:etal11,Parviainen:ECML2011}
and
the maximum in-degree $k$ has been set to be no greater than $6$ in all of their experiments.

Note that the DAG sampling step of the DDS algorithm takes $O(n^{k+1} N_o)$ time.
This will actually dominate the overall running time of the DDS algorithm (even if $k$ is assumed to be $3$ or $4$),
when $n$ is moderate ($n \leq 25$) and the sample size $N_o$ reaches 
several thousands.
Therefore,
for the efficiency of our DDS algorithm,
we have developed a time-saving strategy for the DAG sampling step, which will be described in details in
Remark \ref{rm:time_DAG_sampling}.


Given DAG samples,
$\hat{p}_{\prec}(f|D)$,
an estimator for the exact posterior of any arbitrary feature $f$, 
can be constructed by  Eq.~(\ref{eq-2-HR}).
Letting $C_{n, f}$ denote
the time cost of determining the structural feature $f$ in a DAG of $n$ nodes,
constructing $\hat{p}_{\prec}(f|D)$ takes $O(C_{n,f} N_o )$ time.
(For example, $C_{n, f_e}$ $= O(1)$ for an edge feature $f_e$; and $C_{n, f_p}$ $= O(n^2)$ for a path feature $f_p$.)
If we only need order samples, the algorithm consisting of Steps 1 and 2 will be called Direct Order Sampling (DOS).
Given order samples,
for some modular feature $f$ such as a parent-set feature or an edge feature, $p(f|\prec_i,D)$ can be computed by
Eq.~(\ref{eq-120330246}) or (\ref{eq-1-HR}), and then $p_{\prec}(f|D)$ can be estimated by Eq.~(\ref{eq-3-HR}).
(Since computing a parent-set feature or an edge feature by
Eq.~(\ref{eq-120330246}) or (\ref{eq-1-HR}) takes $O(1)$ time ,
estimating $p_{\prec}(f|D)$ by Eq.~(\ref{eq-3-HR}) only takes $O(N_o)$ time for these two features.)

As for the space costs of our DDS algorithm, note that
both a total order and a DAG can be represented in $O(n)$ space
(since a total order can be represented as a vector $(U_1, \ldots, U_n)$
and a DAG can be represented as a vector $(Pa_1, \ldots, Pa_n)$).
Therefore,
the overall memory requirement of our DDS algorithm is $O(n 2^n + n N_o)$:
Step 1 of our DDS takes $O(n 2^n)$ memory space;
and Steps 2 and 3 of our DDS take $O(n N_o)$ memory space.


Due to Theorem \ref{thm-DDS}, the estimator $\hat{p}_{\prec}(f|D)$ based on our DDS algorithm
has the following desirable properties. 

\begin{corollary}
\label{coro-1}
For any structural feature $f$, with respect to the exact posterior $p_{\prec}(f|D)$,
the estimator $\hat{p}_{\prec}(f|D)$ based on the $N_o$ DAG samples from the DDS algorithm using Eq.~(\ref{eq-2-HR}) has the following properties:

(i) $\hat{p}_{\prec}(f|D)$ is an unbiased estimator for $p_{\prec}(f|D)$.

(ii) $\hat{p}_{\prec}(f|D)$ converges almost surely to $p_{\prec}(f|D)$.  


(iii)
If $0 < p_{\prec}(f|D) < 1$,
then
the random variable
$$
\frac{\sqrt{N_o}(\hat{p}_{\prec}(f|D) - p_{\prec}(f|D))} { \sqrt{\hat{p}_{\prec}(f|D)(1 - \hat{p}_{\prec}(f|D))} }
$$
has a limiting standard normal distribution. 

(iv) 
For any $\epsilon > 0$, any $0 < \delta < 1$,
if $N_o \geq (\ln (2/\delta))/(2\epsilon^2)$,
then $p(|\hat{p}_{\prec}(f|D) -p_{\prec}(f|D)| < \epsilon)$ $ \geq 1 - \delta$.

\end{corollary}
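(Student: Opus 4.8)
The plan is to reduce every part of the statement to the classical theory of i.i.d.\ Bernoulli averages. By Theorem~\ref{thm-DDS}, the sampled DAGs $G_1,\ldots,G_{N_o}$ are i.i.d.\ with pmf $p_{\prec}(G|D)$, so the indicators $Y_i := f(G_i)$ are i.i.d.\ Bernoulli random variables taking values in $\{0,1\}$. Their common mean is $E[Y_1] = \sum_G f(G)\,p_{\prec}(G|D) = p_{\prec}(f|D)$ by Eq.~(\ref{eq-HR-pre1}) read with $p_{\prec}$, and since $Y_1\in\{0,1\}$ the common variance is $p_{\prec}(f|D)\bigl(1-p_{\prec}(f|D)\bigr)$. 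Observe that the estimator in Eq.~(\ref{eq-2-HR}) is exactly the sample mean $\bar{Y}=\frac{1}{N_o}\sum_{i=1}^{N_o}Y_i$, so each claim becomes a standard fact about $\bar Y$.

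For (i), I would apply linearity of expectation: $E[\hat p_{\prec}(f|D)] = \frac{1}{N_o}\sum_{i=1}^{N_o} E[Y_i] = p_{\prec}(f|D)$. For (ii), I would invoke the Strong Law of Large Numbers for i.i.d.\ summands with finite mean, giving $\bar Y \to p_{\prec}(f|D)$ almost surely. For (iii), under the hypothesis $0<p_{\prec}(f|D)<1$ the limiting variance $\sigma^2 = p_{\prec}(f|D)(1-p_{\prec}(f|D))$ is strictly positive, so the Central Limit Theorem gives $\sqrt{N_o}\bigl(\bar Y - p_{\prec}(f|D)\bigr) \Rightarrow N(0,\sigma^2)$. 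By part (ii) together with the continuous mapping theorem, the plug-in variance estimate $\hat p_{\prec}(f|D)\bigl(1-\hat p_{\prec}(f|D)\bigr)$ converges almost surely, hence in probability, to $\sigma^2>0$; Slutsky's theorem then yields that the studentized statistic in the claim converges in distribution to the standard normal. For (iv), I would apply Hoeffding's inequality for bounded i.i.d.\ variables ($Y_i\in[0,1]$), which gives $p\bigl(|\bar Y - p_{\prec}(f|D)|\ge\epsilon\bigr)\le 2\exp(-2N_o\epsilon^2)$; the condition $N_o \ge \ln(2/\delta)/(2\epsilon^2)$ is exactly what forces the right-hand side to be at most $\delta$, so the complementary event has probability at least $1-\delta$.

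The argument is essentially bookkeeping layered on top of Theorem~\ref{thm-DDS}, so there is no deep obstacle. The one place that warrants a little care is part (iii): one must first establish consistency of the variance estimator (which is a consequence of part (ii)) before appealing to Slutsky's theorem, and one should note that the hypothesis $0<p_{\prec}(f|D)<1$ is precisely what keeps the limiting variance nondegenerate, so that the ratio is well-defined and the normalization is legitimate in the limit.
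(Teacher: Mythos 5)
Your proposal is correct and follows essentially the same route as the paper's proof: reduce to i.i.d.\ Bernoulli variables $f(G_i)$ via Theorem~\ref{thm-DDS}, then use linearity of expectation for (i), the Strong Law of Large Numbers for (ii), the CLT combined with the continuous mapping theorem and Slutsky's theorem for (iii), and the Hoeffding bound for (iv). No meaningful differences from the paper's argument.
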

In particular, Corollary \ref{coro-1} (iv), which is essentially 
from Hoeffding bound
\citep{Hoeffding:1963,Koller:PGM}:
$p(|\hat{p}_{\prec}(f|D) -p_{\prec}(f|D)|\geq \epsilon) \leq 2e^{-2 N_o \epsilon^2}$,
states that in order to ensure that the probability that the error of the estimator $\hat{p}_{\prec}(f|D)$ from the DDS algorithm
is bounded by $\epsilon$ is at least $1 - \delta$,
we just need to require the sample size $N_o$ $\geq (\ln (2/\delta)) /(2\epsilon^2)$.
This property,
which the existing MCMC algorithms \citep{friedman:kol03,niinimaki:etal11} do not have,
can be used to obtain quality guarantee for the estimator from our DDS algorithm.

\begin{remark}
\label{rm:time_DAG_sampling}
About our time-saving strategy for the DAG sampling step of the DDS.

The running time of the DAG sampling step (Step 3) of the DDS algorithm is $O(n^{k+1} N_o)$,
which will actually dominate the overall running time of the DDS algorithm
when $n$ is moderate
and the sample size $N_o$ reaches
several thousands.
Thus, in the following we will introduce our strategy
for effectively reducing
the running time of the DAG sampling step
so that the efficiency of the overall DDS algorithm can be achieved.

In the DAG sampling step,
each sampled order $\prec_i$
$= (\sigma_1, \ldots, \sigma_n)^{\prec_i} $ ($1 \leq i \leq N_o $)
can be represented as
$\{(\sigma_1, U_{\sigma_1})^{\prec_i}, \ldots, (\sigma_n, U_{\sigma_n})^{\prec_i} \}$,
where $U_{\sigma_j}$ denotes the set of predecessors of $\sigma_j$ in the order.
For each sampled order $\prec_i$,
for each $(\sigma_j, U_{\sigma_j})^{\prec_i}$ ($1 \leq j \leq n $),
we need to sample one $Pa_{\sigma_j}$ of $\sigma_j$ (one parent set of $\sigma_j$) from
a list $\{Pa_{\sigma_j z} \}_z^{\sigma_j}$ including every parent set $Pa_{\sigma_j z}$ $ \subseteq U_{\sigma_j}^{\prec_i}$.
Let $Z_j$ be the length of such a list.
Since $Z_j = \sum_{i=0}^{min\{k, j-1\}} \binom{j-1}{i}$ $ = O(n^k)$,
sampling one $Pa_{\sigma_j}$ for $\sigma_j$  takes $O(n^k)$ time
and sampling one DAG takes $O(n^{k+1})$ time.
Note that $Z_j$ is actually an increasing function of $j$
but in the following we use the notation $Z$ instead of $Z_j$ for notational convenience
when the context is clear.

However, for $N_o > 1$,
the overall running time of the DAG sampling step can be reduced as follows.
Let
$\theta^{ (\sigma_j, U_{\sigma_j})^{\prec_i}  }_{z}$
be $P( (\sigma_j,  Pa_{{\sigma_j}  z}) | \prec_i, D) = $
$P( (\sigma_j, Pa_{{\sigma_j} z})  | (\sigma_j, U_{\sigma_j})^{\prec_i}, D)$
$= \beta'_{\sigma_j}(Pa_{{\sigma_j} z}) / $ $[\alpha'_{\sigma_j}(U_{\sigma_j}^{\prec_i})/q_{\sigma_j}(U_{\sigma_j}^{\prec_i})]$,
for $z \in \{1, \ldots, Z\}$.
First, using the common strategy of sampling from a discrete distribution \citep{Koller:PGM},
for $(\sigma_j, U_{\sigma_j})^{\prec_i}$ we can create $S_I^{ (\sigma_j, U_{\sigma_j})^{\prec_i} }$,
a sequence of $Z$ probability intervals
with the form of
$<[0,
   \theta^{ (\sigma_j, U_{\sigma_j})^{\prec_i} }_{1}),
  [\theta^{ (\sigma_j, U_{\sigma_j})^{\prec_i} }_{1},
   \theta^{ (\sigma_j, U_{\sigma_j})^{\prec_i} }_{1} +
   \theta^{ (\sigma_j, U_{\sigma_j})^{\prec_i} }_{2}), $
$\ldots, $
 $[\sum_{z=1}^{Z-2} \theta^{ (\sigma_j, U_{\sigma_j})^{\prec_i} }_{z},
   \sum_{z=1}^{Z-1} \theta^{ (\sigma_j, U_{\sigma_j})^{\prec_i} }_{z}), $
 $[\sum_{z=1}^{Z-1} \theta^{ (\sigma_j, U_{\sigma_j})^{\prec_i} }_{z},
  1)>$,
where the $l$th interval is
$[\sum_{z=1}^{l-1} \theta^{ (\sigma_j, U_{\sigma_j})^{\prec_i} }_{z},
  \sum_{z=1}^{l}   \theta^{ (\sigma_j, U_{\sigma_j})^{\prec_i} }_{z})$.
Note that $S_I^{ (\sigma_j, U_{\sigma_j})^{\prec_i} }$ can be created in time $O(Z)$
and sampling one $Pa_{\sigma_j}$ for $\sigma_j$ from
a list $\{Pa_{\sigma_j z} \}_z^{\sigma_j}$ can then be achieved
using binary search in time $O(log Z)$ based on $S_I^{ (\sigma_j, U_{\sigma_j})^{\prec_i} }$.
Then the following observation is the key reason for reducing the running time of the DAG sampling step.
For two sampled orders $\prec_i$ and $\prec_{i'}$ ($1 \leq i, i' \leq N_o $),
even if $\prec_i$ $\neq  \prec_{i'}$,
it is possible that  $(\sigma_j, U_{\sigma_j})^{\prec_i}$ $ = (\sigma_j, U_{\sigma_j})^{ \prec_{i'} } $
for some $j \in \{1, \ldots, n\}$.
This is because for each $j$, $(\sigma_j, U_{\sigma_j})$ essentially has a multinomial distribution
with $N_o$ trials and a set of $n \binom{n-1}{j-1}$ cell probabilities $\{P((\sigma_j, U_{\sigma_j}) | D)\}$.
Actually, for any $j$, the following relation holds for each cell probability:
\begin{align}
\label{eq-relation_p-sigma_j-U_sigma_j}
P((\sigma_j, U_{\sigma_j}) | D) \propto  \alpha'_{\sigma_j}(U_{\sigma_j}) L'(U_{\sigma_j}) R'(V - U_{\sigma_j} - \{ \sigma_j \}),
\end{align}
where $R'(.)$ is the special case of $R(.)$ by setting $f \equiv 1$
and $R(.)$ is defined in Eq.~(\ref{eq-backward-contr}).
Its proof is very similar to the derivation shown
by \citet{Koivisto:06}
and
is provided in Appendix~\ref{sec-appendix-proofs}.
Note that $(\sigma_j, U_{\sigma_j})^{\prec_i}$ $ = (\sigma_j, U_{\sigma_j})^{ \prec_{i'} } $
implies  $S_I^{ (\sigma_j, U_{\sigma_j})^{\prec_i} }$ $ = S_I^{ (\sigma_j, U_{\sigma_j})^{ \prec_{i'} } }$.
Thus, by storing the created $S_I^{ (\sigma_j, U_{\sigma_j})^{\prec_i} }$ in the memory,
once $(\sigma_j, U_{\sigma_j})^{\prec_i}$ $ = (\sigma_j, U_{\sigma_j})^{ \prec_{i'} } $ for $i' > i$,
creating $S_I^{ (\sigma_j, U_{\sigma_j})^{ \prec_{i'} } }$ can be avoided and
sampling one $Pa_{\sigma_j}$ for $\sigma_j$ takes only $O(log Z)$ time.

On one hand, our strategy will definitely save the running time
for these $j$'s such that $n \binom{n-1}{j-1}$ (the number of all the possible values of $(\sigma_j, U_{\sigma_j})$) is smaller than $N_o$
if every created $S_I^{ (\sigma_j, U_{\sigma_j}) }$ is stored.
This is because the running time of sampling $Pa_{\sigma_j}$ of $\sigma_j$ is only $O(log Z)$
in at least $N_o - n \binom{n-1}{j-1}$ samples out of the overall $N_o$ samples.
(In the worst case,  $S_I^{ (\sigma_j, U_{\sigma_j}) }$ will be created for each possible $(\sigma_j, U_{\sigma_j})$.)
For example, when $j$ $ = n$,
the number of all the possible values of $(\sigma_j, U_{\sigma_j})$
is only $n$ and $Z_j$ (the length of the list $\{Pa_{\sigma_j z} \}_z^{\sigma_j}$) achieves its maximum among all the $j$'s
so that sampling one $Pa_{\sigma_n}$ for $\sigma_n$ takes $O(log Z_n)$ time in at least $N_o - n$ samples.
Accordingly, when $j$ $ = n$,
the worst-case running time of sampling the $N_o$ $(\sigma_n, Pa_{\sigma_n})$ families is
$O(n (Z_n + log Z_n) + (N_o - n) log Z_n)$ $ = O(n Z_n + N_o log Z_n)$.
On the other hand,
our strategy usually can also save the running time even for these $j$'s such that
the number of all the possible values of $(\sigma_j, U_{\sigma_j})$ is larger than $N_o$.
This is because the probability mass usually is  not uniformly distributed among the set of all the
possible values of $(\sigma_j, U_{\sigma_j})$.
Once the majority of probability mass $p_{_{\Sigma}}$ is concentrated on $r_j$ $(\sigma_j, U_{\sigma_j})$ values,
where $r_j$ is a number smaller than $N_o$,
the probability that
only these $r_j$ $(\sigma_j, U_{\sigma_j})$ values appear in all the $N_o$ order samples
is $(p_{_{\Sigma}})^{N_o}$.
Accordingly, with the probability $(p_{_{\Sigma}})^{N_o}$,
the running time of sampling $Pa_{\sigma_j}$ for $\sigma_j$ is $O(log Z_j)$
in at least $N_o - r_j$ samples.
As a result,
the expected running time of sampling the $N_o$ $(\sigma_j, Pa_{\sigma_j})$ families
is below $O( [r_j Z_j + N_o log Z_j](p_{_{\Sigma}})^{N_o} + N_o (Z_j + log Z_j) ( 1 - (p_{_{\Sigma}})^{N_o}) )$;
the expected running time of sampling the $N_o$ DAGs
is below $O( \sum_{j=1}^n \{  [r_j Z_j + N_o log Z_j](p_{_{\Sigma}})^{N_o} + N_o (Z_j + log Z_j) ( 1 - (p_{_{\Sigma}})^{N_o})  \})$.
Typically, when $m$ is not small,
local score $score_i(Pa_i:D)$ will not be uniform at all.
Correspondingly,
it is likely that
the multinomial probability mass function
$P( (\sigma_j, U_{\sigma_j}) | D)$ will
concentrate dominant probability mass on a small number of
$(\sigma_j, U_{\sigma_j})$ candidates that these $(\sigma_j, Pa_{\sigma_j})$'s having large local scores
are consistent with.
As a result,
our time-saving strategy will usually become more effective when $m$ is not small.

Note that we also include the policy of recycling the created
$S_I^{ (\sigma_j, U_{\sigma_j}) }$s for our strategy
because it is possible that all the memory in a computer will be exhausted in order to
store all the created $S_I^{ (\sigma_j, U_{\sigma_j}) }$s,
especially when $n$ is not small but $m$ is small.  
(The space complexity of storing all the $S_I^{ (\sigma_j, U_{\sigma_j}) }$s is
$O(\sum_{j=1}^n n \binom{n-1}{j-1} Z_j)$ $= O(n^{k+1} 2^{n-1})$.)
For this paper, we use a simple recycling method as follows.
Some upper limit for the total number of the probability intervals
(representing
$[\sum_{z=1}^{l-1} \theta^{ (\sigma_j, U_{\sigma_j})^{\prec_i} }_{z},
  \sum_{z=1}^{l}   \theta^{ (\sigma_j, U_{\sigma_j})^{\prec_i} }_{z})$ )
is pre-specified based on the memory of the used computer.
Each time such an upper limit is reached during the DAG sampling step of the DDS,
which indicates a large amount of memory has been used to store $S_I^{ (\sigma_j, U_{\sigma_j}) }$s,
we recycle  the currently stored $S_I^{ (\sigma_j, U_{\sigma_j}) }$s according to
their usage frequencies which serve as the estimates of $P( (\sigma_j, U_{\sigma_j}) | D) $s.
The memory for each infrequently used $S_I^{ (\sigma_j, U_{\sigma_j}) }$ will be reclaimed to ensure that
at least a pre-specified number of probability intervals will be recycled 
from the memory.
In addition, in order to have a better use of each created $S_I^{ (\sigma_j, U_{\sigma_j}) }$
before it possibly gets reclaimed,
we sort the $N_o$ sampled orders according to the posterior $p(\prec|D)$ just before executing the DAG sampling step of the DDS.
The underlying rationale is that
if $p(\prec_i|D)$ is relatively close to $p(\prec_{i'}|D)$,
which indicates $p(\prec_i, D)$ is relatively close to $p(\prec_{i'}, D)$ (since $p_{\prec}(D)$ is a constant),
due to Eq.~(\ref{eq-p_prec_D}),
it is likely that
$\prec_i$ and $\prec_{i'}$ share some $(\sigma_j, U_{\sigma_j})$ component(s).
(The extreme situation is that if $p(\prec_i|D)$ equals $p(\prec_{i'}|D)$,
it is very likely that $\prec_i$ equals $\prec_{i'}$
so that $\prec_i$ and $\prec_{i'}$ share every $(\sigma_j, U_{\sigma_j})$.)
Thus, $\prec_i$ and $\prec_{i'}$ having similar posteriors tend to be close to each other after the sorting
so that it is likely that the common $S_I^{ (\sigma_j, U_{\sigma_j}) }$ will be used before the reclamation.
Furthermore, as $N_o$ increases,
the probability that two orders 
(out of the $N_o$ sampled orders)
share some $(\sigma_j, U_{\sigma_j})$ component(s) increases.
Accordingly, after the sorting,
the probability of reusing $S_I^{ (\sigma_j, U_{\sigma_j}) }$ before its reclamation
will also increase.
As a result, the benefit of our time-saving strategy will typically increase when $N_o$ increases.


The experimental results show that
our time-saving strategy for the DAG sampling step of the DDS is very effective. 
Please see the discussion in Section \ref{sec-exp} about
$\hat{\mu}(T_{DAG})$ and $\hat{\sigma}(T_{DAG})$,
the sample mean and the sample standard deviation of the running time of the DAG sampling step of the DDS,
which are reported in Table \ref{tb:bias_comparison_time_1}
and Table \ref{tb:unbiased_comparison_time_1}.

\end{remark}

\subsection{IW-DDS Algorithm \label{sec-bias}}

In this subsection
we present our DAG sampling algorithm
under the general structure-modular prior (Eq.~(\ref{eq-smodularity-1}))
by effectively correcting the bias due to the use of the order-modular prior.

As mentioned in Section \ref{sec-intro},
$p_{\prec}(f|D)$ has the bias due to the assumption of the order-modular prior.
This is essentially because $p_{\prec}(G|D)$ based on the order-modular prior (Eq.~(\ref{eq-smodularity}))
is different from $p_{\nprec}(G|D)$ based on the structure-modular prior (Eq.~(\ref{eq-smodularity-1})).

In fact, with the common setting that $q_i(U_i)$ always equals $1$ $(q_i(U_i) \equiv 1)$, 
if $\rho_i(Pa_i)$ in Eq.~(\ref{eq-smodularity}) is set to be always equal to $p_i(Pa_i)$ in Eq.~(\ref{eq-smodularity-1})
$(\rho_i(Pa_i)$ $\equiv  p_i(Pa_i))$,  
the following relation holds: 
\begin{align}
\label{eq-relation_p_prec_p_nprec}
p_{\prec}(G|D) = \frac{p_{\nprec}(D)}{p_{\prec}(D)} \cdot |\prec_{G}| \cdot p_{\nprec}(G | D),
\end{align}
where $|\prec_{G}|$ is the number of 
orders that $G$ is consistent with.
(The proof of Eq.~(\ref{eq-relation_p_prec_p_nprec}) is given in Appendix~\ref{sec-appendix-proofs}.)
Accordingly,
\begin{align*}
p_{\nprec}(f|D)
= \sum_G f(G) p_{\nprec}(G|D)
= \sum_G f(G) \frac{p_{\prec}(D)}{p_{\nprec}(D)} \cdot \frac{1}{|\prec_{G}|} p_{\prec}(G|D).
\end{align*}
Note that
$p_{\prec}(D)$ can be computed by the DP algorithm
of \citet{Koivisto:soo04}
in $O(n^{k+1}C(m) + kn2^n)$ time,
and $p_{\nprec}(D)$ can be computed by the DP algorithm
of \citet{tian:he2009}
in $O(n^{k+1}C(m) + kn2^n + 3^n)$ time.
Thus,
if $|\prec_{G_i}|$ is known for each sampled $G_i$ ($i \in \{1, 2, \ldots, N_o \}$),
we can use importance sampling to obtain a good estimator
\begin{align}
\label{eq-importance_samp}
\tilde{p}_{\nprec}(f|D) = \frac{1}{N_o}  \sum_{i=1}^{N_o} f(G_i) \frac{p_{\prec}(D)}{p_{\nprec}(D)} \cdot \frac{1}{|\prec_{G_i}|},
\end{align}
where each $G_i$ 
is sampled from our DDS algorithm.
Unfortunately, $|\prec_{G_i}|$ is $\#$P hard to compute for each $G_i$
\citep{Brightwell:TOC1991};
and the state-of-the-art DP algorithm proposed
by \citet{Niinimaki:IJCAI2013}
for computing $|\prec_{G_i}|$
takes $O(n 2^n)$ time.
Therefore, 
in the following
we propose an estimator that can be much more efficiently computed than the estimator shown in Eq.~(\ref{eq-importance_samp}).



Because $p_{\prec}(f|D)$ has the bias with respect to
$p_{\nprec}(f|D)$,
a good estimator $\hat{p}_{\prec}(f|D)$ for $p_{\prec}(f|D)$ typically
is not appropriate to be directly used to estimate $p_{\nprec}(f|D)$.
Noticing this problem,
Ellis and Wong
\citeyearpar{ellis:won08}
propose to correct this bias for the Order MCMC method as follows:
first run the Order MCMC to draw order samples;
then for each unique order $\prec$ out of the sampled orders,
keep drawing DAGs consistent with $\prec$ (but only keep unique DAGs)
until the sum of joint probabilities of these unique DAGs,
$\sum_{i} p(\prec, G_i, D)$,
is no less than a pre-specified large proportion (such as $95\%$) of $p(\prec, D)$
$= \sum_{G \subseteq \prec} p(\prec, G, D)$;
finally the resulting union of all the DAG samples 
is treated
as an importance-weighted sample 
for the structural discovery.



Inspired by the idea of \citet{ellis:won08},
we develop our own bias-correction strategy
which is computationally more efficient
and can theoretically ensure the resulting estimator to have desirable properties.
(Please refer to Remark~\ref{rm:problem_in_Ellis} for detailed discussion.)
Our bias-corrected algorithm,
termed IW-DDS (Importance-weighted DDS),
is as follows:

\begin{itemize}
\item Step 1 (DDS Step): Run the DDS algorithm to draw $N_o$ DAG samples
                         with the setting that $q_i(U_i) \equiv 1$ and $\rho_i(Pa_i)$ $\equiv  p_i(Pa_i)$.
\item Step 2 (Bias Correction Step): Make the union set $\mathcal{G}$ of all the sampled DAGs 
    by eliminating the duplicate DAGs.

\end{itemize}

Given $\mathcal{G}$, $\hat{p}_{\nprec}(f|D)$, the estimator for the exact posterior of any feature $f$, can then be constructed as
\begin{align}
\label{eq-esti-f-Given-D}
\hat{p}_{\nprec}(f|D) = \sum_{G\in \mathcal{G}} f(G) \hat{p}_{\nprec}(G|D),
\end{align}
where
\begin{align}\label{eq-pgd}
\hat{p}_{\nprec}(G|D)
= \frac{p_{\nprec}(G,D)}{\sum_{G\in \mathcal{G}} p_{\nprec}(G,D)},
\end{align}
and $p_{\nprec}(G,D)$ is given in Eq.~(\ref{eq-p_G_D_struc}).


Since checking the equality of two DAGs takes $O(n)$ time by using their vector representations,
with the usage of a hash table,
both the expected time cost and the space cost of the bias correction step are $O(n N_o)$.
Therefore,
the expected time cost of our IW-DDS algorithm is 
$O(n^{k+1}C(m) + kn2^n + n^{2} N_o + n^{k+1} N_o)$,
and the required memory space of our IW-DDS algorithm is $O(n 2^n + n N_o)$.



Note that when each $G_i$ gets sampled,
the corresponding joint probability $p_{\nprec}(G_i, D)$ can be easily computed and stored with $G_i$.
Therefore, just as constructing the estimator from the DDS,
constructing the estimator $\hat{p}_{\nprec}(f|D)$ from the IW-DDS
also takes $O(C_{n, f} N_o)$ time,
where $C_{n, f}$ denotes the time cost of determining the structural feature $f$ in a DAG of $n$ nodes.


While
\citet{ellis:won08}
show the effectiveness of their method in correcting the bias merely by the experiments,
we first theoretically prove that our estimator has desirable properties as follows.

\begin{theorem}\label{thm-IW-DDS}
For any structural feature $f$, with respect to the exact posterior $p_{\nprec}(f|D)$,
the estimator $\hat{p}_{\nprec}(f|D)$ based on the DAG samples from the IW-DDS algorithm using Eq.~(\ref{eq-esti-f-Given-D})
has the following properties:


(i) $\hat{p}_{\nprec}(f|D)$ is an asymptotically unbiased estimator for $p_{\nprec}(f|D)$.

(ii) $\hat{p}_{\nprec}(f|D)$ converges almost surely to $p_{\nprec}(f|D)$.

(iii) The convergence rate of $\hat{p}_{\nprec}(f|D)$ is $o(a^{N_o})$ for any $0 < a < 1$.

(iv) Define the quantity
$\Delta  = \sum_{G\in \mathcal{G}} p_{\nprec}(G|D)$,
then
\begin{align}\label{eq-p_nprec_f_given_D_interval}
 \Delta \cdot \hat{p}_{\nprec}(f|D) \leq p_{\nprec}(f|D) \leq \Delta \cdot \hat{p}_{\nprec}(f|D) + 1 - \Delta.
\end{align}

\end{theorem}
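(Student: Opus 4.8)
The plan is to prove part~(iv) first and then read off the other three parts from it. Write $\mathcal{U}^{+}=\{G:p_{\nprec}(G,D)>0\}$, a finite set with $\sum_{G\in\mathcal{U}^{+}}p_{\nprec}(G|D)=1$. Under the IW-DDS setting $\rho_{i}\equiv p_{i}$, the DDS step draws each parent set with probability proportional to $\rho_{i}(Pa_{i})\,score_{i}(Pa_{i}:D)=p_{i}(Pa_{i})\,score_{i}(Pa_{i}:D)$, so every DAG it can output has $p_{\nprec}(G,D)=\prod_{i}score_{i}(Pa_{i}:D)\,p_{i}(Pa_{i})>0$ by Eq.~(\ref{eq-p_G_D_struc}); hence $\mathcal{G}\subseteq\mathcal{U}^{+}$, the denominator in Eq.~(\ref{eq-pgd}) is positive, and $\hat{p}_{\nprec}(f|D)$ is well defined.

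For part~(iv): from Eq.~(\ref{eq-pgd}) and the stated definition $\Delta=\sum_{G\in\mathcal{G}}p_{\nprec}(G|D)=\left(\sum_{G\in\mathcal{G}}p_{\nprec}(G,D)\right)/p_{\nprec}(D)$, the normalizing sum cancels to give the identity $\Delta\,\hat{p}_{\nprec}(f|D)=\sum_{G\in\mathcal{G}}f(G)\,p_{\nprec}(G|D)$. Subtracting it from $p_{\nprec}(f|D)=\sum_{G}f(G)\,p_{\nprec}(G|D)$ leaves $p_{\nprec}(f|D)-\Delta\,\hat{p}_{\nprec}(f|D)=\sum_{G\notin\mathcal{G}}f(G)\,p_{\nprec}(G|D)$; since $0\le f(G)\le 1$ and $\sum_{G\notin\mathcal{G}}p_{\nprec}(G|D)=1-\Delta$, this quantity lies in $[0,\,1-\Delta]$, which is exactly Eq.~(\ref{eq-p_nprec_f_given_D_interval}). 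Combining the two inequalities in~(iv) with $0\le\hat{p}_{\nprec}(f|D)\le 1$ and $\Delta\le 1$ then gives, deterministically, $|\hat{p}_{\nprec}(f|D)-p_{\nprec}(f|D)|\le 1-\Delta=\sum_{G\in\mathcal{U}^{+}\setminus\mathcal{G}}p_{\nprec}(G|D)$, which vanishes exactly when $\mathcal{G}=\mathcal{U}^{+}$.

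Next I would control $1-\Delta$ probabilistically. By Theorem~\ref{thm-DDS} the $N_{o}$ sampled DAGs are i.i.d.\ with law $p_{\prec}(\cdot|D)$, so for any fixed $G\in\mathcal{U}^{+}$, $\Pr(G\notin\mathcal{G})=(1-p_{\prec}(G|D))^{N_{o}}$; with $p_{\min}:=\min_{G\in\mathcal{U}^{+}}p_{\prec}(G|D)\in(0,1)$ (the degenerate case $|\mathcal{U}^{+}|=1$ being trivial) this yields $E[\,1-\Delta\,]=\sum_{G\in\mathcal{U}^{+}}p_{\nprec}(G|D)(1-p_{\prec}(G|D))^{N_{o}}\le(1-p_{\min})^{N_{o}}$ and $\Pr(\mathcal{G}\neq\mathcal{U}^{+})\le\sum_{G\in\mathcal{U}^{+}}(1-p_{\prec}(G|D))^{N_{o}}\le|\mathcal{U}^{+}|\,(1-p_{\min})^{N_{o}}$. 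Part~(i) then follows from $|E[\hat{p}_{\nprec}(f|D)]-p_{\nprec}(f|D)|\le E|\hat{p}_{\nprec}(f|D)-p_{\nprec}(f|D)|\le E[\,1-\Delta\,]\to 0$. For parts~(ii) and~(iii): since $\sum_{N_{o}=1}^{\infty}\Pr(\mathcal{G}\neq\mathcal{U}^{+})<\infty$, Borel--Cantelli shows that almost surely $\mathcal{G}=\mathcal{U}^{+}$ for all but finitely many $N_{o}$, hence almost surely $\hat{p}_{\nprec}(f|D)=p_{\nprec}(f|D)$ for every $N_{o}$ beyond some (random) threshold; this establishes almost-sure convergence, and since the error is then eventually $0$ it is $o(a^{N_{o}})$ almost surely for every $0<a<1$.

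The only genuinely non-mechanical step is the first paragraph: recognizing that the DDS sampler's support is exactly $\mathcal{U}^{+}$, so that ``$\mathcal{G}$ has collected every reachable DAG'' ($\Delta=1$) coincides with ``the self-normalized estimator $\hat{p}_{\nprec}(\cdot|D)$ carries the correct normalizing constant $p_{\nprec}(D)$'', and hence equals $p_{\nprec}(f|D)$ exactly there. Once $1-\Delta$ is pinned down as the right error functional via part~(iv), everything downstream is Theorem~\ref{thm-DDS} together with Borel--Cantelli and the elementary bounds above.
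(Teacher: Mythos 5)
Your proof is correct, but it reaches parts (i)--(iii) by a genuinely different and more economical route than the paper. The paper proves (iv) by the direct algebraic argument you give (it cites the proof of Proposition 1 of Tian and He, 2010), but it does \emph{not} exploit (iv) elsewhere: for (i) it Taylor-expands $g(\gamma,\tau)=\gamma/\tau$ around $(E(\gamma),E(\tau))$ with a Lagrange remainder and then shows $E|\gamma-E(\gamma)|\to 0$ and $E|\tau-E(\tau)|\to 0$ term by term; for (ii)--(iii) it explicitly constructs the infinite product space $(\Omega^{\infty},\mathcal{F}^{\infty},\mu^{\infty})$ via Kolmogorov's consistency theorem and shows by continuity of measure that the set of sample paths on which some $G\in\mathcal{U}^{+}$ is never drawn has measure zero, so that $\hat{p}_{\nprec}(f|D)$ equals $p_{\nprec}(f|D)$ exactly beyond a random index $N(\omega^{\infty})$. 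You instead first extract the deterministic error bound $|\hat{p}_{\nprec}(f|D)-p_{\nprec}(f|D)|\le 1-\Delta$ from (iv) (this is exactly the paper's Eq.~(\ref{eq-p_nprec_f_given_D_interval_equiv}) plus $0\le\hat{p}_{\nprec}(f|D)\le 1$), deduce (i) from $E[1-\Delta]\le(1-p_{\min})^{N_o}\to 0$, and get (ii)--(iii) from Borel--Cantelli applied to the summable bound $\Pr(\mathcal{G}\neq\mathcal{U}^{+})\le|\mathcal{U}^{+}|(1-p_{\min})^{N_o}$, ending with the same ``exact for all large $N_o$'' conclusion the paper uses for its rate claim. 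Your route buys a much shorter proof of (i) that avoids the delta-method bookkeeping entirely and makes transparent that $1-\Delta$ is the single error functional controlling everything; the paper's route buys self-containedness (the measure-theoretic scaffolding for the almost-sure statement is written out, whereas you implicitly rely on the standard coupling in which all $\mathcal{G}_{N_o}$ live on one infinite sampling sequence --- worth one sentence if you write this up) and it never needs the uniform lower bound $p_{\min}>0$, only the pointwise fact $p_{\prec}(G|D)>0$ for $G\in\mathcal{U}^{+}$; both proofs get that positivity from Eq.~(\ref{eq-relation_p_prec_p_nprec}), which you should cite explicitly rather than asserting that the DDS support is exactly $\mathcal{U}^{+}$.
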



Note that the introduced quantity
$\Delta $ $  = \sum_{G \in \mathcal{G}} p_{\nprec}(G, D) / p_{\nprec}(D) $
and
$\Delta $ $\in [0, 1]$ essentially represents the cumulative posterior probability mass of the
DAGs in $\mathcal{G}$.
Eq.~(\ref{eq-p_nprec_f_given_D_interval}) provides a sound interval
$[\Delta \cdot \hat{p}_{\nprec}(f|D),  \Delta \cdot \hat{p}_{\nprec}(f|D) + 1 - \Delta ]$
in which $p_{\nprec}(f|D)$ \emph{must} reside.
(The ``sound interval'' is stronger than the concept of ``confidence interval''
because there is no probability that $p_{\nprec}(f|D)$ is outside the sound interval.)
The width of the sound interval is $(1 - \Delta)$,
where $\Delta $ is a nondecreasing function of $N_o$
(because if we increase the original $N_o$ to a larger $N_o'$ and sample additional $N_o' - N_o$ DAGs in the DDS step,
the resulting $\mathcal{G'}$ is always the superset of the original $\mathcal{G}$).
Thus, in the situations
where $m$ (the number of data instances) is not very small,
it is possible for $\Delta$ to approach $1$ by a tractable number $N_o$ of DAG samples
so that a desired small-width interval for $p_{\nprec}(f|D)$ can be obtained. (Please refer to Section~\ref{sec-exp}
for the corresponding experimental results.)
Also note that Eq.~(\ref{eq-p_nprec_f_given_D_interval}) can be expressed in the following equivalent form:
\begin{align}\label{eq-p_nprec_f_given_D_interval_equiv}
   -(1 - \Delta) \hat{p}_{\nprec}(f|D)    \leq   p_{\nprec}(f|D) - \hat{p}_{\nprec}(f|D)  \leq  (1 - \Delta) (1 - \hat{p}_{\nprec}(f|D)),
\end{align}
which gives the bound for the estimation error $p_{\nprec}(f|D) - \hat{p}_{\nprec}(f|D)$.


\begin{remark}
\label{rm:problem_in_Ellis}



A comparison between our bias-correction strategy and the one of \citet{ellis:won08}.

Our bias-correction strategy used in the IW-DDS
solves the computation problem existing in the idea
of \citet{ellis:won08}
and ensures the desirable properties of our estimator $\hat{p}_{\nprec}(f|D)$ stated in Theorem \ref{thm-IW-DDS}.

Since in the Order MCMC,
sampling an order is much more computationally expensive than sampling a DAG given an order,
the strategy
of \citet{ellis:won08}
emphasizes making the full use of each sampled order $\prec$, that is,
keeping drawing DAGs consistent with each sampled $\prec$
until the sum of joint probabilities for the unique sampled DAGs, $\sum_{i} p(\prec, G_i, D)$,
is no less than a large proportion (such as $95\%$) of $p(\prec, D)$.
Unfortunately, such a strategy
has a 
computational problem
when the number of variables $n$ is not small and the number of data instances $m$ is small.
Because there are super-exponential number ($2^{O(k n log (n))}$) of DAGs (with the maximum in-degree $k$) consistent with each order
\citep{friedman:kol03},
it is possible that a non-negligible portion of probability mass $p(\prec, D)$
will be distributed almost uniformly to a majority of these consistent DAGs when $m$ is small.
Consequently, $N_{G}^{\prec}$, the required number of DAGs sampled per each sampled order $\prec$,
will be extremely large, leading to 
large computation costs.
For sampling $N_{G}^{\prec}$ DAGs consistent with each sampled order $\prec$,
its expected time cost is $O(n^{k+1} + n k log(n) N_{G}^{\prec})$
(even if a time-saving strategy like the one described in Remark \ref{rm:time_DAG_sampling} is used)
and its memory requirement is $O(n N_{G}^{\prec})$.
If the memory requirement exceeds the memory of the running computer,
the hard disk has to be used to temporarily store the sampled DAGs in some way.
(We notice that  Ellis and Wong
\citeyearpar{ellis:won08}
limit their experiments to the data sets with at most 14 variables.)
If we take
the data set ``Child''
\citep{Tsamardinos:ML06}
with $n = 20$ and $m = 100$ for example,
for an order $\prec$ randomly sampled by our order sampling algorithm,
our experiment shows that
$1 \times 10^7$ DAGs (which contain $932,137$ unique DAGs) need to be sampled
to let the ratio $\sum_{i} p(\prec, G_i, D)$ $ / p(\prec, D)$ reach $94.071\%$;
$1.5 \times 10^7$ DAGs (which contain $1,204,262$ unique DAGs) need to be sampled
to let the ratio $\sum_{i} p(\prec, G_i, D)$ $ / p(\prec, D)$ reach $94.952\%$.
To address this problem,
based on the efficiency of our order sampling algorithm,
our strategy samples only one DAG from each sampled order in the DDS step
so that
the large computation costs per each sampled order are avoided
for any data set.
Meanwhile, unlike the strategy
of \citet{ellis:won08},
our strategy does not delete the duplicate order samples.
Therefore, if an order $\prec$ gets sampled $j$ $(\geq 1)$ times in the order sampling step,
essentially $j$ DAGs will be sampled for such a unique order in the DAG sampling step.
Thus, $j$, the number of occurrences, implicitly serves as
an importance indicator
for $\prec$ among the orders. 

Furthermore,
the strategy
of \citet{ellis:won08}
can not guarantee that the sampled DAGs
are independent,
even if large computation costs are spent in sampling a huge number of DAGs per each sampled order.
This is essentially because
multiple DAGs sampled from a fixed order according to the strategy of
\citet{ellis:won08}
are not independent.
For example, given that a DAG $G$ with an edge $X \rightarrow Y$ gets sampled from an order $\prec$ ,
which implies that node $X$ precedes node $Y$ in the given order $\prec$,
then the conditional probability that any DAG $G'$ with a reverse edge $Y \rightarrow X$
gets sampled under the fixed order $\prec$ becomes zero,
so that $G$ and $G'$ are not independent.
In general, once the number of sampled orders is fixed,
even if 
the number of sampled DAGs per each sampled order keeps increasing,
every DAG that is consistent with none of the sampled orders
will still have no chance of getting sampled.
In contrast,
the sampling strategy in our IW-DDS is able to guarantee
the property that all the DAGs  sampled from the DDS step
are independent, which has been stated in Theorem~\ref{thm-DDS}.
Such a property actually
is a key to ensuring the good properties of our estimator $\hat{p}_{\nprec}(f|D)$
stated in Theorem \ref{thm-IW-DDS}.

\end{remark}

The competing state-of-the-art algorithms that are also applicable to BNs  
of moderate size are
the Hybrid MCMC method
\citep{eaton:mur07b}
and the
$K$-best algorithm
\citep{tian:he2010}.
The first competing method, the Hybrid MCMC,
includes the DP algorithm
of \citet{Koivisto:06}
(with time complexity $O(n^{k+1}C(m) + $ $kn2^n)$ and space complexity $O(n 2^n)$)
as its first phase
and then uses the computed posteriors
of all the edges
to make the global proposal for its second phase (MCMC phase).
When its MCMC phase eventually converges,
the Hybrid MCMC will correct the bias coming from the order-prior assumption
and provide DAG samples according to the DAG posterior so that
the estimator $\hat{p}_{\nprec}(f|D)$ can be constituted using Eq.~(\ref{eq-2-HR}) for any feature $f$.
The Hybrid MCMC has been empirically shown to
converge faster than both the Structure MCMC and the Order MCMC
so that more accurate structure learning performance can be obtained \citep{eaton:mur07b}.
Note that since
the REV-MCMC method \citep{Marco:ML08}
is shown to be only nearly
as efficient as the Order MCMC in the mixing and convergence,
the Hybrid MCMC is expected to converge faster than the REV-MCMC method
as long as $n$ is moderate so that the Hybrid MCMC is applicable.
(But the REV-MCMC method has its own value in learning large BNs  
since all these methods using some DP technique
(including the Hybrid MCMC, the $K$-best algorithm and our IW-DDS method)
are infeasible for a large $n$ due to the space cost.)
One limitation of the Hybrid MCMC is that it can not obtain
the interval for $p_{\nprec}(f|D)$ as
specified by Theorem~\ref{thm-IW-DDS} (iv).
Additionally,
the convergence rate
of the estimator from the Hybrid MCMC
is not theoretically provided by its authors.

The second competing method,
the $K$-best algorithm,
applies DP technique to
obtain a collection $\mathcal{G}$ of DAGs with the $K$ best scores
and then uses these DAGs to constitute the estimator $\hat{p}_{\nprec}(f|D)$
by Eq.~(\ref{eq-esti-f-Given-D}) and (\ref{eq-pgd}).
One advantage of the $K$-best algorithm is that
its estimator also has the property specified as Theorem \ref{thm-IW-DDS} (iv)
so that it can
provide the sound interval for $p_{\nprec}(f|D)$
just as our IW-DDS.
However,
the $K$-best algorithm has
time complexity
$O(n^{k+1}C(m) + $ $T'(K) n 2^{n-1})$ and space complexity $O(K n 2 ^n)$,
where $T'(K)$ is the time spent on the best-first search for $K$ solutions and
$T'(K)$ has been shown to be $O(K log K)$ by \citet{Flerova:GSKRR2012}.
Thus,
the increase in $K$ will dramatically increase the computation costs of the $K$-best algorithm
when $n$ is not small.
As a result, to obtain
an interval width
similar to the one from our IW-DDS,
much more time and space costs are required for the $K$-best.
In our experiments using an ordinary desktop PC,
the computation problem becomes severe for $n \geq 19$
since $K$ can only take some small values (such as no more than 40) before the $K$-best algorithm exhausts the memory.
Accordingly,
$\Delta$ obtained from the $K$-best is usually smaller than the one from our IW-DDS
(so that the interval from the $K$-best is usually wider)
even if
$K$ is set to reach
the memory limit of a computer.
(Please refer to Section~\ref{sec-expr-IWDDS} for detailed discussion.)

\section{Experimental Results \label{sec-exp}}
We have implemented our algorithms in a C++ language tool called ``BNLearner''
and
run several experiments to demonstrate its capabilities.
(BNLearner  is available at \url{http://www.cs.iastate.edu/~jtian/Software/BNLearner/BNLearner.htm}.)
Our tested data sets include ten real data sets
from the UCI Machine
Learning Repository
\citep{asuncion:new07}:
``Tic-Tac-Toe,'' (which is also simply called ``T-T-T,'')
``Glass,''
``Wine,''
``Housing,''
``Credit,''
``Zoo,''
``Letter,''
``Tumor,''
``Vehicle,''
and
``German''.
Our tested data sets also include
three synthetic data sets:
the first one is
a synthetic data set ``Syn15'' generated from a gold-standard 15-node Bayesian network built by us;  
the second one is a synthetic data set ``Insur19''
generated from a 19-node subnetwork of ``Insurance'' Bayesian network
\citep{Binder:ML1997};
and the third one is a synthetic data set ``Child'' from a 20-node ``Child'' Bayesian network used
by \citet{Tsamardinos:ML06}.
All the data sets contain only discrete variables (or are discretized)
and have no missing values (or have its missing values filled in).
For the four data sets (``Syn15,'' ``Letter,'' ``Insur19'' and ``Child''),
since a large number of data instances are available,
we also vary $m$ (the number of instances)  to see the corresponding learning performance.
(All the data cases are also included in the tool of BNLearner.)
All the experiments in this section
were run under Linux on one ordinary desktop PC with a 3.0GHz Intel Pentium processor and 2.0 GB memory
if no extra specification is provided.
In addition, the maximum in-degree $k$ is assumed to be $5$ for all the experiments.

\subsection{Experimental Results for the DDS}
\label{sec-expr-DDS}
In this subsection,
we compare our DDS algorithm with the Partial Order MCMC method
\citep{niinimaki:etal11},
the state-of-the-art learning method
under the order-modular prior.

The Partial Order MCMC (PO-MCMC) method is implemented in
BEANDisco, 
a C++ language tool provided by
\citet{niinimaki:etal11}.
(BEANDisco is available at
\url{http://www.cs.helsinki.fi/u/tzniinim/BEANDisco/}.)
The current version of BEANDisco
can only
estimate the posterior of an edge feature,
but as
\citet{niinimaki:etal11} have stated,
the PO-MCMC readily enables estimating the posterior of any structural feature
by further
sampling DAGs consistent with an order.

\begin{table*}[t]
\begin{center}
\hfill{}

\scalebox{0.72}{
\begin{tabular}{ccr||rr|rr|rr}
\hline
Name    & $n$   & $m$       & \multicolumn{2}{c} {PO-MCMC}  &\multicolumn{2}{|c|} {DOS}
& \multicolumn{2}{c} {DDS}  \\
        &       &       & $\hat{\mu}$(SAD) & $\hat{\sigma}$(SAD) & $\hat{\mu}$(SAD)  & $\hat{\sigma}$(SAD)  & $\hat{\mu}$(SAD) & $\hat{\sigma}$(SAD) \\
\hline

T-T-T (Tic-Tac-Toe)	& 10   & 958	& 0.5174	   & 0.1280		    & 0.1350	     & 0.0257		    & 0.1547	& 0.0378 \\

Glass	  & 11	   & 214	& 0.0696	   & 0.0249		    & 0.0230	     & 0.0067		    & 0.0529	& 0.0076 \\


Wine	  & 13	   & 178	& 0.1616	   & 0.0403		     & 0.0505	    & 0.0138		     & 0.0839	& 0.0137	\\

Housing	  & 14	   & 506	& 0.3205	   & 0.1303		     & 0.0691	    & 0.0146		     & 0.1150	& 0.0117	\\

Credit	  & 16	   & 690	& 0.4549	    & 0.2495		& 0.0581	      & 0.0221		    & 0.1071	& 0.0165 \\

Zoo	      & 17	   & 101	& 0.6079	   & 0.1809		     & 0.1020	    & 0.0130		     & 0.2756	& 0.0137	\\


Tumor	  & 18	   & 339	& 0.6059	    & 0.1849		& 0.0877	      & 0.0226		    & 0.2050	& 0.0180 \\

Vehicle	  & 19	   & 846	& 6.9774	    & 8.7960		& 0.0200	      & 0.0042		    & 0.0547	& 0.0096 \\

German	   & 21	   & 1,000	& 2.8802	   & 2.0191		     & 0.0994	     & 0.0295		     & 0.1298	& 0.0338	\\

\hline

Syn15   & 15    & 100       & 0.9024	   & 0.2258        & 0.1246        & 0.0142            & 0.2622	    & 0.0190 \\
        &       & 200       & 0.6449	   & 0.1569		   & 0.0975	       & 0.0163	           & 0.2228	    & 0.0172 \\
        &       & 500       & 0.3424	   & 0.1214		   & 0.0651	       & 0.0153		       & 0.1116	    & 0.0126 \\
        &       & 1,000     & 0.1558	   & 0.0496		   & 0.0515	       & 0.0128		       & 0.0724	    & 0.0118 \\
        &       & 2,000     & 0.0465	   & 0.0209		   & 0.0359	       & 0.0075		       & 0.0473	    & 0.0071 \\
        &       & 5,000     & 0.0217	   & 0.0144		   & 0.0196	       & 0.0064		       & 0.0247	    & 0.0086 \\

\hline

Letter  & 17    & 100       & 0.9530	   & 0.1285         & 0.1559	   & 0.0210	            & 0.2948    & 0.0229 \\
        &       & 200       & 0.3854	   & 0.0825		    & 0.0827	   & 0.0153		        & 0.1758    & 0.0142 \\
        &       & 500       & 0.4369	   & 0.1529		    & 0.0755	   & 0.0157		        & 0.1326	& 0.0107 \\
        &       & 1,000     & 0.3007	   & 0.1254		    & 0.0537	   & 0.0140	            & 0.0828	& 0.0171 \\
        &       & 2,000     & 1.3740	   & 0.9177		    & 0.0895	   & 0.0238		        & 0.1386	& 0.0288 \\
        &       & 5,000     & 0.0669	   & 0.0139		    & 0.0218	   & 0.0059		        & 0.0292	& 0.0088 \\

\hline

Insur19   & 19    & 100     & 0.6150	   & 0.1995		    & 0.0947	   & 0.0179		        & 0.1575	& 0.0213   \\
          &       & 200     & 0.4428	   & 0.1319		    & 0.0663	   & 0.0111		        & 0.1024	& 0.0162   \\
          &       & 500     & 0.2757	   & 0.1127		    & 0.0436	   & 0.0140		        & 0.0594	& 0.0099   \\
          &       & 1,000    & 0.4539	   & 0.3031		    & 0.0301	   & 0.0088		        & 0.0422	& 0.0134   \\
          &       & 2,000    & 0.0100	   & 0.0073		    & 0.0073	    & 0.0042		    & 0.0079	& 0.0029   \\
          &       & 5,000    & 0.0110	   & 0.0100		    & 0.0094	    & 0.0041		    & 0.0116	& 0.0043   \\

\hline

Child   & 20    & 100       & 0.4997	   & 0.1153		    & 0.0745	   & 0.0147		        & 0.1772	& 0.0146 \\
        &       & 200       & 0.1896	   & 0.0528		    & 0.0425	   & 0.0081		        & 0.0982	& 0.0101 \\
        &       & 500       & 0.2385	   & 0.0702		    & 0.0434	   & 0.0068		        & 0.0816	& 0.0123 \\
        &       & 1,000     & 0.1079	   & 0.0525		    & 0.0307	   & 0.0093		        & 0.0406	& 0.0080 \\
        &       & 2,000     & 0.0864	   & 0.0521		    & 0.0231	   & 0.0090		        & 0.0275	& 0.0083 \\
        &       & 5,000     & 0.0938	   & 0.0539		    & 0.0235	   & 0.0078		        & 0.0246	& 0.0066 \\

\hline

\end{tabular}
}

\hfill{}
\end{center}
\caption{Comparison of the PO-MCMC, the DOS \& the DDS in Terms of SAD}
\label{tb:bias_comparison_sad_1}
\end{table*}

Since $n$ (the number of the variables) in each investigated data case is moderate,
we are able to use
REBEL, 
a C++ language implementation of the DP algorithm
of \citet{Koivisto:06},
to get the exact posterior of every edge under
the assumption of
the order-modular prior.
(REBEL is available at
\url{http://www.cs.helsinki.fi/u/mkhkoivi/REBEL/}.)
Therefore we can use the criterion of the sum of the absolute differences (SAD)
\citep{eaton:mur07b}
to
measure the feature learning performance for each data case:
\begin{equation*}
\verb"SAD" = \sum_{f} | p( f | D) - \hat{p}(f | D)|,
\end{equation*}
where $p( f | D)$ is the exact posterior of the investigated feature $f$,
and   $\hat{p}(f | D)$ is the corresponding estimator.
In Section \ref{sec-expr-DDS},
SAD is essentially $\sum_{ij} | p_{\prec}( i \rightarrow j | D) - \hat{p}_{\prec}(i \rightarrow j | D)|$,
since the investigated feature is the edge feature $i \rightarrow j$ under the order-modular prior.
A smaller SAD will indicate a better performance in structure discovery.
Note that the criterion SAD is closely related to another criterion MAD (the mean of the absolute differences)
since MAD = SAD$/(n(n-1))$.
Thus, for each data case the conclusion based on the comparison of SAD values
is the same as the one based on the comparison of MAD values
since $n(n-1)$  is just a constant for each data case.

\begin{table*}[t]
\begin{center}
\hfill{}

\scalebox{0.72}{
\begin{tabular}{ccr||r|r|r||rrrrrr}
\hline
Name    & $n$   & $m$       & PO-MCMC   &  DOS \;    & DDS \; & \multicolumn{6}{c} {DDS}\\
        &       &           & $\hat{\mu}(T_{t})$\;\;\; & $\hat{\mu}(T_{t})$    & $\hat{\mu}(T_{t})$
        & $\hat{\mu}(T_{DP})$   & $\hat{\sigma}(T_{DP})$
        & $\hat{\mu}(T_{ord})$   & $\hat{\sigma}(T_{ord})$
        & $\hat{\mu}(T_{DAG})$  & $\hat{\sigma}(T_{DAG})$  \\
\hline

T-T-T	&10	   & 958	& 104.30	    & 1.96		& 1.73	    & 1.42	    & 0.0159	& 0.24	& 0.0066	& 0.06	  & 0.0017 \\
												
Glass	  & 11	   & 214	& 222.02	    & 1.52		& 1.25	    & 0.89	    & 0.0136	& 0.25	& 0.0076	& 0.09	   & 0.0013 \\
												

Wine	  & 13	   & 178	& 374.17		& 2.56	    & 2.53	   & 1.63	    & 0.0121	& 0.35	& 0.0039	& 0.53	& 0.0024	\\

Housing	  & 14	   & 506	& 510.65		& 4.92		& 4.54	   & 3.88	    & 0.0143	& 0.40	& 0.0033	& 0.23	& 0.0020	\\

Credit	  & 16	   & 690	& 962.97	    & 30.90		& 30.93	    & 29.57	    & 0.2118	& 0.44	& 0.0055	& 0.90	   & 0.0122 \\

Zoo	      & 17	   & 101	& 1,331.80		& 13.75		& 21.90	   & 12.05	    & 0.0837	& 0.62	& 0.0039	& 9.20	& 0.1156	\\


Tumor	  & 18	   & 339	& 1,856.03	    & 44.99		& 60.21	    & 43.33	    & 1.0763	& 0.72	& 0.0052	& 16.12	   & 0.2941 \\
												
Vehicle	  & 19	   & 846	& 2,683.91	    & 149.08	& 149.23	& 147.35	& 1.2863	& 0.65	& 0.0079	& 1.20	   & 0.0219 \\

German	   & 21	   & 1,000	& 4,887.33		& 333.43	& 356.17	& 330.64	& 1.3304	& 0.97	& 0.0087	& 24.52	  & 0.4441 \\

\hline

Syn15   & 15    & 100       & 677.29        & 4.82		& 7.13	     & 3.49	     & 0.0824	& 0.50	& 0.0021	& 3.12	   & 0.0337\\
        &       & 200       & 677.47	    & 5.91		& 8.91	     & 4.59	     & 0.0473	& 0.47	& 0.0044	& 3.83	   & 0.0258	\\
        &       & 500       & 686.51	    & 8.56		& 8.44	     & 7.41	     & 0.1911	& 0.48	& 0.0100	& 0.53	   & 0.0050\\
        &       & 1,000     & 716.31	    & 13.01		& 12.52	     & 11.78	 & 0.0927	& 0.48	& 0.0115	& 0.24	   & 0.0022\\
        &       & 2,000     & 731.50	    & 21.70		& 21.24	     & 20.58	 & 0.5310	& 0.49	& 0.0086	& 0.15	   & 0.0016	\\
        &       & 5,000     & 731.05	    & 48.01		& 47.26	     & 46.63	 & 0.4110	& 0.47	& 0.0031	& 0.14	   & 0.0008	\\

\hline

Letter  & 17    & 100       & 1,322.43	     & 16.35	& 21.91	    & 14.64	    & 0.2160	& 0.64	& 0.0036	& 6.60	    & 0.0497 \\
        &       & 200       & 1,315.01	     & 19.74	& 20.46	    & 18.14	    & 0.0809	& 0.55	& 0.0026	& 1.74	    & 0.0234 \\
        &       & 500       & 1,338.33	     & 27.97	& 28.21	    & 26.35	    & 0.0489	& 0.51	& 0.0066	& 1.32	    & 0.0130 \\
        &       & 1,000     & 1,343.88	     & 39.45	& 39.03	    & 38.11	    & 0.3011	& 0.47	& 0.0051	& 0.43	    & 0.0089\\
        &       & 2,000     & 1,358.29	     & 61.75	& 61.44	    & 60.52	    & 0.5109	& 0.47	& 0.0078	& 0.42	    & 0.0063 \\
        &       & 5,000     & 1,610.37	     & 126.53	& 126.49	& 125.67	& 0.7967	& 0.52	& 0.0058	& 0.27	    & 0.0053\\

\hline

Insur19   & 19    & 100     & 2,616.56		 & 53.39	& 86.06	   & 51.06	    & 0.2520	& 0.86	& 0.0091	& 34.11	   & 0.9630	\\
          &       & 200     & 2,633.44		 & 62.12	& 77.44	   & 59.95	    & 0.3025	& 0.84	& 0.0083	& 16.61	   & 0.2278\\
          &       & 500     & 2,680.85	     & 80.70	& 85.03	   & 77.89	    & 0.7618	& 0.79	& 0.0082	& 6.32	   & 0.0535\\
          &       & 1,000    & 2,734.10	     & 106.37	& 109.39   & 104.63	    & 1.0958	& 0.89	& 0.0122	& 3.85	   & 0.0296	 \\
          &       & 2,000    & 2,915.60		 & 155.05	& 158.31   & 154.26	    & 3.7703	& 0.90	& 0.0154	& 3.13	   & 0.0155 \\
          &       & 5,000    & 3,445.84		 & 297.31	& 300.51   & 297.41	    & 4.7737	& 0.96	& 0.0090	& 2.11	   & 0.0091 \\

\hline

Child   & 20    & 100       & 3,710.49	     & 102.42	& 181.38	& 99.71	      & 0.3497	& 1.07	& 0.0109	& 80.56	   & 1.1980 \\
        &       & 200       & 3,717.10	     & 112.68	& 168.48	& 110.05	  & 0.1793	& 1.04	& 0.0078	& 57.36	   & 0.5335 \\
        &       & 500       & 3,757.76	     & 136.98	& 193.11	& 134.32	  & 0.4652	& 1.07	& 0.0111	& 57.69	   & 0.7276 \\
        &       & 1,000     & 3,799.47	     & 174.18	& 186.15	& 171.54	  & 1.9832	& 1.08	& 0.0104	& 13.50	   & 0.9244\\
        &       & 2,000     & 4,018.03	     & 241.48	& 254.26	& 238.37	  & 2.4952	& 1.15	& 0.0214	& 14.71	   & 0.4833 \\
        &       & 5,000     & 4,531.20	     & 443.54	& 455.30	& 441.64	  & 4.8785	& 1.16	& 0.0068	& 12.47	   & 0.4113	\\

\hline

\end{tabular}
}

\hfill{}
\end{center}
\caption{Comparison of the PO-MCMC, the DOS \& the DDS in Terms of Time (Time Is in Seconds)}
\label{tb:bias_comparison_time_1}
\end{table*}

For fair comparison, 
in our algorithms
we used the K2 score
\citep{heckerman:etal95a}
and we set $q_i(U_i) = 1$  and $\rho_i(Pa_i) = 1/ \binom {n-1} {|Pa_i|} $ for each $i, U_i, Pa_i$, where $|Pa_i|$ is the size of the set $Pa_i$,
since such a setting is used in both BEANDisco and REBEL.

For the setting of the PO-MCMC,
according to the suggestion for the optimal setting 
 from
 \citet{niinimaki:etal11},
we set the bucket size $b$ to be 10 for all the data cases except Tic-Tac-Toe.
The bucket size $b$ was set to be 9 for the data case Tic-Tac-Toe
because Tic-Tac-Toe has only 10 variables so that the setting $b = 10$
will cause the tool BEANDisco to throw a run-time error.
We ran the first $10,000$ iterations for ``burn-in,''
and then took $200$ partial order samples at intervals of $50$ iterations.
Thus, there were $20,000$ iterations in total.
(The time cost of each iteration in the PO-MCMC is $O(n^{k+1} + n^2 2^b n / b)$.)
In the PO-MCMC,
for each sampled partial order $P_i$, $p(f | D, P_i) $ is obtained by
$p(D, f, P_i) / p(D, P_i)$ $ = p(D, f, P_i) / p(D, f \equiv 1,P_i)$,
where $p(D, f, P_i)$ $ = \sum_{\prec \supseteq P_i} \sum_{G \subseteq \prec} f(G)$ $ p(\prec, G) p(D|G)$.
The notation $\sum_{\prec \supseteq P_i}$ means that
all the total order $\prec$'s that are linear extensions of the sampled partial order $P_i$
will be included to obtain $p(D, f, P_i)$.
For example, for a data set with $n = 20$,
since our bucket size $b = 10$,
there are $20! / (10! 10!) = 184,756$ total orders
that will be included for each sampled partial order $P_i$.
The inclusion of the information of a large number of total orders consistent with each sampled partial order
gives great learning power to the PO-MCMC method;
and such an inclusion can be efficiently computed by the algorithm
of \citet{Parviainen:AISTAT2010}
with the assumptions of the order-modular prior and the maximum in-degree $k$.
Finally, for the PO-MCMC, the estimated posterior of each edge is computed using $\hat{p}_{\prec}(f|D) = (1/T) \sum_{i=1}^T p(f|D, P_i)$.


Because the to-be-learned feature is the edge feature,
we can also use our DOS algorithm for the comparison. 
For both the DOS algorithm and the DDS algorithm,
we set $N_o = 20,000$, that is, $20,000$ (total) orders were sampled.
Theoretically,
we expect that the learning performance of the DOS should be
better than the performance of the DDS
because the additional approximation coming from the DAG sampling step is avoided by the DOS.
By listing the performance of the DOS,
we mainly intend to examine how much the performance of the DDS decreases
due to the additional approximation from sampling one DAG per order.
However, since
the DDS but not the DOS
is capable of learning non-modular features,
the comparison between the PO-MCMC method and the DDS method is our main task.

Table \ref{tb:bias_comparison_sad_1} shows the experimental results in terms of SAD
for each data case with $n$ variables and $m$ instances,
while Table \ref{tb:bias_comparison_time_1} lists the running time costs
corresponding to Table \ref{tb:bias_comparison_sad_1}.
For each of the three methods,
we performed 15 independent runs for each data case.
The sample mean and the sample standard deviation of the 15 SAD values of each method,
denoted by $\hat{\mu}$(SAD) and  $\hat{\sigma}$(SAD) respectively,
are listed along each method in Table \ref{tb:bias_comparison_sad_1}.
Correspondingly, the sample mean of the total running time $T_t$ of each method, denoted by $\hat{\mu}(T_t)$,
is shown in Table \ref{tb:bias_comparison_time_1}.
(Precisely speaking, the reported total running time $T_t$ of the DDS method includes
both the time of running the three steps of the DDS
and
the relatively tiny $O(N_o)$ time cost of computing $\hat{p}_{\prec}(f|D)$ for each edge $f$ using Eq.~(\ref{eq-2-HR}) at the end.
Similarly, the reported total running time $T_t$ of the DOS method also includes
the relatively tiny $O(N_o)$ time cost of computing $\hat{p}_{\prec}(f|D)$ for each edge $f$ using Eq.~(\ref{eq-1-HR}) and Eq.~(\ref{eq-3-HR}) at the end.)
In addition, the sample mean and the sample standard deviation of the running time of
three steps of the DDS (including the DP step, the order sampling step and the DAG sampling step),
denoted by
$\hat{\mu}(T_{DP})$, $\hat{\sigma}(T_{DP})$,
$\hat{\mu}(T_{ord})$, $\hat{\sigma}(T_{ord})$,
$\hat{\mu}(T_{DAG})$ and $\hat{\sigma}(T_{DAG})$ respectively,
are listed in the last six columns in Table \ref{tb:bias_comparison_time_1}.
Note that we still show $\hat{\mu}(T_{DP})$,
the mean running time of the DP step of the DDS in the 15 independent runs,
though the DP step is not a random algorithm at all.
The running time of the DP step is not exactly the same in each run
due to the randomness from uncontrolled factors such as the internal status of the computer.
By showing $\hat{\mu}(T_{DP})$,
we can clearly see the percentage of the total running time
that
the DP step typically takes 
by comparing $\hat{\mu}(T_{DP})$ and $\hat{\mu}(T_t)$.

Tables \ref{tb:bias_comparison_sad_1} and \ref{tb:bias_comparison_time_1}
clearly
illustrate the performance advantage of our DDS method over the PO-MCMC method.
The overall time costs of our DDS based on
$20,000$ DAG samples
are much smaller than the corresponding costs of the PO-MCMC method
based on $20,000$ MCMC iterations in the partial order space.
Using much
shorter
time,
our DDS method has its
$\hat{\mu}$(SAD)
much smaller than $\hat{\mu}$(SAD) from the PO-MCMC method
for 28 out of all the 33 data cases.
The five exceptional cases are
Glass,
Syn15 with $m = 2,000$, Syn15 with $m = 5,000$,
Insur19 with $m = 2,000$, and Insur19 with $m = 5,000$.
(In both Glass and Insur19 with $m = 2,000$,
$\hat{\mu}$(SAD) using our DDS method is still smaller than the one using the PO-MCMC method
but their difference is not very large relatively to $\hat{\sigma}$(SAD) from the PO-MCMC method.)
Furthermore, since both $\hat{\mu}$(SAD) and $\hat{\sigma}$(SAD) are given,
by the two-sample $t$ test with unequal variances
\citep{CB:StatisticalI},
we can conclude with
strong evidence
(at the significance level $5 \times 10^{-3}$)
that the real mean of SAD using our DDS method is smaller than
the real mean of SAD using the PO-MCMC method for each of the 28 data cases.
For the exceptional data case Glass,
the p-value of the $t$ test is $0.0120$ so that
we can conclude at the significance level $0.05$ 
that the real mean of SAD using our DDS method is smaller than
the real mean of SAD using the PO-MCMC method.
For each of the other four exceptions,
by the same $t$ test
we accept
(with the p-value $> 0.2$)
the null hypothesis that there is no significant difference 
in the real means of SAD from two methods.
Thus, the advantage of our DDS algorithm over the PO-MCMC method in learning
Bayesian networks of a moderate $n$ can be clearly seen,
though the value of the PO-MCMC method still remains for larger $n$
where our DDS algorithm is infeasible.


In terms of the total running time of the DDS algorithm,
Table \ref{tb:bias_comparison_time_1} shows that the running time of the DP step always accounts for the largest portion.
The running time of the DAG sampling step is less than $81$ seconds to get $20,000$ DAG samples for all the 33 cases.
Though both the order sampling step and the DAG sampling step involve randomness,
the variability of their running time is actually small.
This can be seen from the ratio of $\hat{\sigma}(T_{ord})$ to $\hat{\mu}(T_{ord})$
(which is always less than $3.04\%$ for all the 33 cases)
and the ratio of $\hat{\sigma}(T_{DAG})$  to $\hat{\mu}(T_{DAG})$
(which is always less than $6.85\%$ for all the 33 cases).
The ratio of $\hat{\mu}(T_{DAG})$ to $\hat{\mu}(T_{ord})$ ranges from $0.25$ to $75.29$ across the 33 cases,
which is much smaller than the upper bound of the ratio of $O(n^{k+1} N_o)$ to $O(n^2 N_o)$.
This indicates that our time-saving strategy introduced in Remark~\ref{rm:time_DAG_sampling}
can effectively reduce the running time of the DAG sampling step.
In addition,
the running time of the DAG sampling step often decreases further when $m$ increases,
which can be clearly seen from
all the four data sets (Syn15, Letter, Insur19 and Child) with different values of $m$.
Take the data set Letter for example,
when $m$ increases from $100$ to $1,000$,
the corresponding $\hat{\mu}(T_{DAG})$ decreases from $6.60$ to $0.43$ second,
a $93.48\%$ of decrease.
In summary,
the effectiveness of our time-saving strategy introduced in Remark~\ref{rm:time_DAG_sampling}
has been clearly shown in Table \ref{tb:bias_comparison_time_1}.


Finally, we present the experimental results
for the DDS by
varying the sample size.
We first choose the data case Letter with $m = 500$ as an example.
For the DDS,
we tried sample size
$N_o$ $ = 5,000 \cdot i$,
where $i \in \{1, 2, \ldots, 10 \}$.
For each $i$, we independently ran the DDS $15$ times
to get the sample mean and the sample standard deviation of SAD
for the (directed) edge features.
For the PO-MCMC,
with the bucket size $b = 10$, 
we ran totally $5,000 \cdot i$ MCMC iterations in the partial order space,
where $i \in \{1, 2, \ldots, 10 \}$.
For each $i$, we discarded the first $2,500 \cdot i$ MCMC iterations for ``burn-in''
and set the thinning parameter to be $50$
so that $50 \cdot i$ partial orders finally got sampled.
Again, for each $i$, we independently ran the PO-MCMC $15$ times
to get the sample mean and the sample standard deviation of SAD
for the edge features.

Figure
\ref{fig-errorbar_POMCMC_DDS_SAD_Letter_m500}
shows the SAD performance of the two methods with each $i$ in terms of the edge features,
where an error bar represents
one sample standard deviation $\hat{\sigma}$(SAD)
across $15$ runs from
a method
(the PO-MCMC or the DDS)
at each $i$.
For example, Figure
\ref{fig-errorbar_POMCMC_DDS_SAD_Letter_m500}
shows that when $i = 4$,
$\hat{\mu}$(SAD) $ = 0.1326$ and $\hat{\sigma}$(SAD) $ = 0.0107$ from our DDS algorithm;
while $\hat{\mu}$(SAD) $ = 0.4369$ and $\hat{\sigma}$(SAD) $ = 0.1529$ from the PO-MCMC method.
This exactly matches the results previously shown in Table \ref{tb:bias_comparison_sad_1}.
Correspondingly, Figure
\ref{fig-plot_POMCMC_DDS_Time_Letter_m500}
shows $\hat{\mu}(T_{t})$ (the sample mean of the total running time) of the PO-MCMC and the DDS with each $i$,
where the running time is in seconds.
The advantage of our DDS can be clearly seen by combining
Figures \ref{fig-errorbar_POMCMC_DDS_SAD_Letter_m500}
and \ref{fig-plot_POMCMC_DDS_Time_Letter_m500}.
For each $i \in \{1, 2, \ldots, 10 \}$,
the real mean of SAD from the DDS is significantly smaller than the one from the PO-MCMC
with the p-value
$ < 5 \times 10^{-3}$  
returned by the two-sample $t$ test (with unequal variances).
In terms of the running time,
the total running time of the DDS is very short relative to the one of the PO-MCMC.
For example,
$\hat{\mu}(T_{t})$ of the DDS
increases 
with respect to $i$
and reaches only $29.55$ seconds
at $i = 10$.
This is shorter than $9\%$ of $\hat{\mu}(T_{t})$
of the PO-MCMC at $i = 1$, which is $336.09$ seconds.
Therefore,
the learning performance of the DDS with each sample size is significantly
better than the one of the PO-MCMC for the data case Letter  with $m=500$.

We also performed the experiment with the same experimental settings for the data cases
Tic-Tac-Toe, Wine,
Child with $m = 500$, and German.
Please refer to the supplementary material for the experimental results.
(The supplementary material is available at
\url{http://www.cs.iastate.edu/~jtian/Software/BNLearner/BN_Learning_Sampling_Supplement.pdf}. )
The same conclusion about the learning performance can be clearly drawn
by examining
the figures shown in the supplementary material.

\begin{figure}
\centering
\begin{minipage}[b]{0.46\linewidth}
  \includegraphics[width=1 \linewidth]{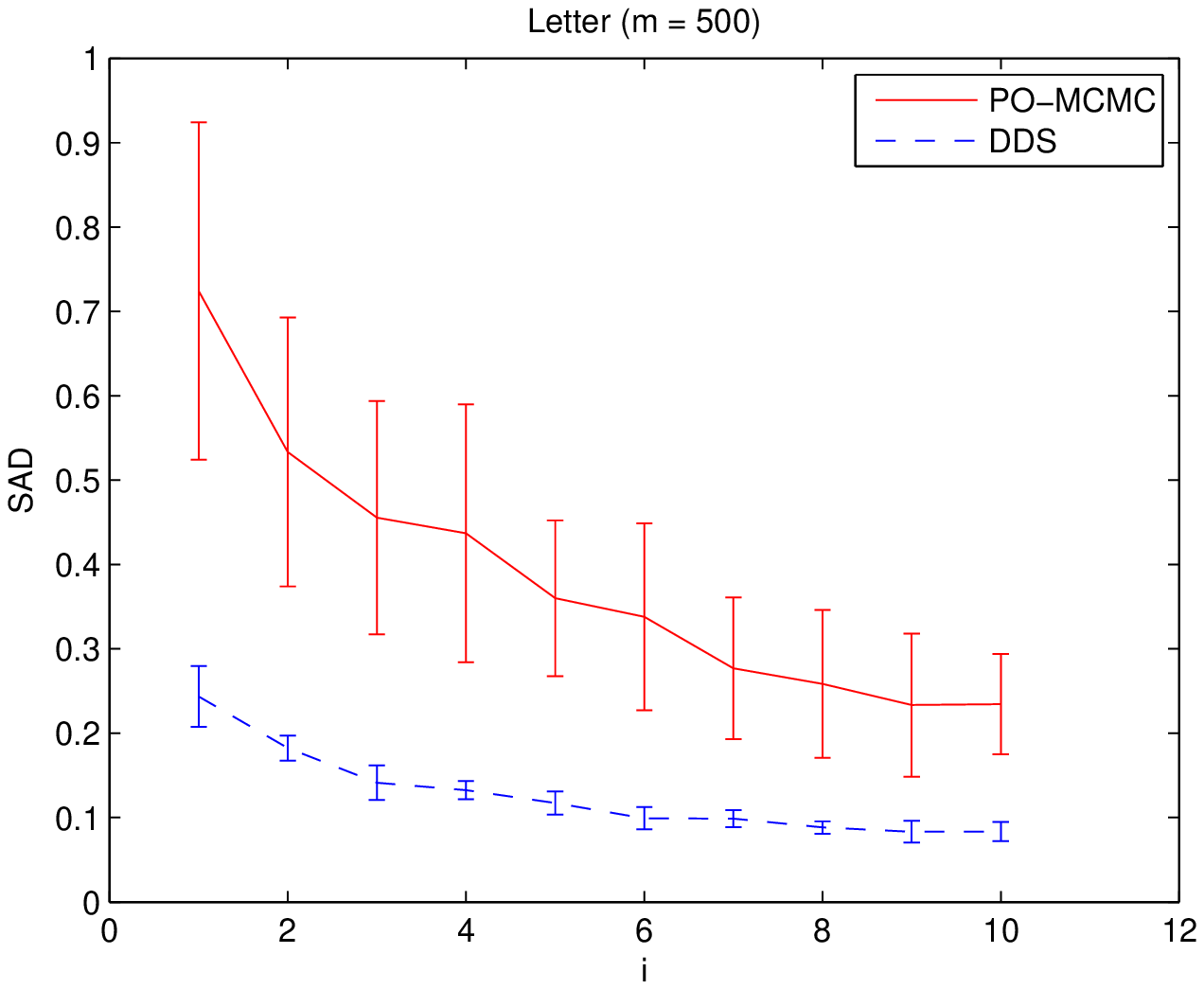}
  \caption{Plot of the SAD Performance of the PO-MCMC and the DDS for Letter ($m = 500$)}
  \label{fig-errorbar_POMCMC_DDS_SAD_Letter_m500}
\end{minipage}
\qquad
\begin{minipage}[b]{0.46\linewidth}
  \includegraphics[width=1 \linewidth]{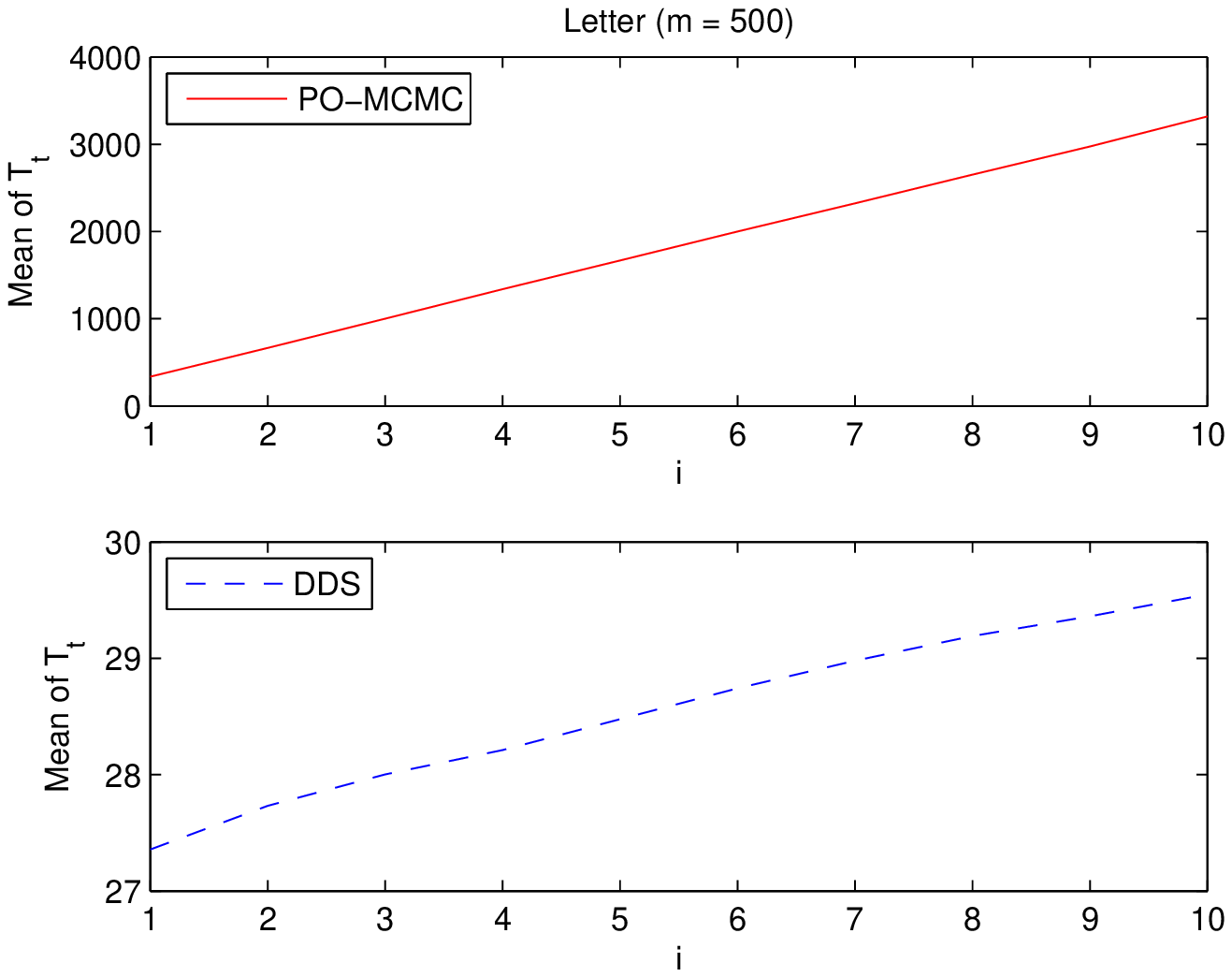}
  \caption{Plot of the Total Running Time of the PO-MCMC and the DDS for Letter ($m = 500$)}
  \label{fig-plot_POMCMC_DDS_Time_Letter_m500}
\end{minipage}
\end{figure}

\subsection{Experimental Results for the IW-DDS}
\label{sec-expr-IWDDS}
In this subsection,
we compare our IW-DDS algorithm with the Hybrid MCMC (i.e., DP+MCMC) method
\citep{eaton:mur07b}
and the $K$-best algorithm
\citep{tian:he2010},
two state-of-the-art methods
that can estimate the posteriors of any features 
without
the order-modular prior assumption.
The implementation of the Hybrid MCMC
(called BDAGL)
and the implementation of the $K$-best
(called KBest)
are both made available online by their corresponding authors.
(BDAGL is available at \url{http://www.cs.ubc.ca/~murphyk/Software/BDAGL/}.)
(KBest is available at \url{http://www.cs.iastate.edu/~jtian/Software/UAI-10/KBest.htm}.)

Since $n$ in each investigated data case is moderate,
we are able to use
POSTER, 
a C++ language implementation of the DP algorithm
of \citet{tian:he2009}
to get $p_{\nprec}( i \rightarrow j | D)$, the exact posterior of each edge $i \rightarrow j$
under the structure-modular prior.
(POSTER is available at
\url{http://www.cs.iastate.edu/~jtian/Software/UAI-09/Poster.htm}.)
Therefore we can use the SAD criterion
($\sum_{ij} | p_{\nprec}( i \rightarrow j | D) $ $- \hat{p}_{\nprec}(i \rightarrow j | D)|$)
to measure the performance of these three methods in the structural learning for each data case.

For fair comparison,
in our algorithm we used the BDeu score
\citep{heckerman:etal95a}
with equivalent sample size 1
and set $ p_i(Pa_i) (\equiv \rho_i(Pa_i)) \equiv  1$,
since these settings are also used in POSTER  
and the implementation of the $K$-best algorithm.

\begin{table*}[t]
\begin{center}

\hfill{}

\scalebox{0.71}{
\begin{tabular}{ccr||r|rr|r|rr||r|rr}
\hline
Name    & $n$   & $m$       & DP \;\;   & \multicolumn{2}{|c|} {DP+MCMC}              & \multicolumn{1}{|c|} {$K$-best} &\multicolumn{2}{|c||} {IW-DDS} & \multicolumn{1}{|c|}{$K$-best}   & \multicolumn{2}{|c} {IW-DDS} \\
        &       &           & SAD \;    & $\hat{\mu}$(SAD) & $\hat{\sigma}$(SAD)    & \multicolumn{1}{|c|} {SAD}      & $\hat{\mu}$(SAD)  & $\hat{\sigma}$(SAD)     & $\Delta$\;\;\;\;\;\;  & $\hat{\mu}$($\Delta$)\;\;\; & $\hat{\sigma}$($\Delta$)\;\;\; \\
\hline

T-T-T	& 10     &958		& 0.1651	& 15.0079	   & 2.9877		& 1.4194		
& 0.0227	& 0.0102		& 6.943E-01		& 9.935E-01	    & 8.636E-04 \\
																			
Glass	& 11	& 214		& 1.5444	& 0.3587	    & 0.4599	& 0.0904		
& 0.0381	& 0.0019		& 9.780E-01		& 9.901E-01	    & 6.368E-04 \\


Wine	& 13	& 178		& 1.4786	& 0.4605	   & 0.2968		& 0.2011		
& 0.1041	& 0.0075		& 9.023E-01		& 9.670E-01	    & 1.700E-03	 \\

Housing	& 14	& 506		& 5.6478	& 8.0000	   & 3.3408		& 9.1179		
& 4.8276	& 0.0624		& 2.880E-02		& 1.096E-01	    & 1.200E-03 \\

Credit	& 16	& 690		& 4.0580		& 5.0261	    & 2.3482	& 5.1492		
& 2.9148	& 0.0336		& 5.010E-02		& 1.793E-01	    & 1.700E-03 \\

Zoo	    & 17	& 101		& 8.2142	   & 32.4189   & 10.0953	& 35.1215		
& 19.7025	& 4.0767		& 3.815E-08		& 1.652E-11	   & 1.211E-11 \\


Tumor	& 18	& 339		& 5.1536		& 17.5104	     & 7.9198	& 20.5793		
& 10.3139	& 0.6950		& 2.940E-05		& 3.619E-06	    & 3.586E-07 \\
																			
Vehicle	& 19	& 846		& 3.5759		& 3.8234	     & 0.4011	& 3.3683		
& 0.4648	& 0.0194		& 5.387E-01		& 9.432E-01	    & 2.600E-03 \\

German	& 21	& 1,000		& 3.7261	    & 5.0207	   & 3.2223		& 5.5902		
& 0.9891	& 0.0449		& 7.800E-02		& 7.422E-01	    & 7.200E-03	 \\

\hline

Syn15   & 15    & 100       & 4.9321	& 12.8705	& 7.6384		& 11.8685	
& 10.1341	& 0.1495		& 1.604E-04		& 1.183E-04	   & 4.664E-06	 \\

        &       & 200       & 3.2557	& 4.5090	& 2.5875		& 7.5232	
& 4.9079	& 0.0583		& 2.700E-03		& 7.300E-03	    & 1.265E-04\\

        &       & 500       & 6.9798	& 5.5466	& 1.9175		& 4.4379	
& 4.2965	& 0.1619		& 1.526E-01		& 2.388E-01	    & 4.900E-03	\\

        &       & 1,000     & 1.3000	& 0.3974	& 0.3299		& 0.0848	
& 0.0498	& 0.0048		& 9.699E-01		& 9.843E-01	    & 8.909E-04	\\

        &       & 2,000     & 1.7192	& 1.8263	& 1.7095		& 0.3701	
& 0.1081	& 0.0147		& 8.521E-01		& 9.703E-01	    & 3.300E-03 \\

        &       & 5,000     & 1.9473	& 0.0304	& 0.0094		& 8.89E-04	
& 0.0022	& 0.0002		& 9.998E-01		& 9.994E-01	    & 9.821E-05 \\

\hline
Letter  & 17    & 100       & 9.2140	& 27.1507   & 4.0940		& 24.4313	
& 15.8780	& 0.2764		& 2.908E-04		& 1.621E-04	   & 5.336E-06 \\	

        &       & 200       & 7.2855	& 15.1587   & 3.5615		& 9.4512	
& 6.7936	& 0.1191		& 7.800E-03		& 1.220E-02	    & 3.341E-04	\\

        &       & 500       & 6.0961	& 3.4637	   & 4.6789		& 1.7237	
& 0.6347	& 0.0119		& 6.948E-01		& 8.808E-01	    & 3.000E-03	\\

        &       & 1,000     & 0.6394	& 0.1761	   & 0.0166		& 0.0837	
& 0.0766	& 0.0039		& 9.834E-01		& 9.864E-01	    & 8.678E-04 \\

        &       & 2,000     & 2.3913	& 3.5085	   & 3.1132		& 2.0976	
& 0.1338	& 0.0213		& 6.859E-01		& 9.756E-01	    & 2.700E-03	\\

        &       & 5,000     & 0.8407	& 0.1182	   & 0.0442		& 0.0160 	
& 0.0072	& 0.0005		& 9.948E-01		& 9.972E-01	    & 2.077E-04	\\

\hline

Insur19	& 19	& 100		& 5.3356	& 9.4318	   & 3.9576		& 11.7779		
& 6.5062	& 0.0891		& 6.000E-03		& 2.010E-02	    & 4.767E-04	\\

        &       & 200		& 5.9844	& 5.5465	   & 2.7295		& 2.2572		
& 1.4630	& 0.0557		& 4.495E-01		& 7.049E-01	    & 1.120E-02	\\

        &       & 500		& 1.8274	& 0.3605	   & 0.2287		& 0.4970		
& 0.1328	& 0.0105		& 7.464E-01		& 9.379E-01	    & 3.000E-03 \\

        &       & 1,000		& 1.7386	& 0.2186	   & 0.0762		& 0.7498		
& 0.0623	& 0.0111		& 5.866E-01		& 9.785E-01	    & 2.600E-03	\\

        &       & 2,000		& 1.2737	& 0.1217	   & 0.0380		& 0.0174		
& 0.0062	& 0.0012		& 9.900E-01		& 9.976E-01	    & 4.864E-04 \\
	
        &       & 5,000		& 1.9511	& 0.1765	   & 0.0594		& 0.0103			
& 0.0092	& 0.0010		& 9.970E-01		& 9.973E-01	    & 2.086E-04	\\
	
\hline

Child   & 20    & 100       & 6.9783	& 11.8987	& 3.1086		& 11.6189	
& 7.0304	& 0.0909		& 7.848E-04		& 2.700E-03	    & 1.066E-04	\\

        &       & 200       & 3.2826	& 4.7066	   & 4.3749		& 5.0729	
& 2.8510	& 0.0192		& 4.800E-03		& 1.506E-01	    & 1.200E-03 \\

        &       & 500       & 2.5580	& 2.4716	   & 1.3489		& 1.5304	
& 0.5416	& 0.0305		& 3.582E-01		& 8.222E-01	    & 5.100E-03 \\

        &       & 1,000     & 2.4708	& 2.6061	   & 2.2909		& 0.7066	
& 0.1499	& 0.0198		& 7.013E-01		& 9.545E-01	    & 3.400E-03	\\

        &       & 2,000     & 2.3330	& 1.4286	   & 1.2290		& 1.5279	
& 0.0662	& 0.0161		& 6.509E-01		& 9.828E-01	    & 2.800E-03	\\

        &       & 5,000     & 2.0365	& 1.2533	   & 1.7313		& 0.8783	
& 0.0150	& 0.0013		& 8.209E-01		& 9.940E-01	    & 4.696E-04	\\

\hline

\end{tabular}
}

\hfill{}
\end{center}
\caption{Comparison of the DP+MCMC, the $K$-best \& the IW-DDS in Terms of SAD}
\label{tb:unbiased_comparison_sad_1}
\end{table*}

\begin{table*}[t]
\begin{center}
\hfill{}

\scalebox{0.705}{
\begin{tabular}{ccr||r|r|r||rrrrrr}
\hline
Name    & $n$   & $m$       & \multicolumn{1}{|c|} {DP+MCMC}                  &  \multicolumn{1}{|c|} {$K$-best}     &  \multicolumn{1}{|c||}{IW-DDS} &  \multicolumn{6}{|c} {IW-DDS} \\
        &       &           & \multicolumn{1}{|c|}{$\hat{\mu}(T_{t})$}        &  \multicolumn{1}{|c|}{$T_{t}$}       &  \multicolumn{1}{|c||}{$\hat{\mu}(T_{t})$}    &
        \multicolumn{1}{c}{$\hat{\mu}(T_{DP})$}     &
        \multicolumn{1}{c}{$\hat{\sigma}(T_{DP})$}  &
        \multicolumn{1}{c}{$\hat{\mu}(T_{ord})$}     &
        \multicolumn{1}{c}{$\hat{\sigma}(T_{ord})$}  &
        \multicolumn{1}{c}{$\hat{\mu}(T_{DAG})$}    &
        \multicolumn{1}{c}{$\hat{\sigma}(T_{DAG})$}     \\
\hline

T-T-T   & 10	& 958	& 1.29 + 1,032.40		& 8.37	
                        & 3.28	     & 1.27	         & 0.0060	& 1.57	    & 0.0109	& 0.44	    & 0.0128 \\
																	
Glass	& 11	& 214		& 1.04 + 1,037.26		& 18.13
                        & 1.72	     & 0.98	         & 0.0210	& 0.50	    & 0.0054	& 0.24	    & 0.0018 \\
																	

Wine	& 13	& 178       & 2.44 + 1,127.80		  & 141.20	
                        & 3.52	     & 2.15	        & 0.0199	& 0.63	    & 0.0061	& 0.74	    & 0.0057	 \\

Housing	& 14	& 506       & 4.83 + 1,421.00		  & 323.37		
                        & 5.67	     & 4.21	        & 0.0813	& 0.63	    & 0.0042	& 0.52	    & 0.0043 \\

Credit	& 16	& 690		& 33.73 + 1,476.90		& 2,073.41	
                        & 35.20	    & 29.30	        & 0.3322	& 0.81	    & 0.0055	& 4.94	    & 1.6191 \\

Zoo	    & 17	& 101       & 22.33 + 2,107.50		  & 5,531.81	
                        & 26.16	     & 12.49	    & 0.1853	& 1.11	    & 0.0048	& 12.05	     & 0.1898	\\


Tumor	& 18	& 339		& 60.86 + 1,799.99		& 18,640.09	
                        & 87.35	    & 39.49	        &   0.4419	&    1.19	&  0.0184	& 46.31	     & 0.2937 \\
																	
Vehicle	& 19	& 846		& 207.27 + 1,886.10		& 17,126.45
                        & 171.75	&  160.55	     & 0.9310	& 1.49	    & 0.0175	&  9.67	    & 0.0665 \\

German	& 21	& 1,000     & 600.19 + 1,849.90		  & 10,981.29	
                        & 540.61	  & 392.25	    & 2.8656	& 1.68	    & 0.0207	& 146.65	  & 1.4290 \\

\hline

Syn15   & 15    & 100       & 5.21 + 1,284.00	      & 889.97
                        & 9.41	     & 3.56	         & 0.0667	& 0.81	    & 0.0048	& 4.80         & 0.0273	 \\

        &       & 200       & 6.28 + 1,286.20	      & 901.23
                        & 10.29	     & 4.76	         & 0.2090	& 0.80	    & 0.0111	& 4.53	       & 0.0256 \\

        &       & 500       & 9.64 + 1,336.80	      & 899.42
                        & 9.59	     & 7.50	         & 0.0821	& 0.85	    & 0.0051	& 1.08	       & 0.0128	 \\

        &       & 1,000     & 13.69 + 1,364.60	      & 910.76
                        & 13.35	     & 12.15	     & 0.7578	& 0.85	    & 0.0156	& 0.33	       & 0.0038 \\

        &       & 2,000     & 22.64 + 1,372.10	      & 907.02
                        & 21.98	     & 20.81	     & 0.5200	& 0.85	    & 0.0061	& 0.30	       & 0.0019\\

        &       & 5,000     & 54.79 + 1,356.70		  & 932.96
                        & 52.07	     & 47.80	     & 1.0887	& 3.99	    & 0.0235	& 0.28	       & 0.0023 \\

\hline

Letter  & 17    & 100       & 25.53 + 1,572.60	      & 7,639.02	
                        & 20.27	     & 15.83	     & 0.0601	& 1.15	    & 0.0062	& 2.83	      & 0.0292	 \\

        &       & 200       & 30.01 + 1,576.80	      & 7,966.25	
                        & 25.87	     & 20.22	     & 0.1769	& 1.09	    & 0.0069	& 4.21	      & 0.0304\\

        &       & 500       & 39.58 + 1,598.90	      & 8,257.60
                        & 31.88	     & 29.84	     & 0.1575	& 0.95	    & 0.0055	& 1.01	      & 0.0123	 \\

        &       & 1,000     & 52.85 + 1,575.60	      & 8,380.57
                        & 44.48	     & 42.93	     & 0.4835	& 0.98	    & 0.0093	& 0.56	      & 0.0100 \\

        &       & 2,000     & 77.48 + 1,591.00	      & 7,619.29
                        & 69.14	     & 66.34	     & 0.5184	& 2.01	    & 0.0241	& 0.78	      & 0.0068 \\

        &       & 5,000     & 157.69 + 1,636.40	      & 8,134.85
                        & 136.95	 & 134.94	     & 1.6127	& 1.36	    & 0.0097	& 0.65	      & 0.0067 \\

\hline

Insur19 & 19    & 100       & 101.47 + 1,828.00		  & 6,745.41	
                        & 112.28	   & 55.85	    & 0.1540	& 1.41	    & 0.0117	& 54.71	        & 0.5438\\

        &       & 200       & 113.79 + 1,896.10		  & 6,783.76	
                        & 82.52	       & 68.12	    & 0.4187	& 1.45	    & 0.0120	& 12.90	        & 0.1329 \\

        &       & 500       & 137.18 + 1,864.40		  & 6,894.15	
                        & 98.96	       & 91.44	     & 0.4547	& 1.43	    & 0.0128	& 6.07	        & 0.0358\\

        &       & 1,000     & 168.55 + 1,862.30		  & 6,966.90	
                        & 133.87	   & 123.42	     & 0.8007	& 1.44	    & 0.0157	& 9.00	        & 0.0601 \\

        &       & 2,000     & 226.36 + 1,781.70		  & 7,277.60		
                        & 185.02	   & 177.66	     & 1.3165	& 3.30	    & 0.0483	& 4.06	        & 0.0356 \\

        &       & 5,000     & 380.78 + 1,814.80		  & 7,061.76	
                        & 336.03	   & 329.78	     & 4.5841	& 1.89	    & 0.0220	& 4.34	        & 0.0713	 \\

\hline

Child   & 20    & 100       & 203.44 + 1,785.10	      & 15,085.86
                        & 223.62	   & 106.01	     & 0.3976	& 1.67	      & 0.0176	& 115.60	  & 1.3183	 \\

        &       & 200       & 215.70 + 1,760.80	      & 14,222.86
                        & 225.76	   & 119.82	     & 1.8604	& 1.69	      & 0.0107	& 104.09	  & 0.9659	 \\

        &       & 500       & 248.17 + 1,818.70	      & 14,016.94
                        & 214.91	   & 149.73	     & 0.8344	& 1.67	      & 0.0183	& 63.48	       & 0.5783	 \\

        &       & 1,000     & 292.40 + 1,817.20	      & 15,504.82
                        & 232.11	   & 193.18	     & 1.1041	& 1.72	      & 0.0135	& 37.20	       & 0.3176 \\

        &       & 2,000     & 371.12 + 1,841.40	      & 16,109.91
                        & 306.42	   & 268.78	     & 2.3209	& 1.82	      & 0.0165	& 35.81	       & 0.3680	 \\

        &       & 5,000     & 589.99 + 1,846.40	      & 15,372.61
                        & 524.63	   & 483.90	     & 6.7403	& 1.93	      & 0.0160	& 38.80	       & 0.5411 \\

\hline

\end{tabular}
}

\hfill{}
\end{center}
\caption{Comparison of the DP+MCMC, the $K$-best \& the IW-DDS in Terms of Time (Time Is in Seconds)}
\label{tb:unbiased_comparison_time_1}
\end{table*}

As for the DP+MCMC,
we note that most part of its implementation in BDAGL tool is written in Matlab,
whereas both the K-best and the IW-DDS are implemented in C++.
In order to make relatively fair comparison
in terms of the running time, 
we used REBEL tool, a C++ implementation of the DP algorithm
of \citet{Koivisto:06},
to perform the computation in the DP phase of the DP+MCMC;
but for fair comparison we changed its scoring criterion into
the BDeu score with equivalent sample size 1
and set $q_i(U_i) \equiv 1$ and $\rho_i(Pa_i) \equiv 1$.
To perform the computation in the MCMC phase,
we used
the Matlab implementation of BDAGL
\footnote{
The original BDAGL was found to get an out-of-memory error for any data case with more than 19 variables in our experiments.
This is because the original BDAGL intends to pre-compute the local scores of all the $n 2^{n-1}$ possible families and store them in an array for the later usage in both the DP phase and the MCMC phase.
To solve this out-of-memory issue, we have updated the original Matlab code in BDAGL and provided the BDAGL-New package which is
also available at
\url{http://www.cs.iastate.edu/~jtian/Software/BNLearner/BNLearner.htm}.
The main update is that, with the assumed maximum in-degree, only the local scores of all the families whose sizes are no more than the assumed maximum in-degree are pre-computed and stored in a hash table.
With BDAGL-New, the experiments for all the data cases in this paper can be performed without any error.
}
and
we ran it under Windows 7 on an ordinary laptop with 2.40 Intel Core i5 CPU and 4.0 GB memory.
The MCMC used the pure global proposal (with local
proposal choice $\beta$ = 0) since such a setting was reported
by \citet{eaton:mur07b}
to have the best performance for edge discovery when up to about $190,000$ MCMC iterations were performed in their
experimental results.
We ran totally 190,000 MCMC iterations each time and discarded the first 100,000 iterations as the burn-in period.
Then we set the thinning parameter to be 3 to get the final 30,000 DAG samples.
As a result, the time statistics of the DP phase (the number before + sign in Table \ref{tb:unbiased_comparison_time_1})
but not the MCMC phase (the number after + sign)
can be directly compared with the ones of the other two methods. 
For each data case, we performed 20 independent MCMC runs based on the DP outcome from REBEL  to get the results.

For our IW-DDS,
we set $N_o = 30,000$. 
We performed 20 independent runs for each data case to get the results.
For the $K$-best,
note that its SAD is fixed since there is no randomness in the computed results.
So we only ran it once to get the result.
We set $K$ to be $100$
for Tic-Tac-Toe, Glass, Wine, Housing, Credit, Zoo, Syn15, and Letter,
that is,
we got the $100$ best DAGs from Tic-Tac-Toe, Glass, Wine, Housing, Credit, Zoo,
each of the six cases of Syn15,
and each of the six cases of Letter.
We set $K$ to be only $80$ for Tumor
because our experiments showed that for Tumor the $K$-best program ran out of memory with $K > 80$.
Due to the same out-of-memory issue,
we set $K$ to be only $40$ for Vehicle and Insur19;
and we set $K$ to be only $20$ for Child and $9$ for German.
The fact that $K$ can only take a value no greater than $40$ for $n$ $\geq 19$ in our experiments
confirms our claim about the
computation problem of the $K$-best algorithm in terms of its space cost.


Table \ref{tb:unbiased_comparison_sad_1} shows the experimental results in terms of SAD for each data case
while
Table \ref{tb:unbiased_comparison_time_1} shows
the running time costs
corresponding to Table \ref{tb:unbiased_comparison_sad_1}.
(Just as Table \ref{tb:bias_comparison_time_1},
Table \ref{tb:unbiased_comparison_time_1} also lists
the sample mean and the sample standard deviation
of the running time of three steps of the DDS in the IW-DDS algorithm.)
The column named DP in Table \ref{tb:unbiased_comparison_sad_1} shows the SAD
($\sum_{ij} | p_{\nprec}( i \rightarrow j | D) $ $- p_{\prec}(i \rightarrow j | D)|$),
where each edge posterior $p_{\nprec}( i \rightarrow j | D)$ is computed by the exact DP method
of \citet{tian:he2009},
and  each edge posterior $p_{\prec}( i \rightarrow j | D)$ is computed by the exact DP method
of \citet{Koivisto:06}.
The SAD values reported in this column indicate the bias due to the assumption of the order-modular prior.
Next to the DP column,
the SAD values of the three methods are listed in Table \ref{tb:unbiased_comparison_sad_1}.
Both the DP+MCMC method and the IW-DDS method are random so that both $\hat{\mu}$(SAD) and $\hat{\sigma}$(SAD) are shown for these two methods.
The outcome of the $K$-best algorithm is not random so that only its SAD is shown.
Finally Table \ref{tb:unbiased_comparison_sad_1} also shows the cumulative posterior probability mass $\Delta$ for both the $K$-best algorithm and the IW-DDS method.

Tables \ref{tb:unbiased_comparison_sad_1} and \ref{tb:unbiased_comparison_time_1}
clearly demonstrate the advantage of our method over the other two methods.
By using much shorter computation time,
our method has its
$\hat{\mu}$(SAD)
less than the corresponding one from the DP+MCMC for 32 out of the 33 data cases.
The only exceptional case is Syn15 with $m = 200$.
Furthermore, based on the two-sample $t$ test with unequal variances,
we can conclude at the significance level 0.05 that
the real mean of SAD using our method is less than the corresponding one using the DP+MCMC
for each of the 31 cases;
the two exceptional cases are Syn15 with $m = 100$ and Syn15 with $m = 200$.
(For 30 out of these 31 cases, the p-value of the two-sample $t$ test is less than $0.01$.)
Meanwhile,  $\hat{\sigma}$(SAD) using our method is always much smaller than the one using the DP+MCMC for each of the 33 cases,
which indicates higher stability of the performance of our method.
Similarly,
using much shorter computation time,
our method has its
$\hat{\mu}$(SAD)
less than the SAD
from the $K$-best
for 32 out of the 33 cases.
The only exception is Syn15 with $m$ = 5,000.
Furthermore, based on the one-sample $t$ test
\citep{CB:StatisticalI},
we can conclude at the significance level $5 \times 10^{-4}$ that
the real mean of SAD using our method is less than the SAD using the $K$-best
for each of these 32 cases.

There are several other interesting things shown in Tables \ref{tb:unbiased_comparison_sad_1} and \ref{tb:unbiased_comparison_time_1}.
In terms of the SAD,
for very small $m$,
$\hat{\mu}$(SAD) using the DP+MCMC method is even larger than the SAD from the DP phase \citep{Koivisto:06} itself.
For example, for the data case Zoo,
the SAD from the DP phase is $8.2142$
but $\hat{\mu}$(SAD) obtained after the MCMC phase of the DP+MCMC method is $32.4189$.
Similar situations also occur in Syn15, Letter, Insur19, and Child when $m = 100$.
This indicates that for very small $m$,
the MCMC phase of the DP+MCMC method is unable to reduce the bias from the DP method
of
\citet{Koivisto:06}
for all these cases based on $190,000$ MCMC iterations.
As for the running time,
please note that $\hat{\mu}$($T_{DP}$) of our IW-DDS is always less than the running time of the DP phase of the DP+MCMC method. 
This is because the DP step of our method uses the DP algorithm
of \citet{Koivisto:soo04},
that is,
the first three steps of the DP algorithm
of \citet{Koivisto:06};
while the DP phase of the DP+MCMC method uses
all the five steps of the DP algorithm
of \citet{Koivisto:06}.
In other words, compared with the DP algorithm
of \citet{Koivisto:soo04},
the DP algorithm
of \citet{Koivisto:06}
includes a larger constant factor hidden in the $O(n^{k+1}C(m) + kn2^n)$ notation
though these two DP algorithms have the same time complexity.
This difference will
make the total running time of our IW-DDS
even less than the running time of the DP phase of the DP+MCMC method
when the remaining steps of the IW-DDS run faster than the last two steps of the DP algorithm
of \citet{Koivisto:06}.
For example, for the data case Child with $m = 5,000$,
$\hat{\mu}(T_{t})$ of the IW-DDS is $524.63$ seconds
while the corresponding running time of the DP phase of the DP+MCMC method is $589.99$ seconds.
Actually, Table \ref{tb:unbiased_comparison_time_1} shows that
there are 21 out of the 33 cases where $\hat{\mu}(T_{t})$ of the IW-DDS is
less than the running time of the DP phase of the DP+MCMC method.
In addition, just as shown in Section~\ref{sec-expr-DDS},
the effectiveness of our time-saving strategy can also be clearly seen from
Table \ref{tb:unbiased_comparison_time_1}.
For example, the ratio of $\hat{\mu}(T_{DAG})$ to $\hat{\mu}(T_{ord})$
ranges from 0.07 to 87.29 across the 33 cases, which is
much smaller than the upper bound of the ratio of $O(n^{k+1} N_o)$ to $O(n^2 N_o)$.

Table \ref{tb:unbiased_comparison_sad_1} also shows the resulting cumulative probability mass $\Delta$
from the $K$-best and our IW-DDS for all the data cases.
(Note that $\Delta$ is computed by the formula
$\Delta $ $  = \sum_{G \in \mathcal{G}} p_{\nprec}(G, D) / p_{\nprec}(D) $,
where $p_{\nprec}(D)$ is computed using POSTER tool.) 
In the table,
$\hat{\mu}$($\Delta$) from our IW-DDS  is greater than $\Delta$ from the $K$-best
for 28 out of the 33 data cases.
The five exceptional cases 
are Zoo, Tumor, Syn15 with $m = 100$, Syn15 with $m = 5,000$, and Letter with $m = 100$.
Interestingly, for four out of the five exceptional cases (as well as the other 28 cases),
$\hat{\mu}$(SAD) from our IW-DDS is significantly smaller than SAD from the $K$-best.
One possible reason is that the $K$ best DAGs
tend to have the same or similar local structures (family $(i, Pa_i)$'s)
that have relatively large local scores while
a large number of DAGs sampled from our IW-DDS include various local structures for each node $i$.
When $\Delta$ is far below $1$,
the inclusion of various local structures seems to be more effective in improving the structural learning performance.

In addition, Table \ref{tb:unbiased_comparison_sad_1} shows that
when $m$ is not very small (such as no smaller than $1,000$),
$\Delta$ from our IW-DDS with $N_o = 30,000$ can reach a large percentage
(such as greater than $90\%$) in most of our data cases.
As a result,
we can obtain
a sound interval for $p_{\nprec}(f|D)$
with a small width
(such as less than $0.1$) for any feature $f$.


To further demonstrate that our IW-DDS can obtain a large $\Delta$ efficiently when $m$ is not very small,
we increased $N_o$ from $100,000$ to $600,000$ with each increment $100,000$ to see
its performance for the data cases Letter with $m = 500$, Child with $m = 500$,
and German.
Again, we performed
20 independent runs for each data case to get the results.
Figures \ref{fig-errorbar_Delta_Letter_m500_100k_600k},
\ref{fig-errorbar_Delta_Child_m500_100k_600k}
and \ref{fig-errorbar_Delta_German_100k_600k} show
the increase in $\Delta$ with respect to the increase in $N_o$ for these three data cases.
Correspondingly,
Figures \ref{fig-plot_Time_for_Delta_Letter_m500_100k_600k},
\ref{fig-plot_Time_for_Delta_Child_m500_100k_600k}
and \ref{fig-plot_Time_for_Delta_German_100k_600k}
indicate the increase in $\hat{\mu}(T_{t})$, 
$\hat{\mu}(T_{ord})$
and $\hat{\mu}(T_{DAG})$
with respect to the increase in $N_o$ for these three data cases,
where the running time is in seconds.
Combining these figures,
it is clear that our IW-DDS can efficiently achieve a large $\Delta$.
Take the data case German for example,
with the time cost $\hat{\mu}(T_{t})$ = $1,493.02$ seconds,
our IW-DDS can collect $N_o = 600,000$ DAG samples so that
the corresponding mean of $\Delta$ can reach $91.74\%$.
Therefore, for any feature $f$ in the data case German,
our IW-DDS can provide a sound interval for $p_{\nprec}(f|D)$
with width $0.0826$.
Note that the $K$-best can only provide a meaningless sound interval for $p_{\nprec}(f|D)$ with huge width $0.922$
because its $\Delta$ can only reach $0.078$ in the data case German before running out of the memory.
Also note that the ratio of $\hat{\mu}(T_{DAG})$ to $\hat{\mu}(T_{ord})$
decreases from  $56.45$ to $30.95$ when $N_o$ increases from $100,000$ to $600,000$.
(The increase rate of $\hat{\mu}(T_{ord})$ is a constant with respect to $N_o$;
but the increase rate of $\hat{\mu}(T_{DAG})$ actually decreases as $N_o$ increases.)
This witnesses the statement in Remark \ref{rm:time_DAG_sampling}
that the benefit from our time-saving strategy will typically
increase when $N_o$ increases.

\begin{figure}
\centering

\begin{minipage}[b]{0.46\linewidth}
  \includegraphics[width=1 \linewidth]{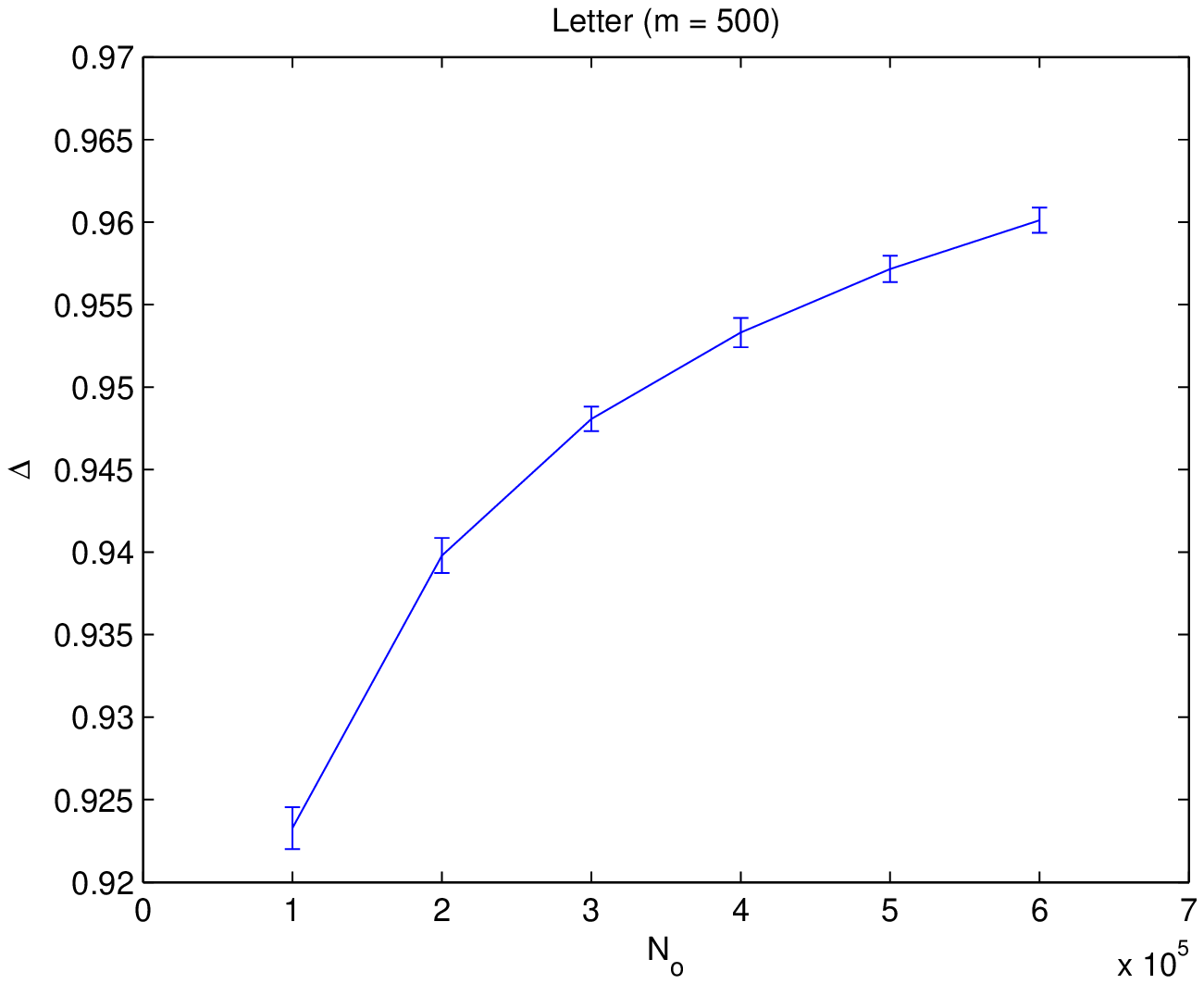}
  \caption{Plot of $\Delta$ versus $N_o$ for Letter ($m = 500$)}
  \label{fig-errorbar_Delta_Letter_m500_100k_600k}
\end{minipage}
\qquad
\begin{minipage}[b]{0.46\linewidth}
  \includegraphics[width=1 \linewidth]{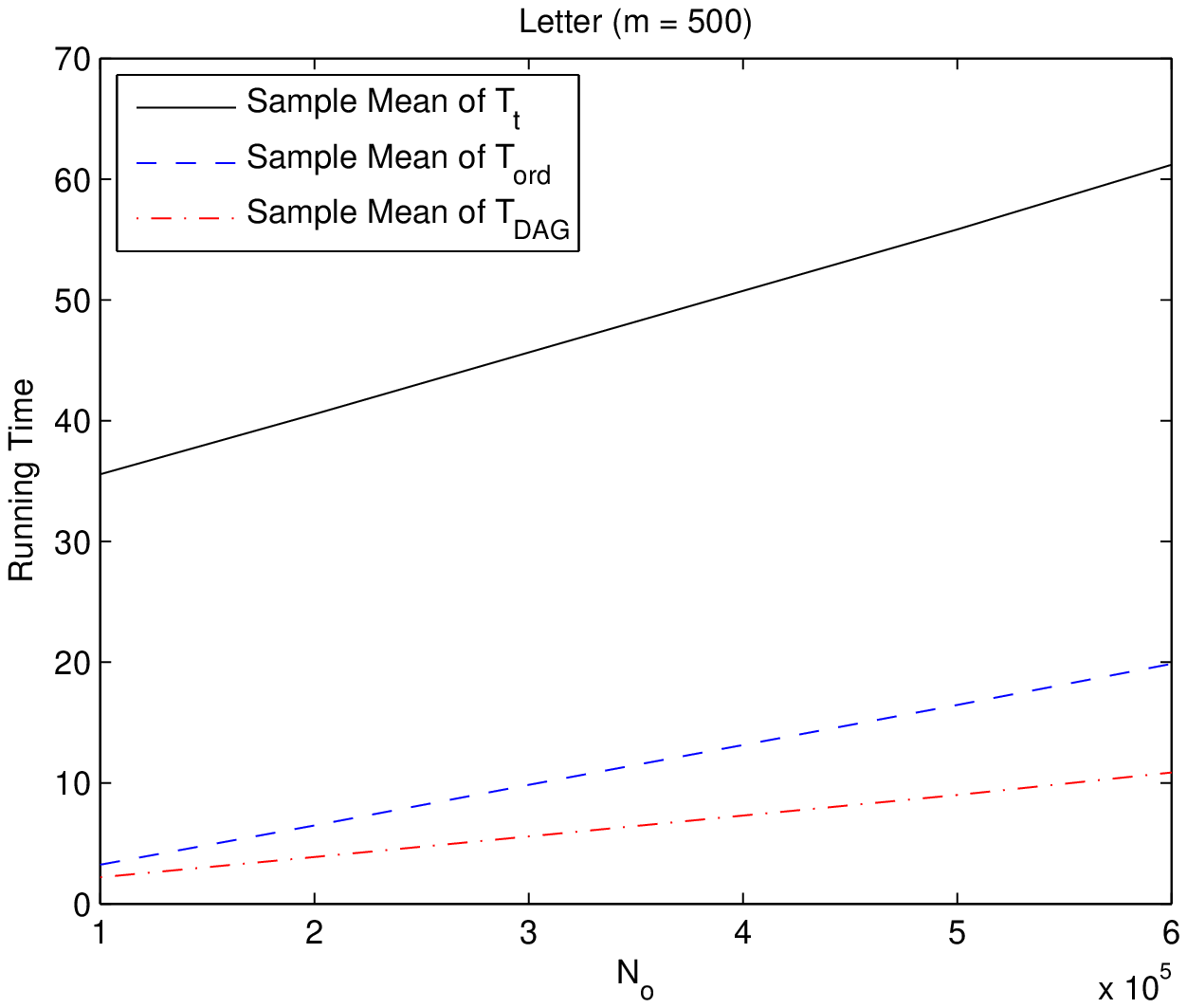}
  \caption{Plot of the Running Time versus $N_o$ for Letter ($m = 500$)}
  \label{fig-plot_Time_for_Delta_Letter_m500_100k_600k}
\end{minipage}

\qquad

\begin{minipage}[b]{0.46\linewidth}
  \includegraphics[width=1 \linewidth]{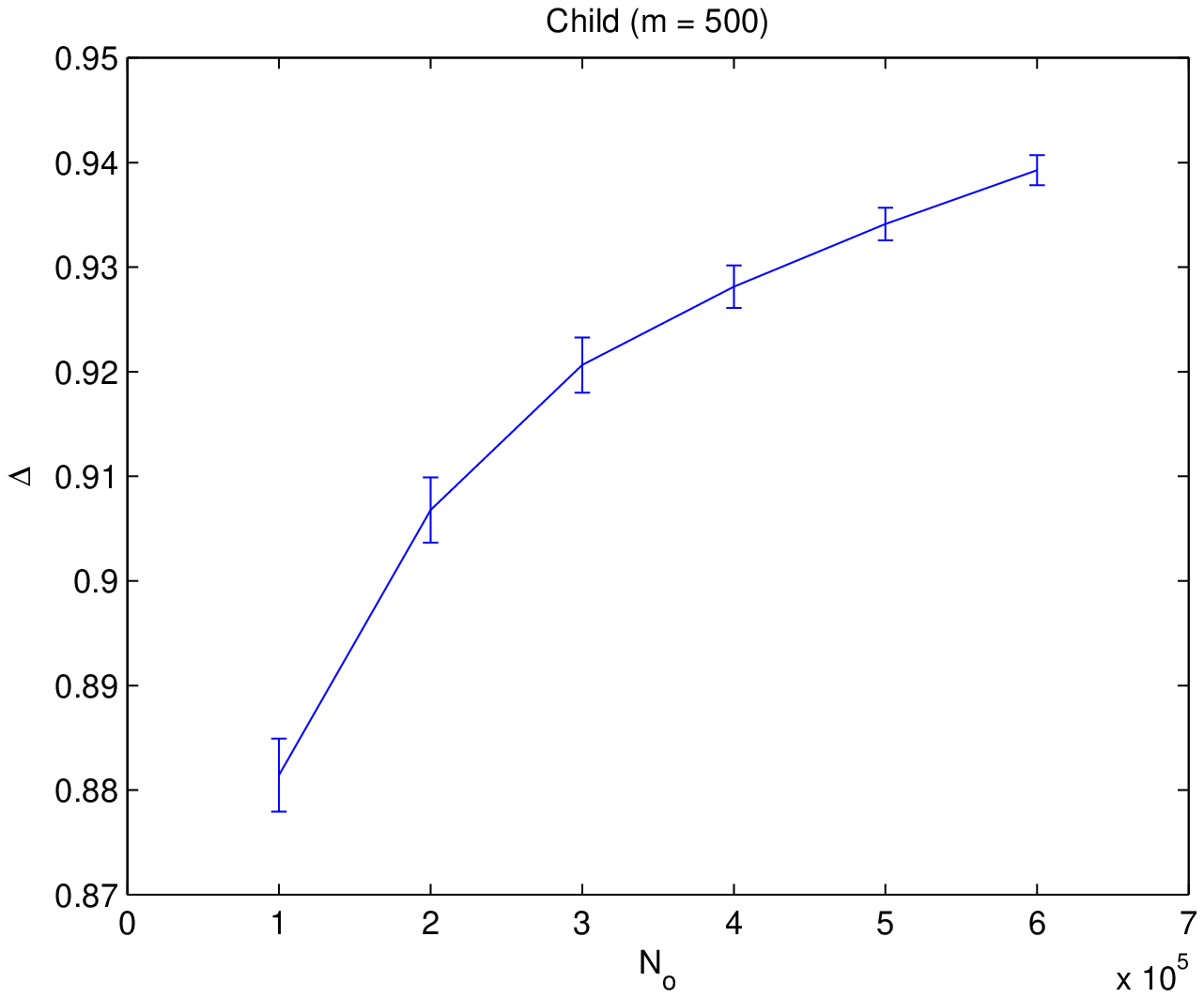}
  \caption{Plot of $\Delta$ versus $N_o$ for Child ($m = 500$)}
  \label{fig-errorbar_Delta_Child_m500_100k_600k}
\end{minipage}
\qquad
\begin{minipage}[b]{0.46\linewidth}
  \includegraphics[width=1 \linewidth]{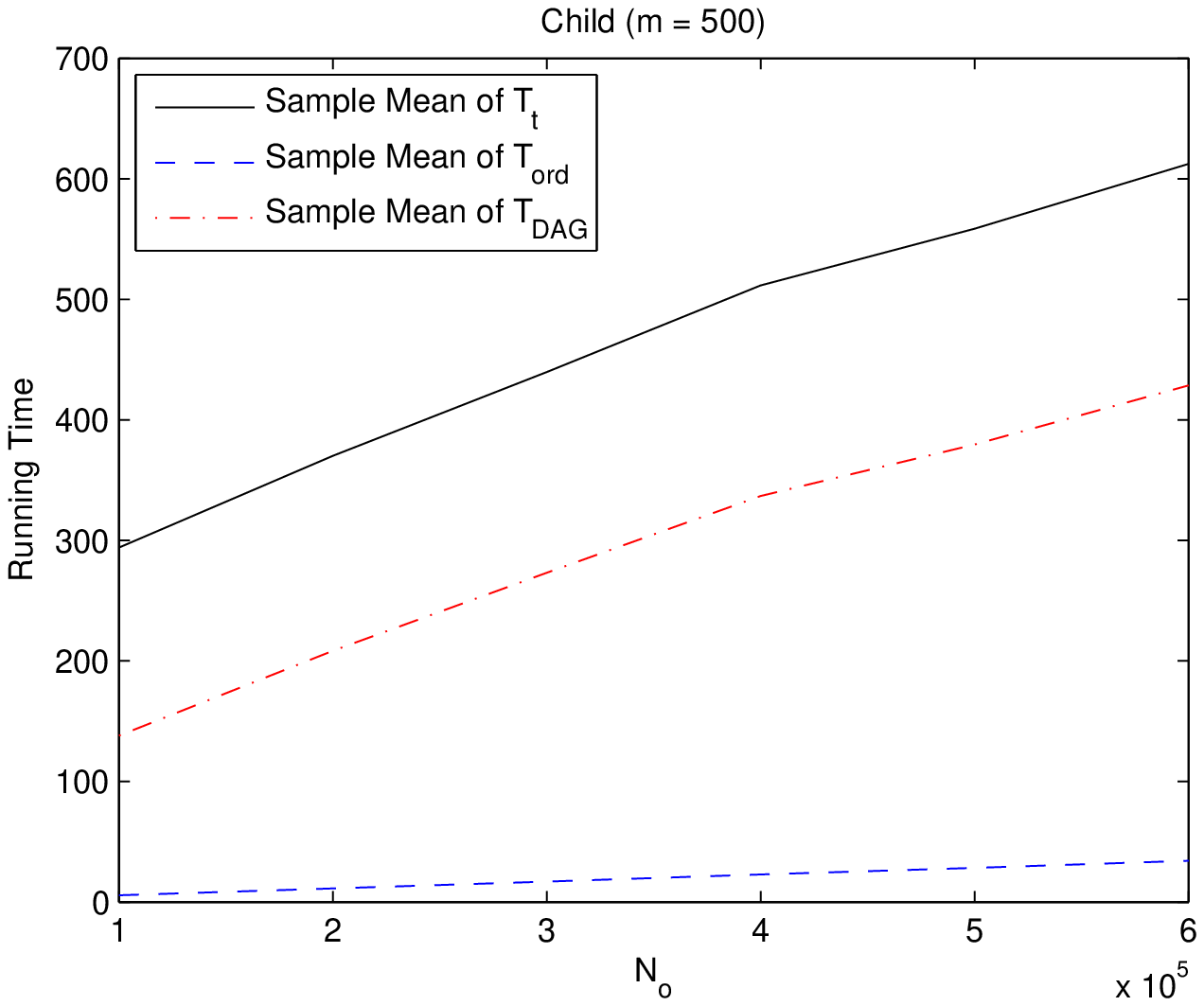}
  \caption{Plot of the Running Time versus $N_o$ for Child ($m = 500$)}
  \label{fig-plot_Time_for_Delta_Child_m500_100k_600k}
\end{minipage}

\qquad

\begin{minipage}[b]{0.46\linewidth}
  \includegraphics[width=1 \linewidth]{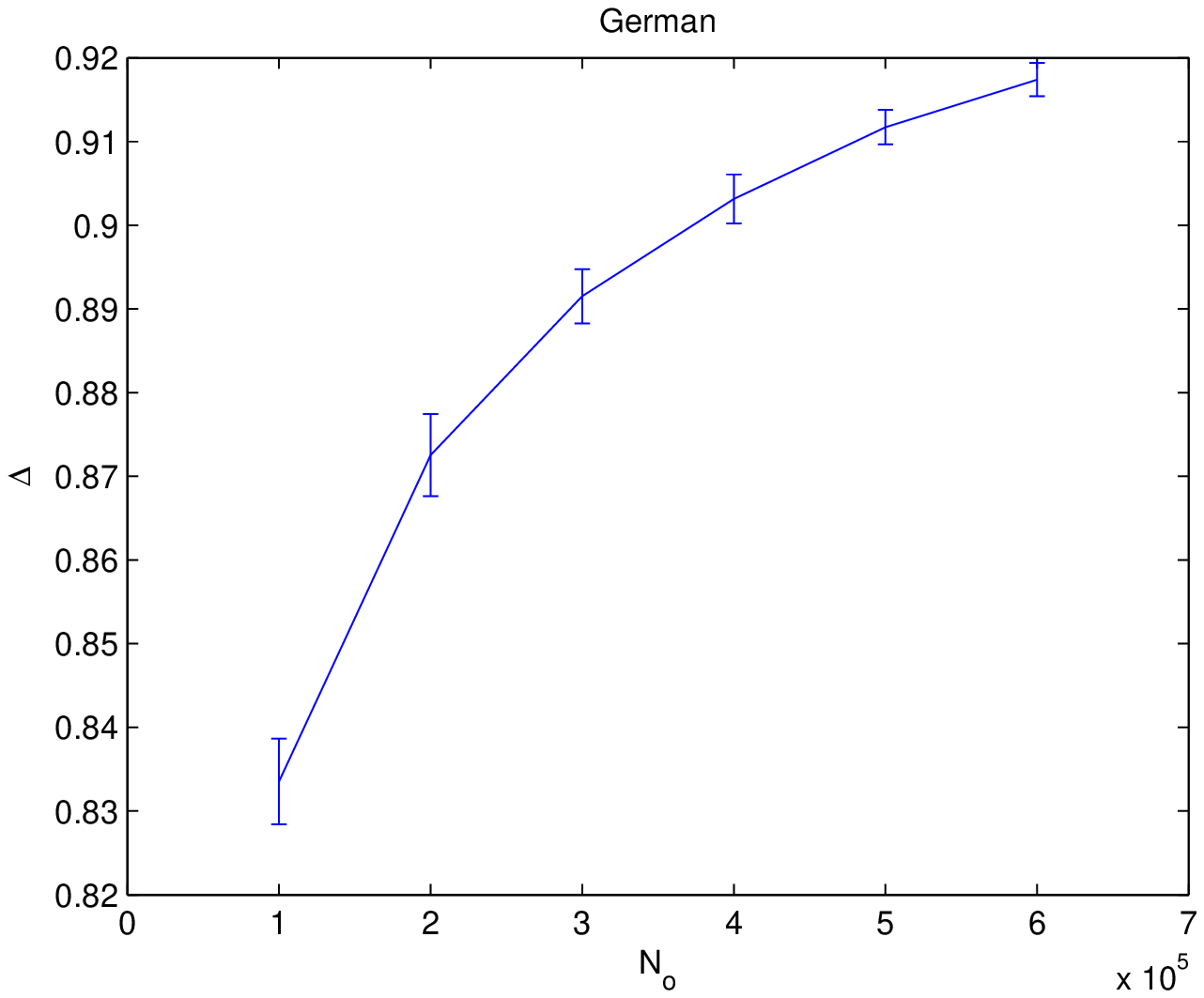}
  \caption{Plot of $\Delta$ versus $N_o$ for German}
  \label{fig-errorbar_Delta_German_100k_600k}
\end{minipage}
\qquad
\begin{minipage}[b]{0.46\linewidth}
  \includegraphics[width=1 \linewidth]{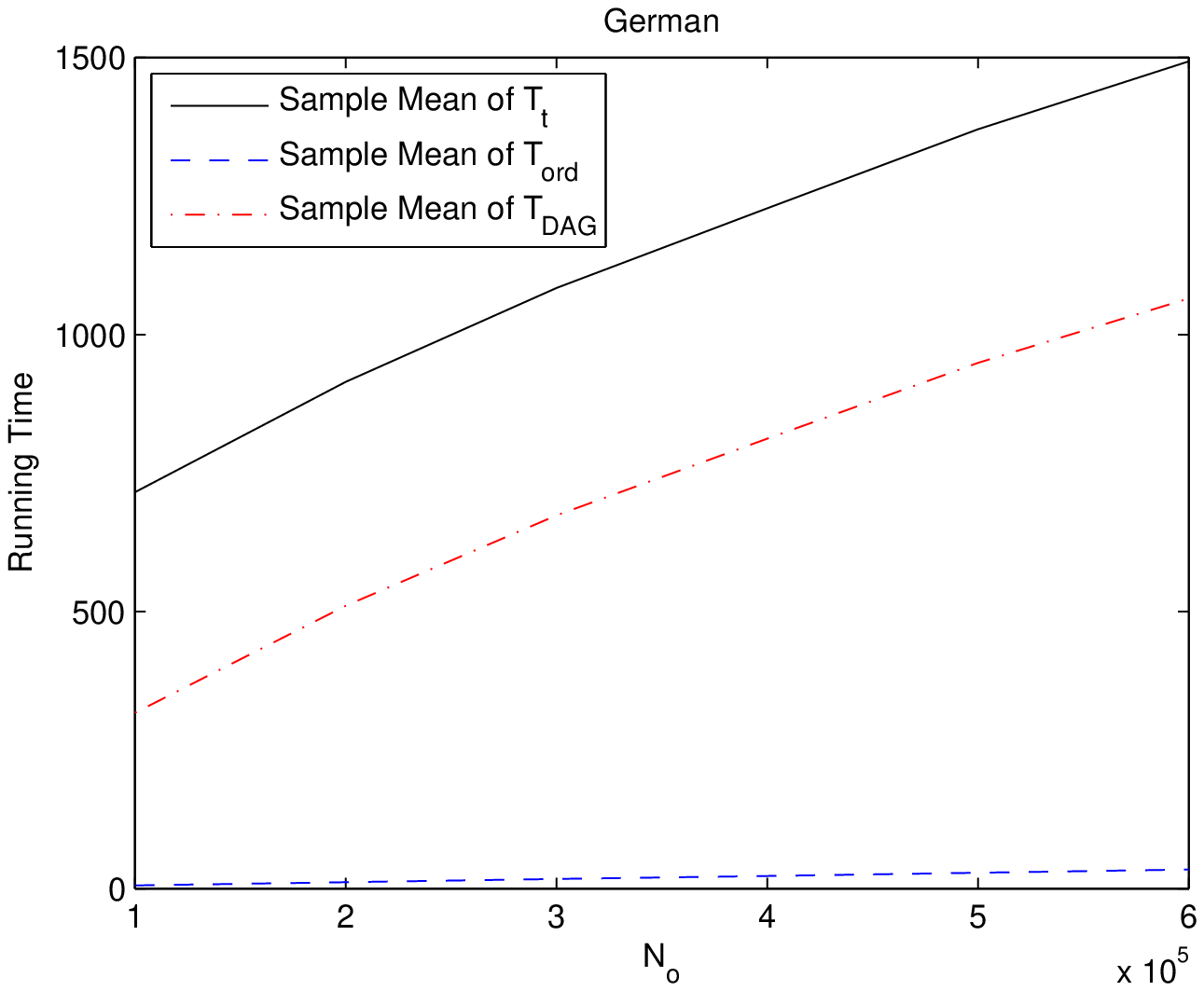}
  \caption{Plot of the Running Time versus $N_o$ for German}
  \label{fig-plot_Time_for_Delta_German_100k_600k}
\end{minipage}

\end{figure}

Finally, we present the experimental results
for the IW-DDS by
varying the sample size.
Just as Section \ref{sec-expr-DDS},
the experiments were performed for
the five data cases
Tic-Tac-Toe,
Wine,
Letter with $m = 500$,
Child with $m = 500$,
and German.
For the IW-DDS,
we tried the sample size
$N_o$ $ = 5,000 \cdot i$,
where $i \in \{1, 2, \ldots, 10 \}$.
For each $i$, we independently ran the IW-DDS $20$ times
to get the sample mean and the sample standard deviation of SAD
for the (directed) edge features.
For the DP+MCMC,
we ran totally $50,000 \cdot i$ MCMC iterations,
where $i \in \{1, 2, \ldots, 10 \}$.
For each $i$, we discarded the first $25,000 \cdot i$ MCMC iterations for ``burn-in''
and set the thinning parameter to be $5$
so that $5,000 \cdot i$ DAGs got sampled.
Again, for each $i$, we independently ran the MCMC $20$ times to get the sample mean and the sample standard deviation of SAD
for the edge features.
As for the $K$-best,
different experimental settings were used for different data cases due to the out-of-memory issue.
For two data cases Tic-Tac-Toe and Wine,
we ran the $K$-best program with $K = 20 \cdot i$, where $i \in \{1, 2, \ldots, 10 \}$.
(The setting of $K = 20 \cdot i$ guarantees that for these two data cases the running time of the $K$-best is longer than the running time of the IW-DDS at each $i$, which will be demonstrated soon.)
For the data case Letter with $m = 500$, we only ran the $K$-best with $K = 162$ because
the $K$-best program will run out of memory when $K > 162$ due to its expensive space cost.
The corresponding result of the $K$-best would be compared with the result of the IW-DDS at $i = 10$
(i.e., the IW-DDS with $N_o = 50,000$).
For the same out-of-memory issue,
we only set $K = 20$ for Child with $m = 500$ and set $K = 9$ for German when running the $K$-best program.
Note that since there is no randomness in the outcome of the $K$-best algorithm,
we always ran the $K$-best program only once to get its fixed SAD for the edge features.

The experimental results of comparing the three methods based on the data case Tic-Tac-Toe are shown in Figures
\ref{fig-errorbar_DPMCMC_Kbest_IWDDS_SAD_Tic_Tac_Toe} and \ref{fig-plot_DPMCMC_Kbest_IWDDS_Time_Tic_Tac_Toe}.
Figure
\ref{fig-errorbar_DPMCMC_Kbest_IWDDS_SAD_Tic_Tac_Toe}
shows the SAD performance of the three methods with each $i$ $\in \{1, 2, \ldots, 10 \}$ in terms of the edge features,
where an error bar represents one sample standard deviation $\hat{\sigma}$(SAD) across $20$ runs
from the DP+MCMC or the IW-DDS
at each $i$.
Figure
\ref{fig-plot_DPMCMC_Kbest_IWDDS_Time_Tic_Tac_Toe}
shows $\hat{\mu}(T_{t})$
(the sample mean of the total running time) of the DP+MCMC and the IW-DDS
as well as $T_{t}$ (the total running time) of the $K$-best
with each $i$, where the running time is in seconds.
The advantage of our IW-DDS can be clearly seen
by combining
Figures \ref{fig-errorbar_DPMCMC_Kbest_IWDDS_SAD_Tic_Tac_Toe}
and \ref{fig-plot_DPMCMC_Kbest_IWDDS_Time_Tic_Tac_Toe}.
Comparing with the DP+MCMC,
for each $i \in \{1, 2, \ldots, 10 \}$,
the IW-DDS uses the shorter running time
but has its real mean of SAD significantly smaller than the corresponding real mean of SAD from the DP+MCMC,
with the p-value $ < 1 \times 10^{-10}$ from the two-sample $t$ test with unequal variances.
Comparing with the $K$-best,
for each $i \in \{1, 2, \ldots, 10 \}$,
the IW-DDS uses the shorter running time than the $K$-best,
but the IW-DDS has its real mean of SAD significantly smaller than the SAD from the $K$-best,
with the p-value $ < 1 \times 10^{-35}$ from the one-sample $t$ test.
Therefore, 
the learning performance of the IW-DDS is significantly
better than the performance of the other two methods at each $i$  for the data case Tic-Tac-Toe.

The experimental results based on the data case Letter with $m = 500$  are shown in Figures
\ref{fig-errorbar_DPMCMC_Kbest_IWDDS_SAD_Letter_m500} and \ref{fig-plot_DPMCMC_Kbest_IWDDS_Time_Letter_m500}.
Just as the description for Figures \ref{fig-errorbar_DPMCMC_Kbest_IWDDS_SAD_Tic_Tac_Toe} and \ref{fig-plot_DPMCMC_Kbest_IWDDS_Time_Tic_Tac_Toe},
Figure
\ref{fig-errorbar_DPMCMC_Kbest_IWDDS_SAD_Letter_m500}
shows the SAD performance of the three methods in terms of the edge features,
and Figure
\ref{fig-plot_DPMCMC_Kbest_IWDDS_Time_Letter_m500}
shows the corresponding time costs of the three methods.
The only difference is that in
both Figure \ref{fig-errorbar_DPMCMC_Kbest_IWDDS_SAD_Letter_m500} and
Figure \ref{fig-plot_DPMCMC_Kbest_IWDDS_Time_Letter_m500},
the corresponding result of the $K$-best with $K = 162$ is marked as a star and
compared with the one of the IW-DDS at $i = 10$.
The advantage of our IW-DDS can be clearly seen
by combining
Figures \ref{fig-errorbar_DPMCMC_Kbest_IWDDS_SAD_Letter_m500} and \ref{fig-plot_DPMCMC_Kbest_IWDDS_Time_Letter_m500}.
Comparing with the DP+MCMC,
for each $i \in \{1, 2, \ldots, 10 \}$,
the IW-DDS uses the much shorter running time
but has its real mean of SAD significantly smaller than the corresponding real mean of SAD from the DP+MCMC,
with the p-value $ < 0.013$ from the two-sample $t$ test with unequal variances.
($\hat{\mu}(T_{t})$ of the IW-DDS at $i = 10$ is only $33.00$ seconds, which is even less than
the running time of the DP phase of the DP+MCMC method ($39.58$ seconds).)
Note that
$\hat{\sigma}$(SAD) 
from the DP+MCMC
is shown to be very large.
The DP+MCMC has its
$\hat{\sigma}$(SAD)
even larger than its $\hat{\mu}$(SAD) (the sample mean of SAD) when $i \geq 8$,
which indicates that the performance of the DP+MCMC is not stable based on the 500,000 MCMC iterations.
Comparing with the $K$-best,
the IW-DDS with $N_o = 50,000$ uses the much shorter running time
but has its real mean of SAD significantly smaller than the SAD from the $K$-best with $K = 162$
since the p-value from the corresponding one-sample $t$ test is less than $ 1 \times 10^{-35}$.
Therefore, 
the learning performance of the IW-DDS is also significantly
better than the performance of the other two methods for the data case Letter with $m = 500$.

The experimental results for the other three data cases Wine, Child with $m = 500$, and German
are represented similarly in the supplementary material.
The same conclusion about the learning performance can be clearly drawn
by examining
the figures shown in the supplementary material.

\begin{figure}
\centering
\begin{minipage}[b]{0.46\linewidth}
  \includegraphics[width=1 \linewidth,  height= 1.08 \linewidth]{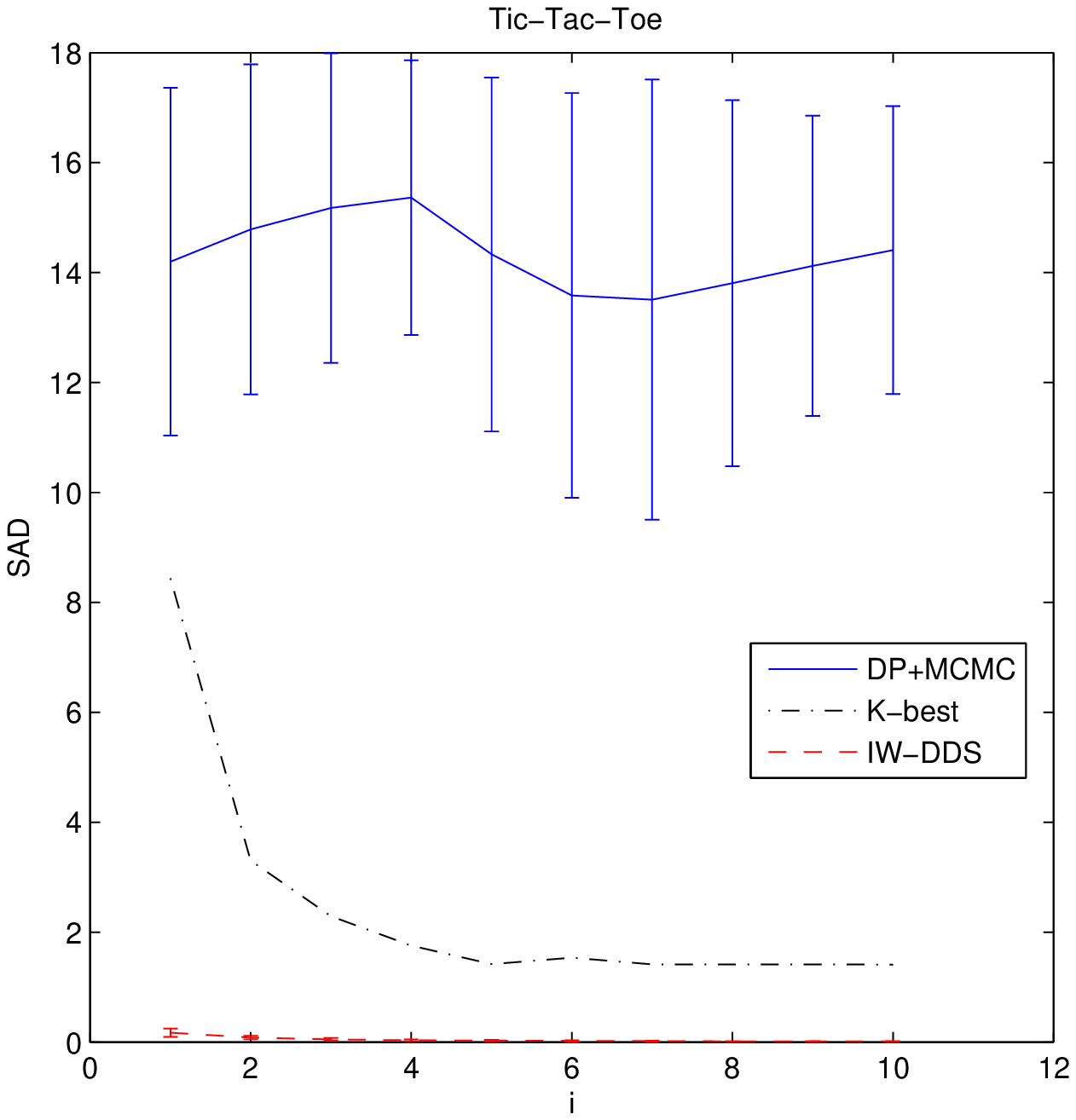}
  \caption{Plot of the SAD Performance of the DP+MCMC, the $K$-best and the IW-DDS for Tic-Tac-Toe}
  \label{fig-errorbar_DPMCMC_Kbest_IWDDS_SAD_Tic_Tac_Toe}
\end{minipage}
\qquad
\begin{minipage}[b]{0.46\linewidth}
  \includegraphics[width=1 \linewidth, height= 1.08 \linewidth]{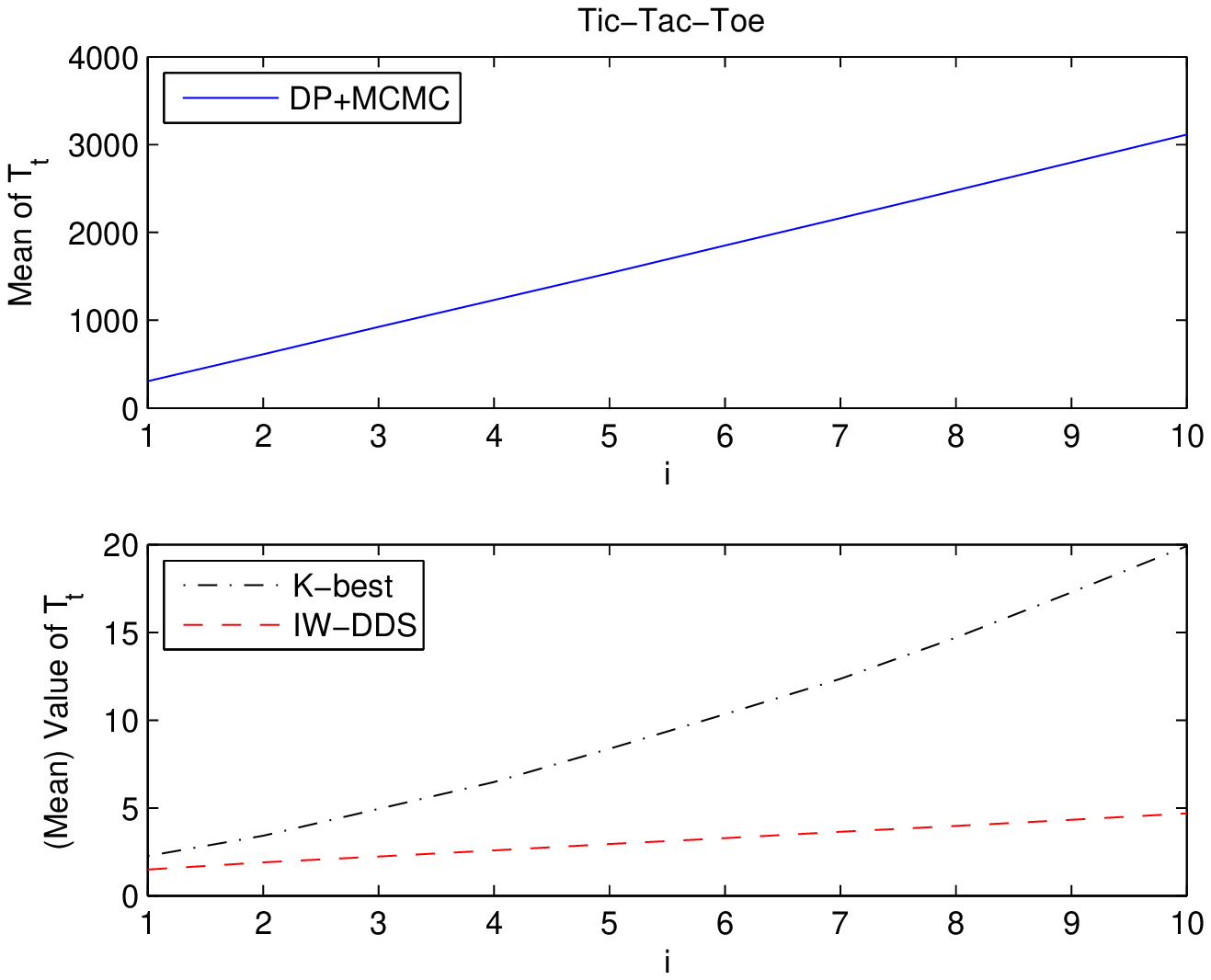}
  \caption{Plot of the Total Running Time of the DP+MCMC, the $K$-best and the IW-DDS for Tic-Tac-Toe}
  \label{fig-plot_DPMCMC_Kbest_IWDDS_Time_Tic_Tac_Toe}
\end{minipage}

\end{figure}

\begin{figure}
\centering

\begin{minipage}[b]{0.46\linewidth}
  \includegraphics[width=1 \linewidth]{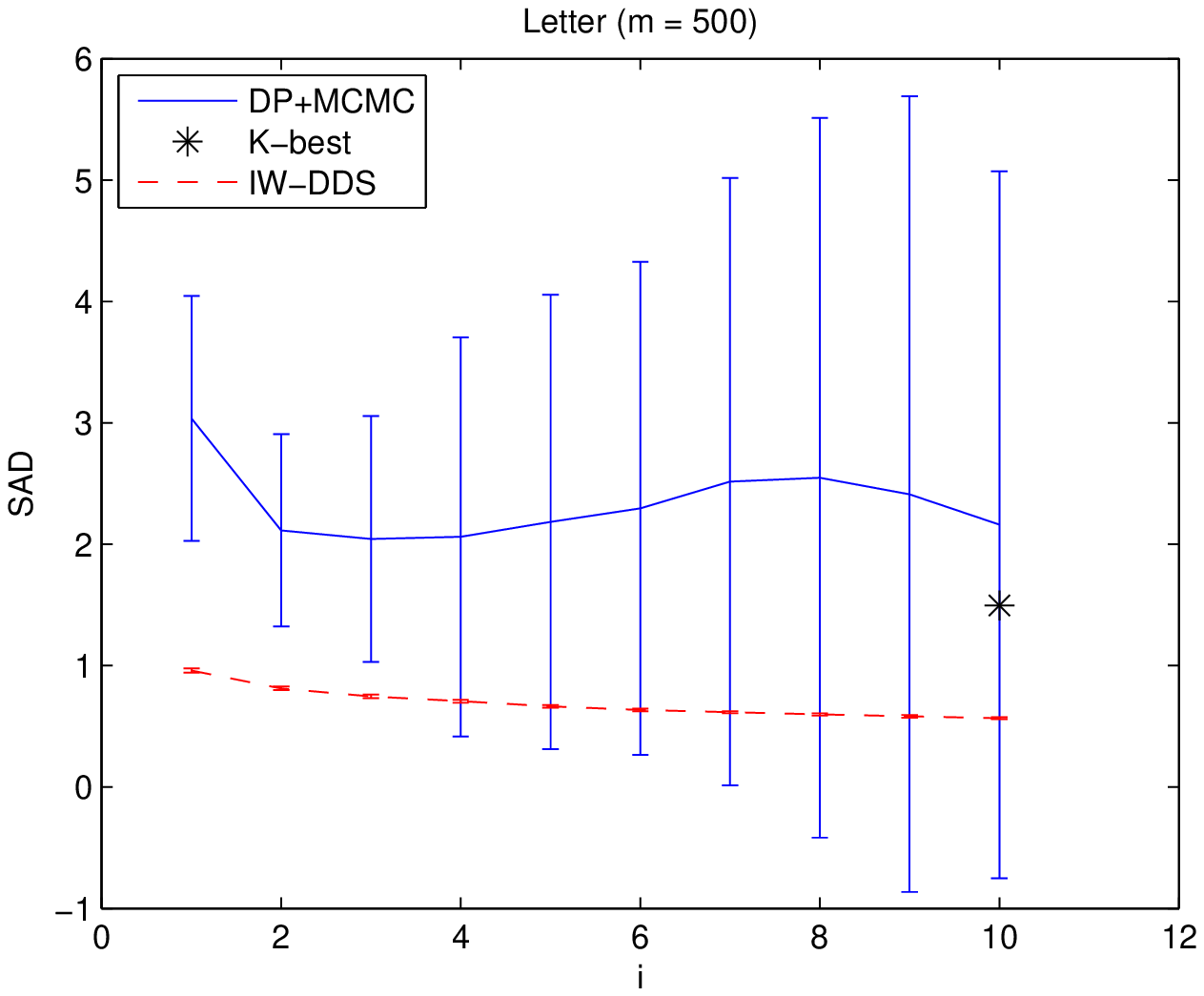}
  \caption{Plot of the SAD Performance of the DP+MCMC, the $K$-best and the IW-DDS for Letter ($m = 500$)}
  \label{fig-errorbar_DPMCMC_Kbest_IWDDS_SAD_Letter_m500}
\end{minipage}
\qquad
\begin{minipage}[b]{0.46\linewidth}
  \includegraphics[width=1 \linewidth]{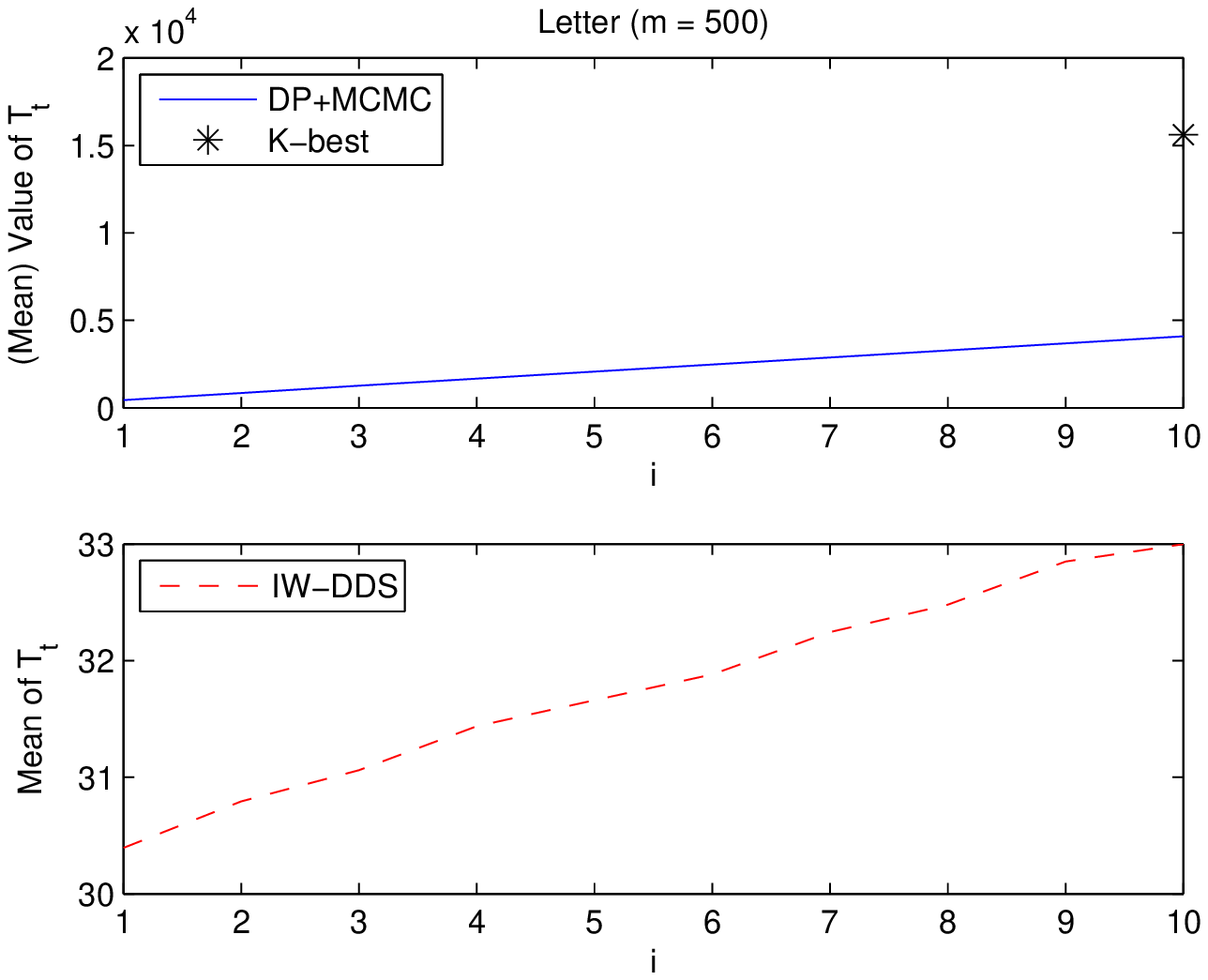}
  \caption{Plot of the Total Running Time of the DP+MCMC, the $K$-best and the IW-DDS for Letter ($m = 500$)}
  \label{fig-plot_DPMCMC_Kbest_IWDDS_Time_Letter_m500}
\end{minipage}

\end{figure}

\subsection{Learning Performance of Non-modular Features}
\label{sec-expr-path}
In Sections \ref{sec-expr-DDS} and \ref{sec-expr-IWDDS}
we did not provide experimental results on the learning performance of non-modular features.
We did not do so in Section \ref{sec-expr-IWDDS}
because there is no known method to compute the true/exact posterior probability of any non-modular feature $p_{\nprec}(f|D)$
except by the brute force enumeration over all the (super-exponential number of) DAGs  
so that the quality of the corresponding $\hat{p}_{\nprec}(f|D)$
learned from any approximate method
cannot be precisely measured.
We did not do so in Section \ref{sec-expr-DDS}
because the current PO-MCMC tool (BEANDisco)
only supports the estimation of the posterior of an edge feature
so that the comparison of our method and the PO-MCMC can only be made for the edge feature.
(Thus, we did not make the comparison for the  path feature (which is one particular non-modular feature),
though the DP algorithm
of \citet{Parviainen:ECML2011}
can compute the exact
posterior of a path feature
$p_{\prec}(f |D)$.)
Our idea is that by showing that
our algorithms have significantly better performance in computing fundamental structural features
(directed edge features),
which
should be
due to the better quality of our DAG samples with respect to
the corresponding $p_{\nprec}(G|D)$ or $p_{\prec}(G|D)$,
we expect that they will also be superior in computing other complicated structural features
using the same set of DAG samples.

To verify our expectation, 
we performed the experiments on the real data set ``Iris''
(with $n = 5$)
from the UCI Machine
Learning Repository
\citep{asuncion:new07}
and the well-studied data set ``Coronary'' (Coronary Heart Disease)
(with $n = 6$)
\citep{edwards:2000}.
Since $n$ is small, by enumerating all the DAGs, we were able to compute $p_{\nprec}(f|D)$,
the true posterior probability for
any interesting non-modular feature $f$.
For the demonstration purpose,
we investigated the following five interesting non-modular features.
$f_1$, a directed path feature from node $x$ to node $y$, denoted by $x \sim > y$,
represents the situation that variable $x$ eventually influences variable $y$.
$f_2$, a limited-length directed path feature $x \sim > y$ that has its path length no more than $2$,
represents that variable $x$ can influence variable $y$ via at most one intermediate variable.
$f_3$, a combined path feature $x \sim > y \sim > z$,
can be interpreted as the situation that variable $x$ eventually influences variable $y$ which in turn eventually influences variable $z$.
$f_4$, a combined path feature $y < \sim x \sim > z$ with $y \neq z$,
means that variable $x$ eventually influences both variable $y$ and variable $z$.
$f_5$, a combined path feature $y < \sim x \nsim > z$ with $x \neq z$,
represents that variable $x$ eventually influences variable $y$ but not variable $z$.
Then we compared the SAD performance of the (directed) edge feature
with the corresponding SAD performance \footnote{
More specifically,
SAD =
($\sum_{x y} | p_{\nprec}( x \sim > y | D)  - \hat{p}_{\nprec}(x \sim > y | D)|$)
for the path feature
$x \sim > y$;
SAD =
($\sum_{x y} | p_{\nprec}( x \sim > y | D)  - \hat{p}_{\nprec}(x \sim > y | D)|$)
for the path feature
$x \sim > y$
whose length is no more than $2$;
SAD =
($\sum_{x y z} | p_{\nprec}(x \sim > y \sim > z | D)  - \hat{p}_{\nprec}(x \sim > y \sim > z | D)|$)
for the combined feature
$x \sim > y \sim > z$;
SAD =
($\sum_{x y z, y \neq z} | p_{\nprec}(y < \sim x \sim > z | D) - \hat{p}_{\nprec}(y < \sim x \sim > z | D)|$)
for the combined feature
$y < \sim x \sim > z$ with $y \neq z$;
SAD =
($\sum_{x y z, x \neq z} | p_{\nprec}(y < \sim x \nsim > z | D) - \hat{p}_{\nprec}(y < \sim x \nsim > z | D)|$)
for the combined feature
$y < \sim x \nsim > z$ with $x \neq z$.
}
of each feature $f_j$ ($j \in \{1, 2, 3, 4, 5 \}$)
from the DP+MCMC, the $K$-best and the IW-DDS.
The experimental results on both data sets show that
if the SAD of the IW-DDS is significantly smaller than the SAD
of the competing method
(the DP+MCMC or the $K$-best)
for the edge feature,
then the SAD of the IW-DDS will also be
significantly smaller
than the SAD
of the competing method for each investigated non-modular feature $f_j$ using the same set of DAG samples.
Thus, our expectation is supported by the experiments.
The detailed experimental results are as follows.

\begin{figure}
\centering

\begin{minipage}[b]{0.46\linewidth}
  \includegraphics[width=1 \linewidth]{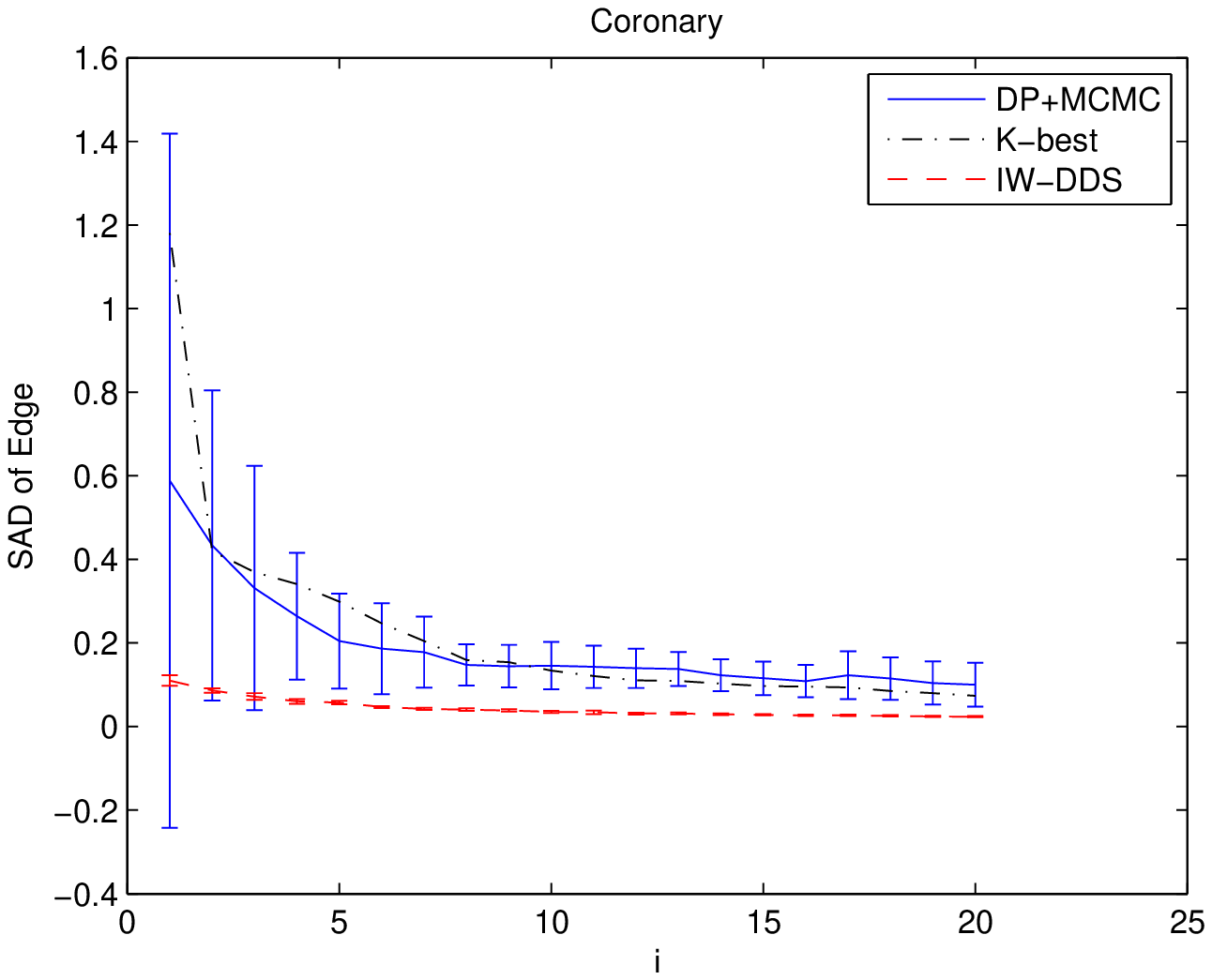}
  \caption{SAD of the Learned Edge Features for Coronary}
  \label{fig-SAD_edge_coronary_perm}
\end{minipage}
\qquad
\begin{minipage}[b]{0.46\linewidth}
  \includegraphics[width=1 \linewidth]{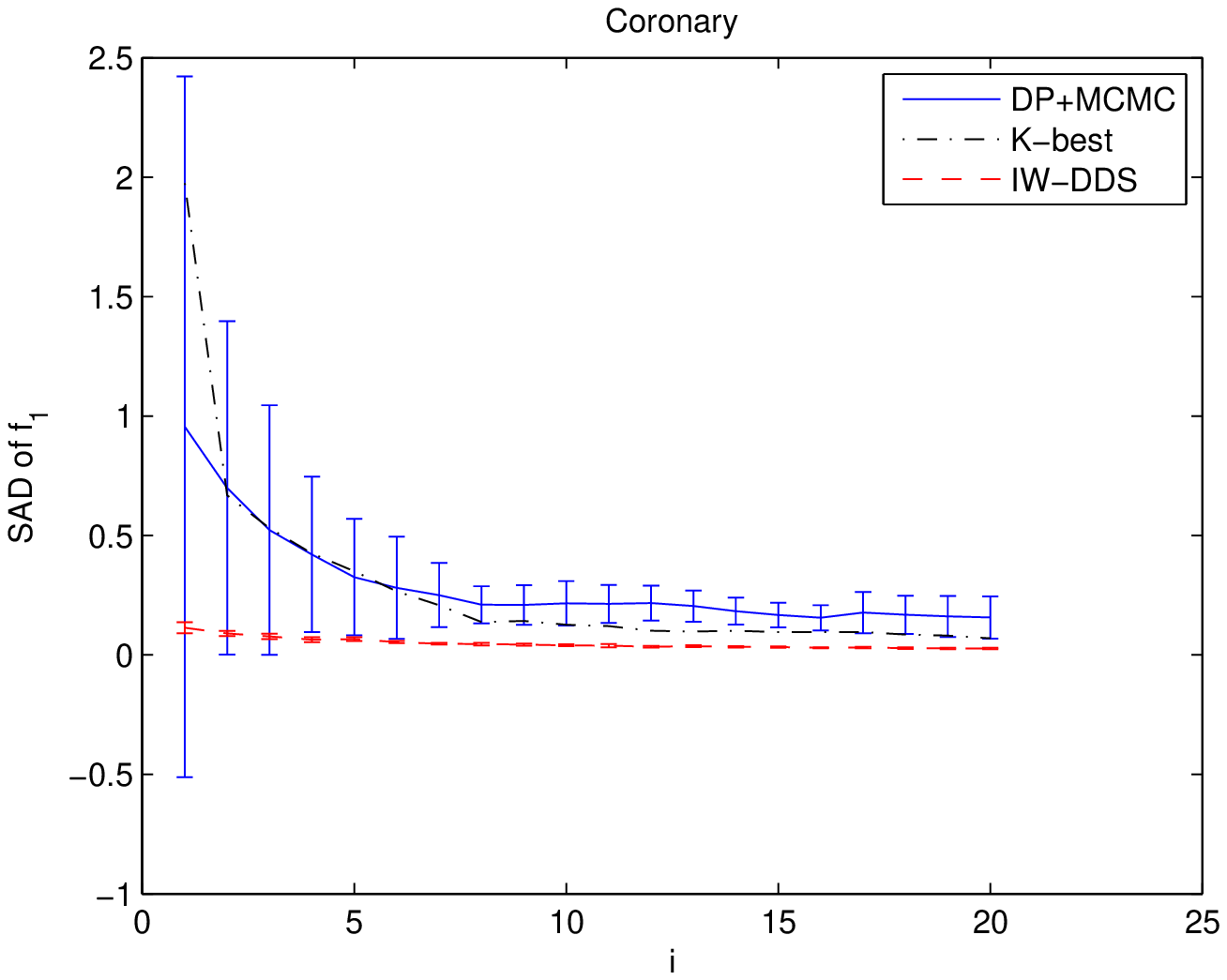}
  \caption{SAD of the Learned $f_1$ Features for Coronary}
  \label{fig-SAD_path_coronary_perm}
\end{minipage}

\qquad

\begin{minipage}[b]{0.46\linewidth}
  \includegraphics[width=1 \linewidth]{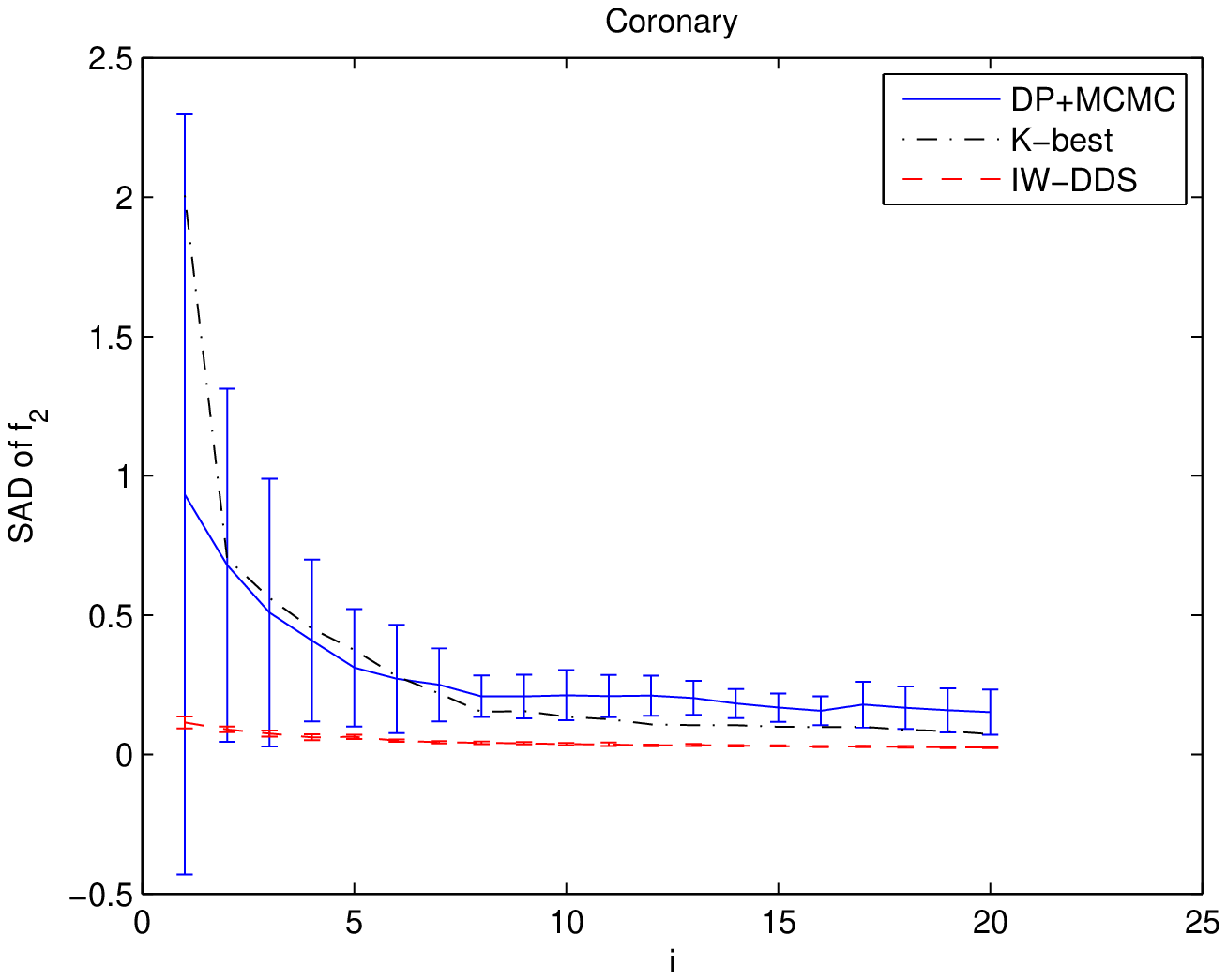}
  \caption{SAD of the Learned $f_2$ Features for Coronary}
  \label{fig-SAD_limited_leng_path_coronary_perm}
\end{minipage}
\qquad
\begin{minipage}[b]{0.46\linewidth}
  \includegraphics[width=1 \linewidth]{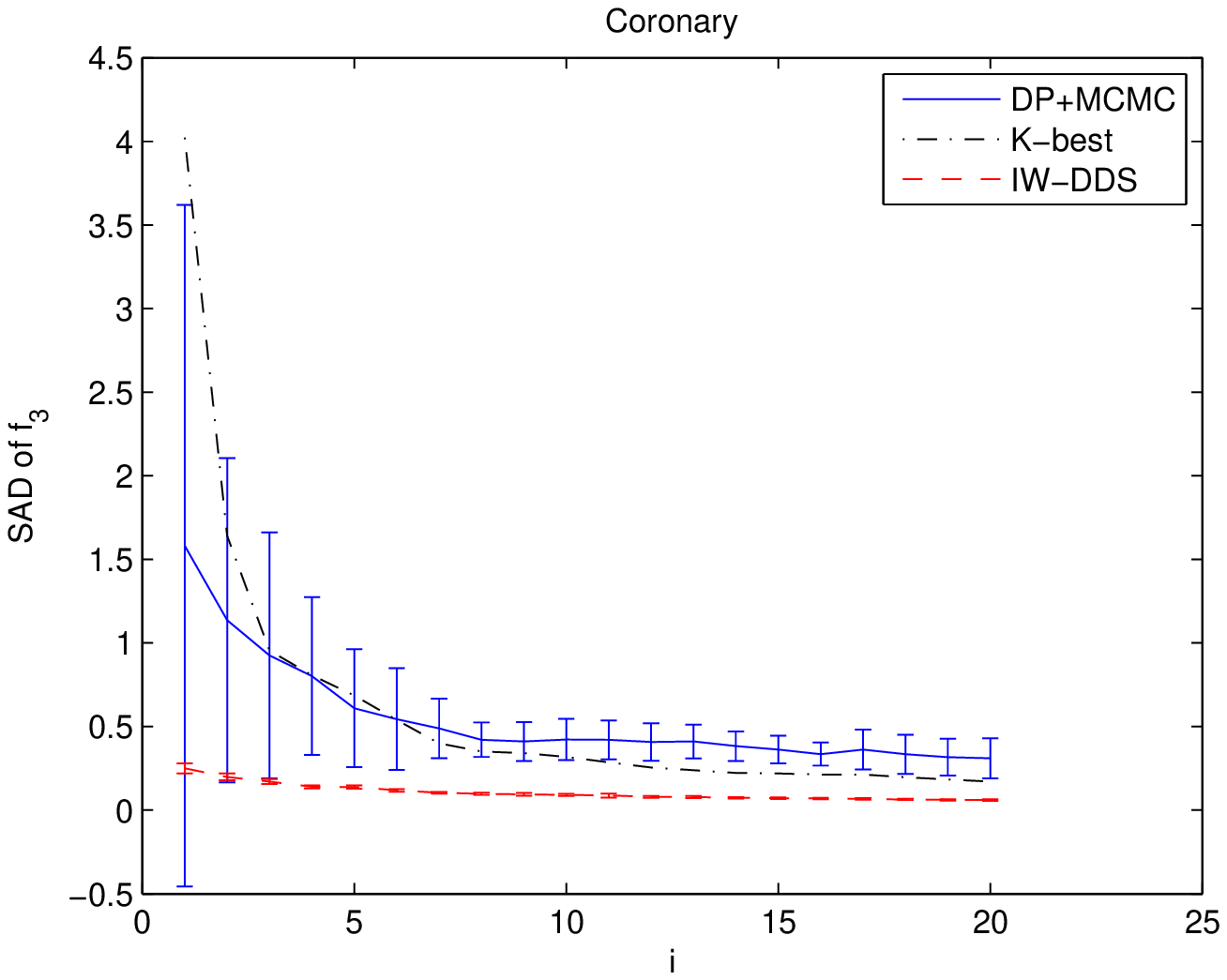}
  \caption{SAD of the Learned $f_3$ Features for Coronary}
  \label{fig-SAD_combined_two_path_coronary_perm}
\end{minipage}

\qquad

\begin{minipage}[b]{0.46\linewidth}
  \includegraphics[width=1 \linewidth]{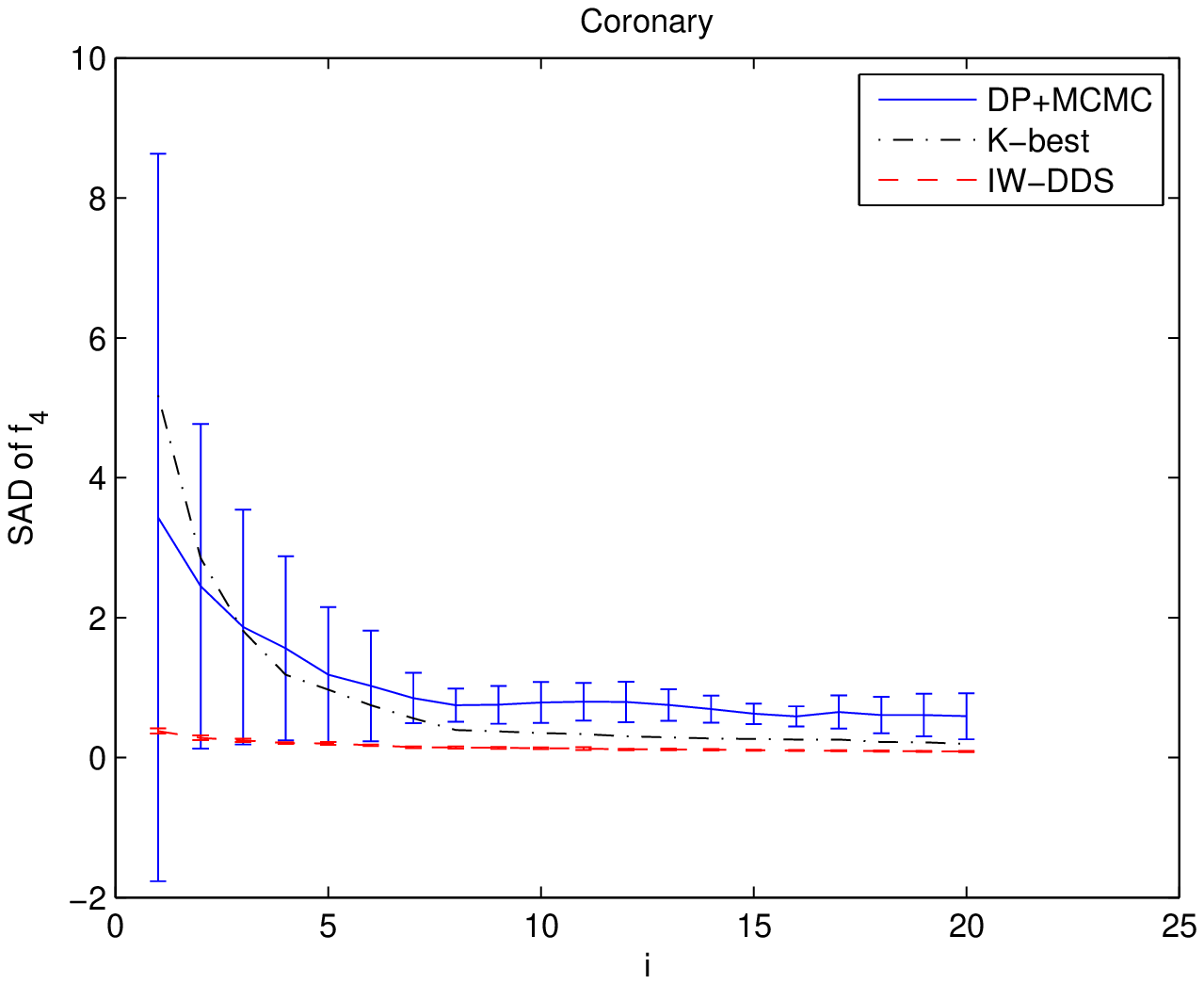}
  \caption{SAD of the Learned $f_4$ Features for Coronary}
  \label{fig-SAD_combined_two_f4_path_coronary_perm}
\end{minipage}
\qquad
\begin{minipage}[b]{0.46\linewidth}
  \includegraphics[width=1 \linewidth]{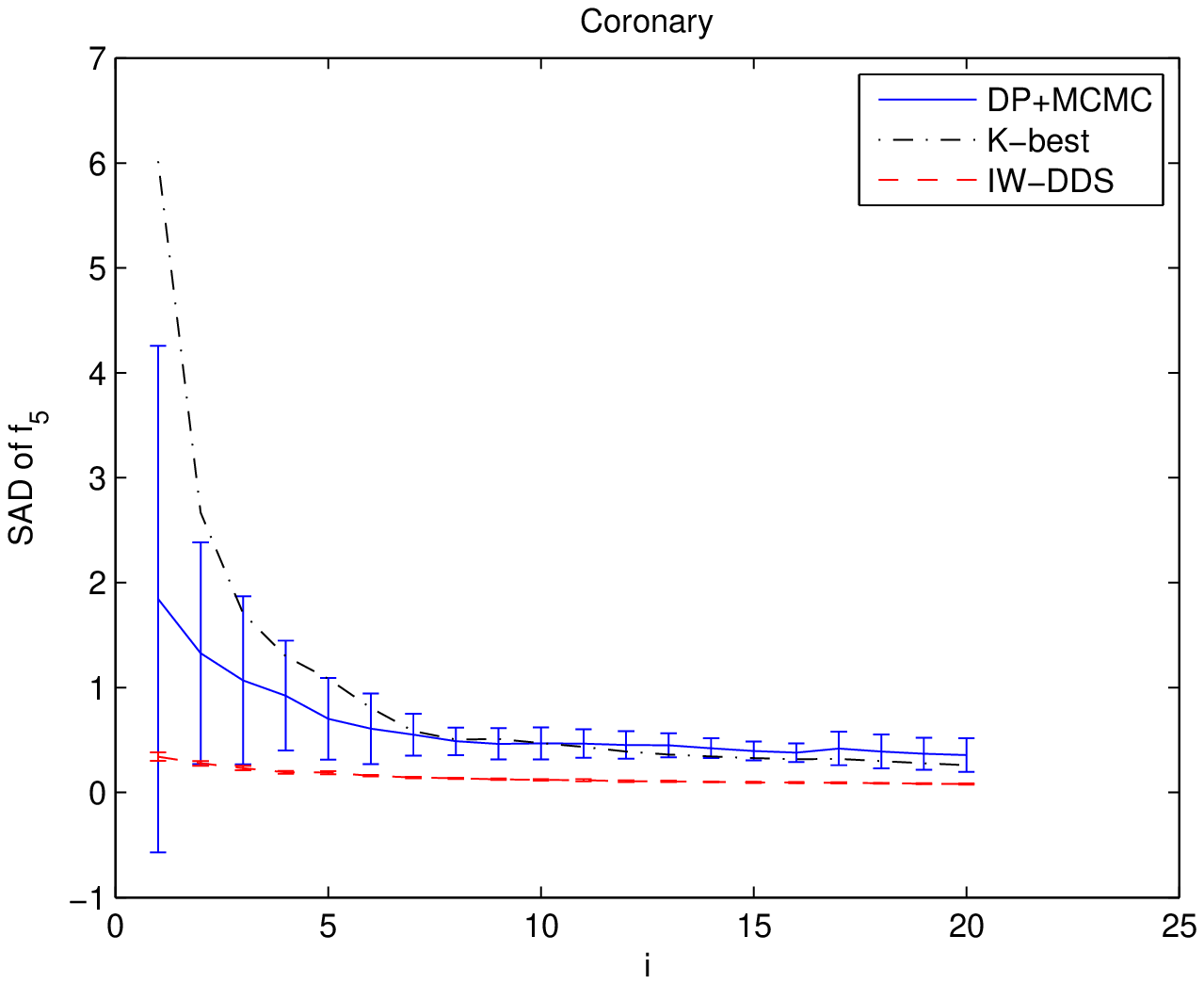}
  \caption{SAD of the Learned $f_5$ Features for Coronary}
  \label{fig-SAD_combined_two_f5_path_coronary_perm}
\end{minipage}

\end{figure}

\begin{figure}
\centering

\begin{minipage}[b]{0.46\linewidth}
  \includegraphics[width=1 \linewidth]{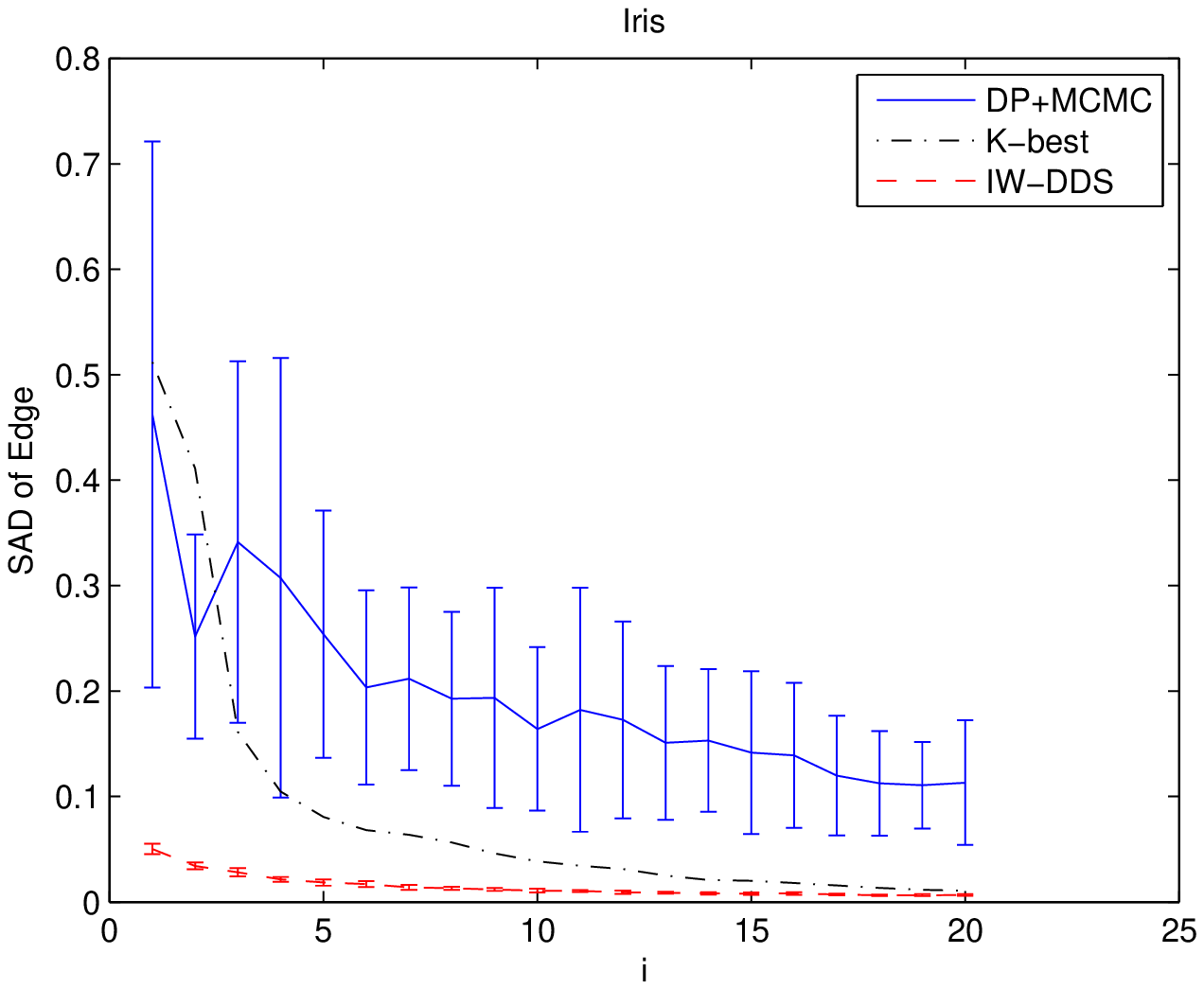}
  \caption{SAD of the Learned Edge Features for Iris}
  \label{fig-SAD_edge_iris_samp150}
\end{minipage}
\qquad
\begin{minipage}[b]{0.46\linewidth}
  \includegraphics[width=1 \linewidth]{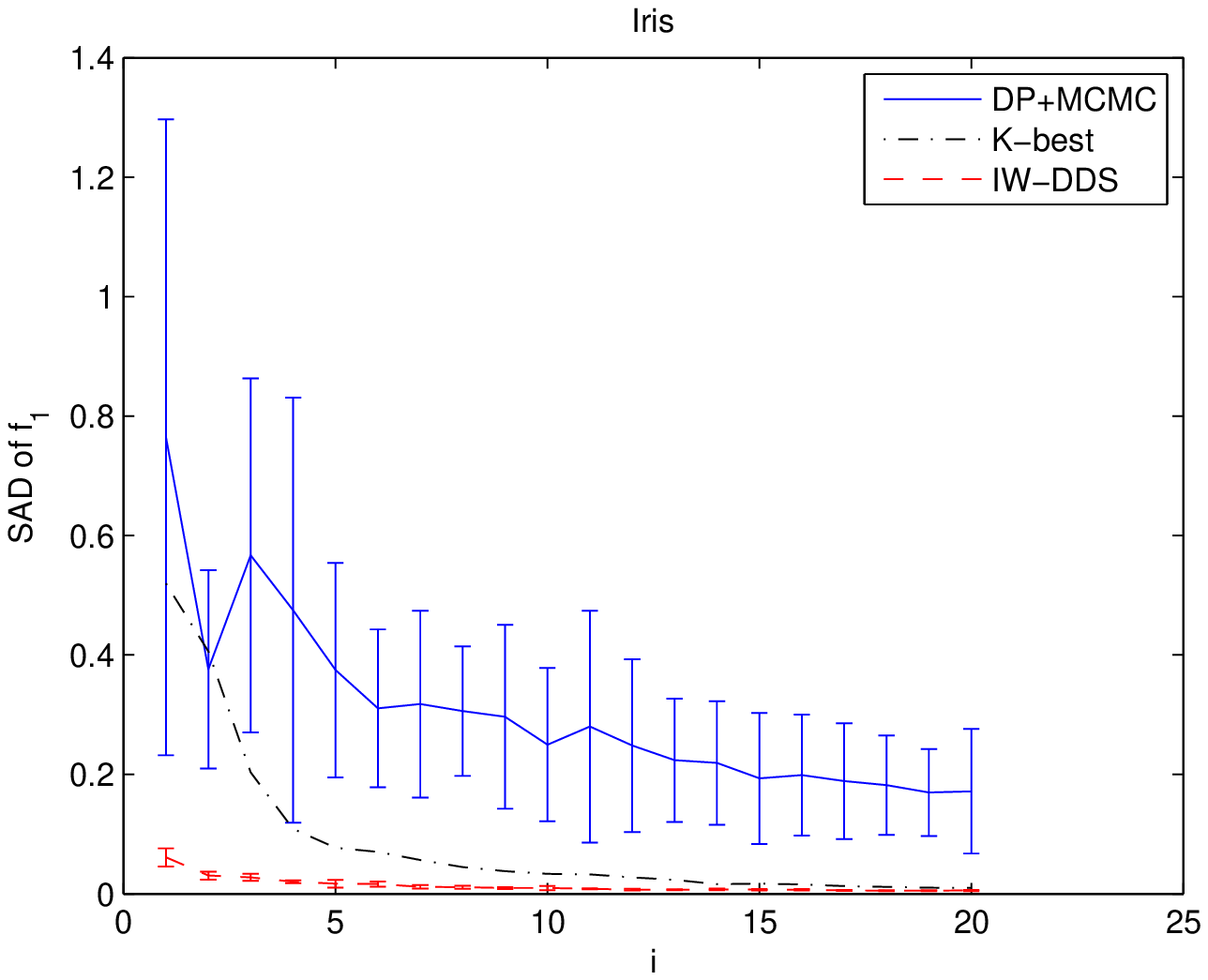}
  \caption{SAD of the Learned $f_1$ Features for Iris}
  \label{fig-SAD_path_iris_samp150}
\end{minipage}

\qquad

\begin{minipage}[b]{0.46\linewidth}
  \includegraphics[width=1 \linewidth]{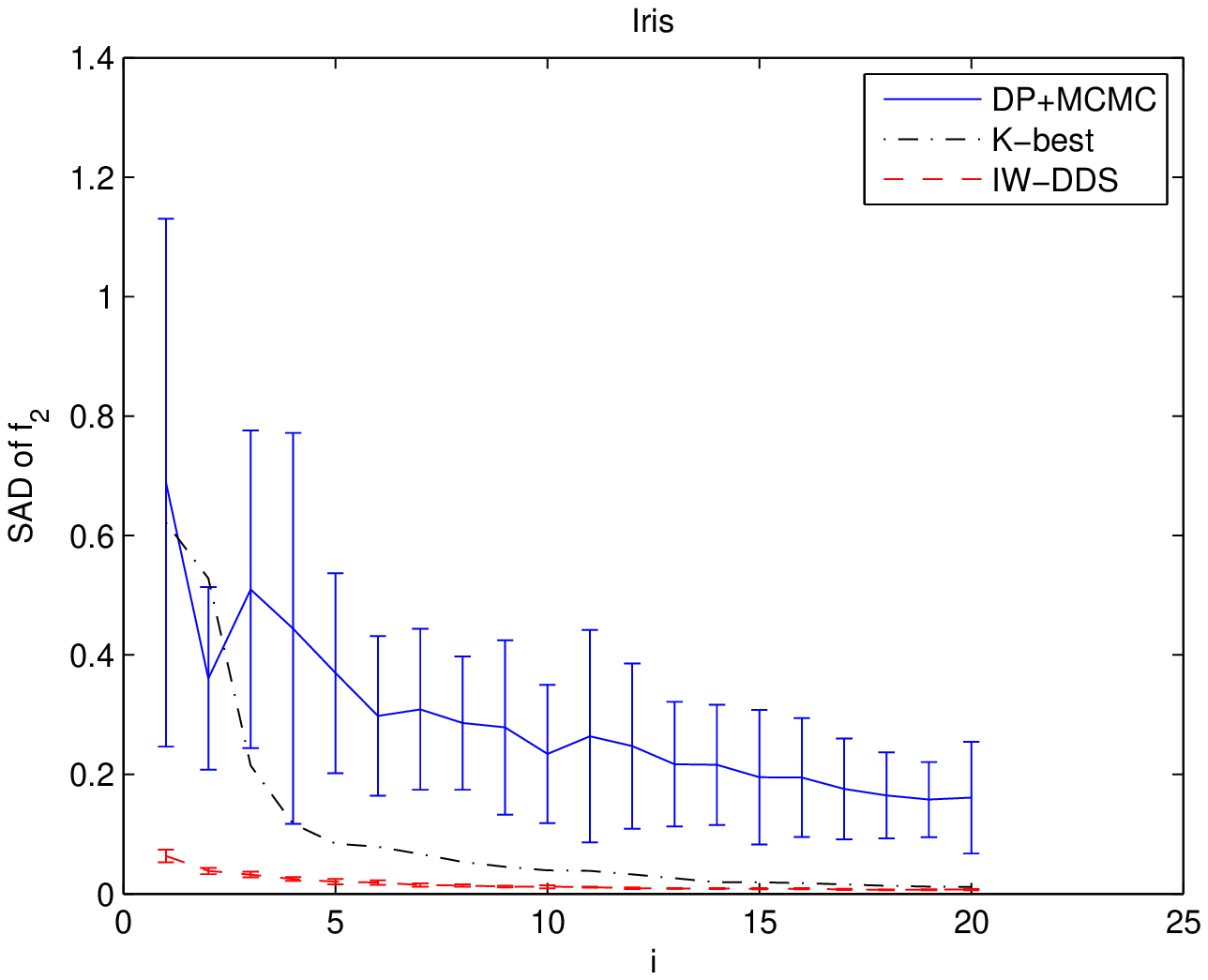}
  \caption{SAD of the Learned $f_2$ Features for Iris}
  \label{fig-SAD_limited_leng_path_iris_samp150}
\end{minipage}
\qquad
\begin{minipage}[b]{0.46\linewidth}
  \includegraphics[width=1 \linewidth]{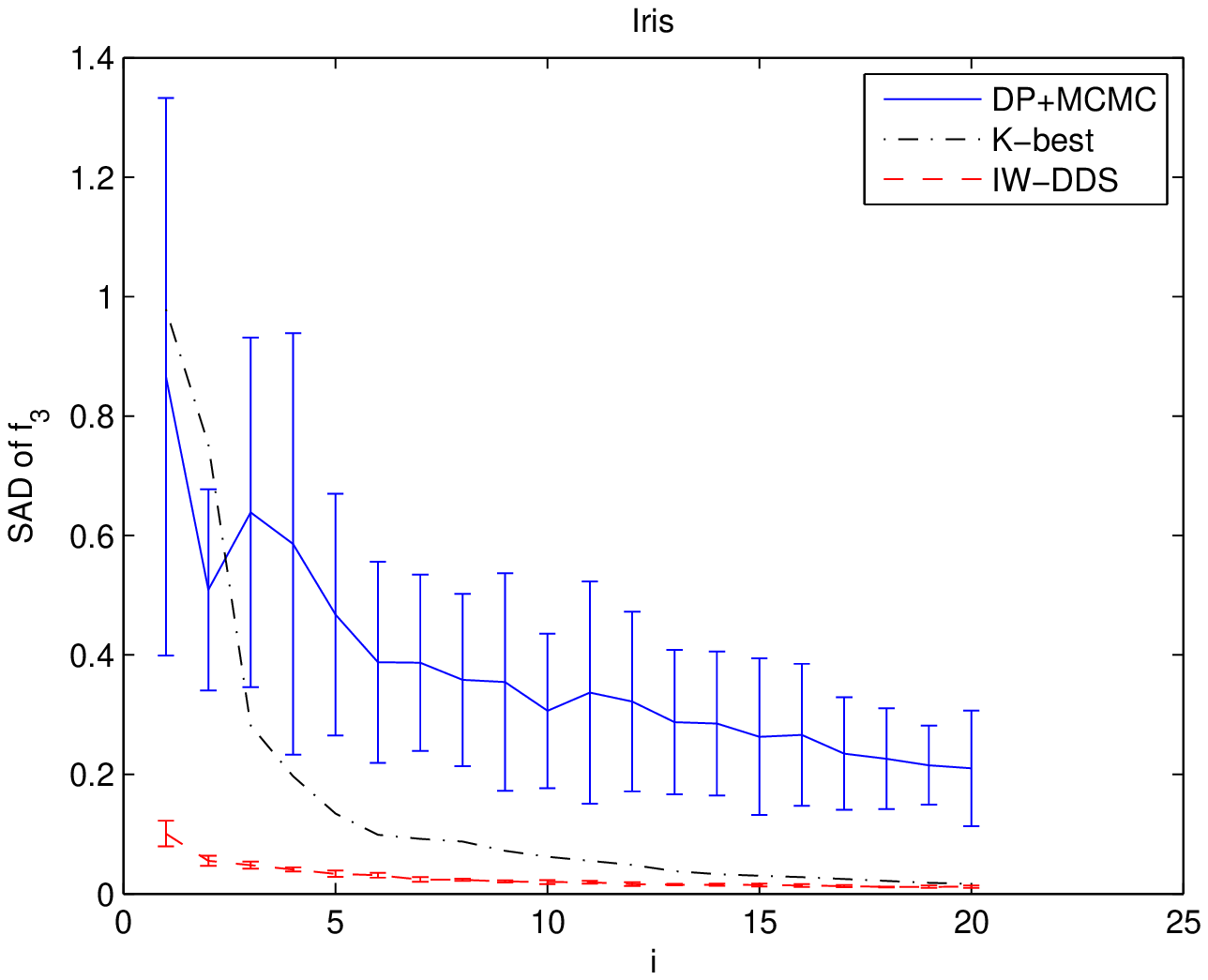}
  \caption{SAD of the Learned $f_3$ Features for Iris}
  \label{fig-SAD_combined_two_path_iris_samp150}
\end{minipage}

\qquad

\begin{minipage}[b]{0.46\linewidth}
  \includegraphics[width=1 \linewidth]{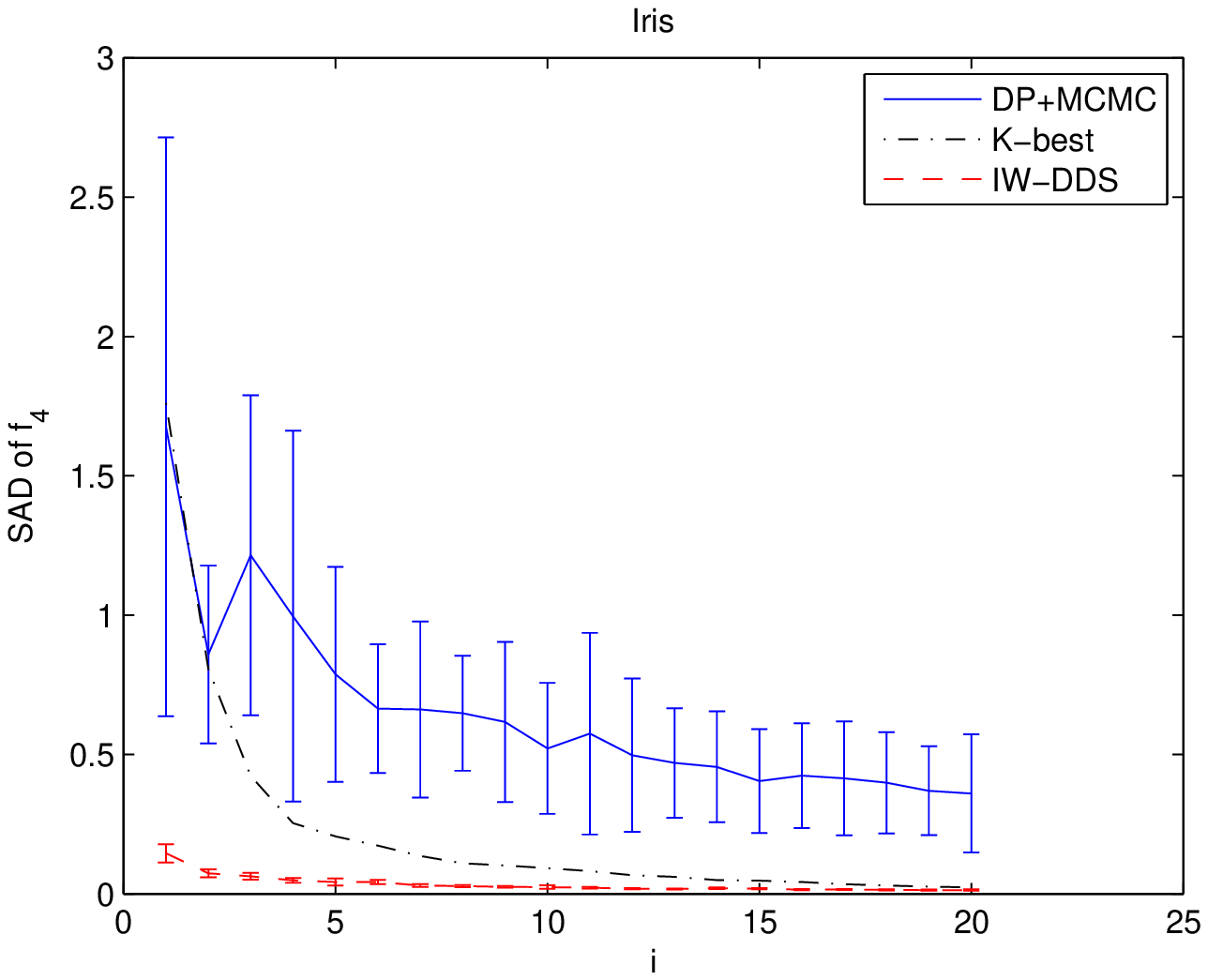}
  \caption{SAD of the Learned $f_4$ Features for Iris}
  \label{fig-SAD_combined_two_f4_path_iris_samp150}
\end{minipage}
\qquad
\begin{minipage}[b]{0.46\linewidth}
  \includegraphics[width=1 \linewidth]{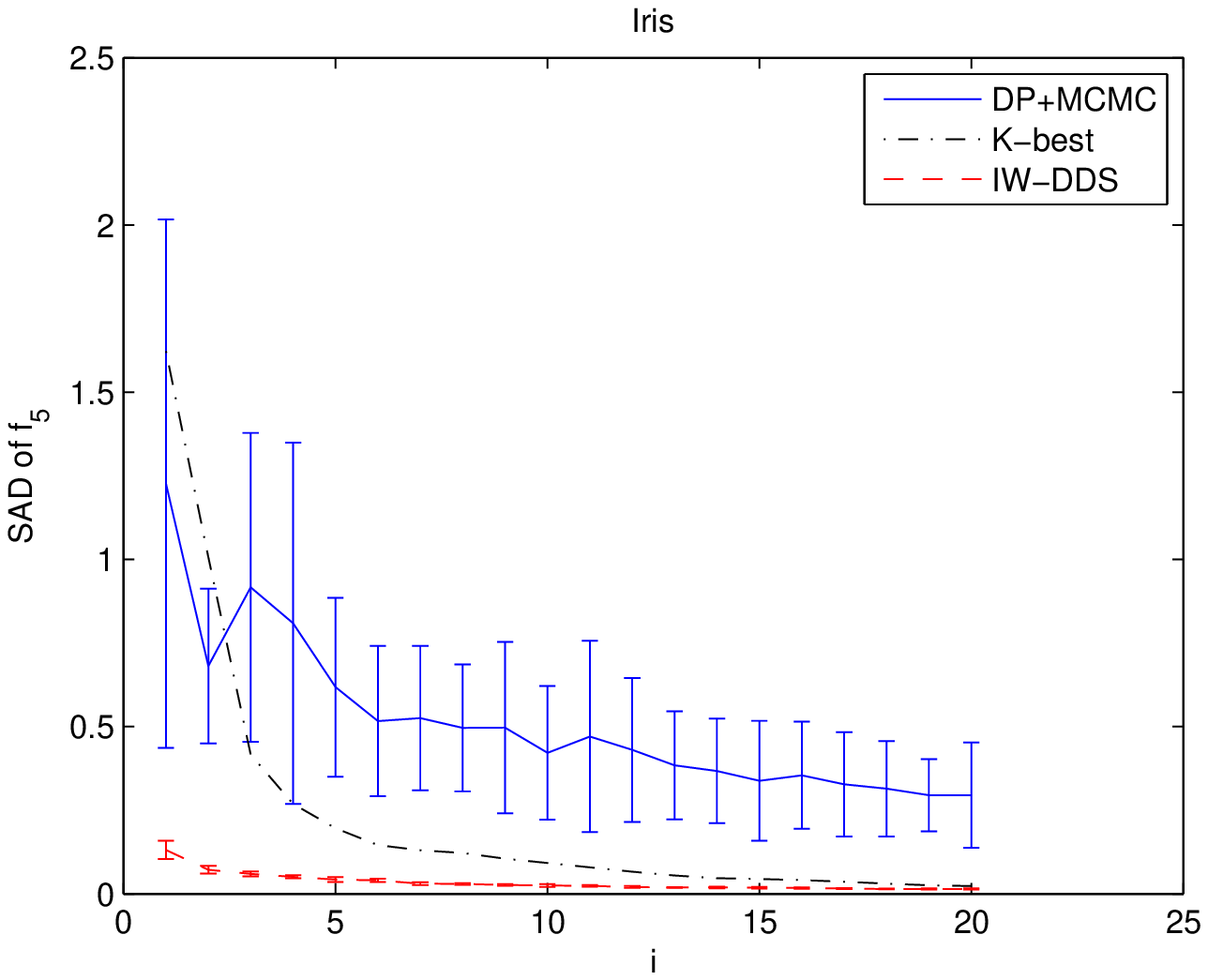}
  \caption{SAD of the Learned $f_5$ Features for Iris}
  \label{fig-SAD_combined_two_f5_path_iris_samp150}
\end{minipage}

\end{figure}

The following
is our experimental design for
the data set Coronary  with $n = 6$ and $m = 1841$.
For the IW-DDS,
we tried the sample size 
$N_o$ $ = 2,500 \cdot i$,
where $i \in \{1, 2, \ldots, 20 \}$.
For each $i$, we independently ran the IW-DDS $20$ times to get the sample mean and the sample standard deviation of SAD
for the (directed) edge feature and the five non-modular features ($f_1$, $f_2$, $f_3$, $f_4$ and $f_5$).
For the DP+MCMC, we ran $25,000 \cdot i$ MCMC iterations, where $i \in \{1, 2, \ldots, 20 \}$.
For each $i$, we discarded the first $12,500 \cdot i$ MCMC iterations for ``burn-in''
and set the thinning parameter to be $5$
so that $2,500 \cdot i$ DAGs got sampled.
For each $i$, we independently ran the MCMC $20$ times to get the sample mean and the sample standard deviation of SAD
for the edge feature, $f_1$, $f_2$, $f_3$, $f_4$ and $f_5$.
For the $K$-best,
we ran the $K$-best with $K = 10 \cdot i$, where $i \in \{1, 2, \ldots, 20 \}$.
For each $i$, we ran the $K$-best just once to get SAD
for the edge feature, $f_1$, $f_2$, $f_3$, $f_4$ and $f_5$ since
there is no randomness in
the outcome of the $K$-best algorithm.
\footnote{
The purpose of the experimental setting for the DP+MCMC and the $K$-best is merely to testify
the claimed ``if-then'' conditional statement:
if the SAD performance from the IW-DDS is significantly better than the SAD performance from the competing method for an edge feature,
then the SAD performance from the IW-DDS will also be significantly better than the one from the competing method for each investigated non-modular feature using the same set of DAG samples.
}

The experimental results for the data set Coronary are demonstrated
from Figure \ref{fig-SAD_edge_coronary_perm} to Figure \ref{fig-SAD_combined_two_f5_path_coronary_perm}.
Figure
\ref{fig-SAD_edge_coronary_perm}
shows the SAD performance of the three methods with each $i$ for the edge feature,
where an error bar represents
one sample standard deviation across $20$ runs for the DP+MCMC or the IW-DDS
at each $i$.
Correspondingly,
Figure \ref{fig-SAD_path_coronary_perm} to Figure \ref{fig-SAD_combined_two_f5_path_coronary_perm}
show the SAD performance of the three methods with each $i$ for the five investigated non-modular features ($f_1$, $f_2$, $f_3$, $f_4$ and $f_5$)
respectively.
Combining
Figure \ref{fig-SAD_edge_coronary_perm} and
each of Figures
\ref{fig-SAD_path_coronary_perm},
\ref{fig-SAD_limited_leng_path_coronary_perm},
\ref{fig-SAD_combined_two_path_coronary_perm},
\ref{fig-SAD_combined_two_f4_path_coronary_perm},
and \ref{fig-SAD_combined_two_f5_path_coronary_perm},
one can clearly see that
if the SAD of the IW-DDS is significantly smaller than
the SAD of the competing method
(the DP+MCMC or the $K$-best)
for the edge feature,
then the SAD of the IW-DDS will also be
significantly
smaller than
the SAD of the competing method
for each of the five investigated non-modular features.
More specifically,
comparing with the DP+MCMC,
at each $i \in \{1, 2, \ldots 20 \}$,
for the edge feature,
the real mean of SAD from the IW-DDS is significantly smaller than
the corresponding one from the DP+MCMC with the p-value $< 0.01$ from the two-sample $t$ test with unequal variances.
Consistently,
at each $i \in \{1, 2, \ldots 20 \}$,
for each investigated non-modular feature $f_j$,
the real mean of SAD from the IW-DDS is also significantly smaller than
the corresponding one from the DP+MCMC with the p-value $< 0.01$ from the same $t$ test.
Comparing with the $K$-best,
at each $i \in \{1, 2, \ldots 20 \}$,
for the edge feature,
the real mean of SAD from the IW-DDS is significantly smaller than
the SAD from the $K$-best with the p-value $< 1 \times 10^{-20}$ from the one-sample $t$ test.
Consistently,
at each $i \in \{1, 2, \ldots 20 \}$,
for each investigated non-modular feature $f_j$,
the real mean of SAD from the IW-DDS is also significantly smaller than
the SAD from the $K$-best with the p-value $< 1 \times 10^{-20}$ from the same $t$ test.

We also performed the same kind of experiments for
the data set Iris  with $n = 5$ and $m = 150$.
The results are
demonstrated from
Figure \ref{fig-SAD_edge_iris_samp150}
to Figure \ref{fig-SAD_combined_two_f5_path_iris_samp150}.
Comparing with the DP+MCMC,
at each $i \in \{1, 2, \ldots 20 \}$,
just as the comparison result for the edge feature,
for each investigated $f_j$,
the real mean of SAD from the IW-DDS is also significantly smaller than
the corresponding one from the DP+MCMC with the p-value $< 1 \times 10^{-5}$ from the two-sample $t$ test with unequal variances.
Comparing with the $K$-best,
at each $i \in \{1, 2, \ldots 20 \}$,
just as the comparison result for the edge feature,
for each investigated $f_j$,
the real mean of SAD from the IW-DDS is also significantly smaller than
the SAD from the $K$-best with the p-value $< 1 \times 10^{-9}$ from the one-sample $t$ test.
Thus, a conclusion similar to the one from Coronary can be drawn:
if the SAD of the IW-DDS is significantly smaller than
the SAD of the competing method for the edge feature,
then the SAD of the IW-DDS will also be
significantly smaller than
the SAD of the competing method for each of the five investigated non-modular features.

\subsection{Performance Guarantee for the DDS Algorithm}
To testify the
quality guarantee
for the estimator based on the DDS algorithm (Corollary \ref{coro-1} (iv)),
we performed experiments based on two data cases (Letter with $m = 100$ and Tic-Tac-Toe),
which have relatively large $\hat{\mu}$(SAD) or $\hat{\mu}$(MAD) ( $= \hat{\mu}$(SAD) $ /(n(n-1))$ ) from the DDS algorithm shown in Table \ref{tb:bias_comparison_sad_1}.
Based on the hypothesis testing approach, we can conclude with very strong evidence
that the performance guarantee for our estimator holds
for both data cases.
The details of the experiments are as follows.


For the first set of experiments,
we choose the data case Letter with $m = 100$,
which has the largest $\hat{\mu}$(SAD) ($= 0.2948$) from the DDS among all the 33 data cases shown in Table \ref{tb:bias_comparison_sad_1}.
We first 
consider the setting of the parameters
specified as $\epsilon$ $= 0.02$ and $\delta$ $= 0.05$,
which serves as our performance requirement.
By setting the DAG sample size $N_o$
$= \lceil (\ln (2/\delta)) / (2\epsilon^2) \rceil$ $= 4612$,
we intend to show that the estimator $\hat{p}_{\prec}(f|D)$ coming from our DDS
has the performance guarantee such that
the Hoeffding inequality $p( |\hat{p}_{\prec}(f|D) - p_{\prec}(f|D)| \geq \epsilon) \leq \delta$ holds.
Each directed edge feature $f$ is investigated here since
the posterior of each edge $p_{\prec}(f|D)$ can be easily obtained by using the DP algorithm
of \citet{Koivisto:06}.
For each edge $f$,
we call the event of $ |\hat{p}_{\prec}(f|D) - p_{\prec}(f|D)| \geq \epsilon$ as the event of
violation (of the pre-specified estimation error bound)
in
the learning of $f$.
Define the indication variable $W$ for the event of violation in
the learning of $f$.
Thus, $W$ is a Bernoulli random variable with the success probability
$p_{vio}$ $= p( |\hat{p}_{\prec}(f|D) - p_{\prec}(f|D)| \geq \epsilon)$.
We independently repeat the DDS algorithm (with the same $N_o$) $R = 400$ times
and use the average of $W$ as the estimator $\hat{p}_{vio}$ for each edge.
Note that the mean of $\hat{p}_{vio}$ is $p_{vio}$ and the variance of $\hat{p}_{vio}$ is $p_{vio} (1 - p_{vio}) / R$
because $R \hat{p}_{vio}$
has a binomial distribution with the trial number $R$ and success probability $p_{vio}$.
Since we expect that $p_{vio}$ will be small
so that $p_{vio} (1 - p_{vio})$ will be large
relative to $p_{vio}$,
we choose large $R = 400$ to make the variance of $\hat{p}_{vio}$ relatively small with respect to the mean of $\hat{p}_{vio}$.
Figure \ref{fig-Hoeffding_Letter_No_4612} shows the histogram of $\hat{p}_{vio}$ for each of all the $n(n-1)$ $= 272$ directed edges.
For each of the $272$ edges,
it can be clearly seen that the corresponding $\hat{p}_{vio}$ is much smaller than $\delta = 0.05$ marked by the vertical bar.
For $240$ out of the $272$ edges,  the corresponding $\hat{p}_{vio}$'s are exactly equal to $0$.
Even for the largest $\hat{p}_{vio}$ $ = 0.015$, corresponding to $6$ successes among $400$ trials,
we can use the one-sided hypothesis testing to reject the null hypothesis that $p_{vio}$ $ \geq 0.05$ and
to conclude that $p_{vio}$ $ < 0.05$ with the p-value
less than $2 \times 10^{-4}$.
Therefore, the Hoeffding inequality holds for the learning of each edge in this parameter setting.

Next, we consider another setting of the parameters with $\epsilon$ $= 0.01$ and $\delta$ $= 0.02$,
which has more demanding performance requirement.
By setting the DAG sample size $N_o$
$= \lceil (\ln (2/\delta)) / (2\epsilon^2) \rceil$ $= 23026$,
we want to show that the estimator $\hat{p}_{\prec}(f|D)$ coming from our DDS
has the performance guarantee satisfying the Hoeffding inequality.
With the same logic,
we independently repeat the DDS algorithm (with the same $N_o$) $R = 1250$ times
and use the average of $W$ as the estimator $\hat{p}_{vio}$ for each edge.
(Here we choose even larger $R$ since we expect that $p_{vio}$ will be smaller in
this parameter setting.)
Figure \ref{fig-Hoeffding_Letter_No_23026} shows the histogram of $\hat{p}_{vio}$
for each of all the $272$ directed edges.
For each edge,
it can be clearly seen that the corresponding $\hat{p}_{vio}$ is much smaller than $\delta = 0.02$.
Even for the largest $\hat{p}_{vio}$ $ = 0.004$, corresponding to $5$ successes among $1250$ trials,
we can use the one-sided hypothesis testing to reject the null hypothesis that $p_{vio}$ $ \geq 0.02$ and
to conclude that $p_{vio}$ $ < 0.02$ with the p-value
less than $2 \times 10^{-6}$.
Therefore, the Hoeffding inequality also holds in this parameter setting.

For the second set of experiments,
we choose the data case Tic-Tac-Toe,
which has the largest $\hat{\mu}$(MAD)  ($= \hat{\mu}$(SAD) $/ (n(n-1)) $ $= 0.1547 / 90$) from the DDS among all the 33 data cases
shown in Table \ref{tb:bias_comparison_sad_1}.
The same kind of experiments are performed for this data case.
For the parameter setting with $\epsilon$ $= 0.02$ and $\delta$ $= 0.05$, the corresponding result is shown in Figure \ref{fig-Hoeffding_Tic_Tac_Toe_No_4612}.
For each of the $90$ edges, the corresponding $\hat{p}_{vio}$ is clearly much smaller than $\delta = 0.05$.
Even for the largest $\hat{p}_{vio}$ $ = 0.0125$, corresponding to $5$ successes among $400$ trials,
we can use the one-sided hypothesis testing to conclude that $p_{vio}$ $ < 0.05$ with the p-value
less than $6 \times 10^{-5}$.
For the parameter setting with $\epsilon$ $= 0.01$ and $\delta$ $= 0.02$, the corresponding result is shown in Figure \ref{fig-Hoeffding_Tic_Tac_Toe_No_23026}.
For each edge, the corresponding $\hat{p}_{vio}$ can be clearly seen to be much smaller than $\delta = 0.02$.
Even for the largest $\hat{p}_{vio}$ $ = 0.0056$, corresponding to $7$ successes among $1250$ trials, we can use the one-sided hypothesis testing to conclude that $p_{vio}$ $ < 0.02$ with the p-value
less than $3 \times 10^{-5}$.
Thus, the Hoeffding inequality also holds in the set of experiments for this data case.

Finally, for the data case Tic-Tac-Toe,
we fix $\epsilon$ $= 0.02$
but increase $N_o$ from $2,000$ to $10,000$ by an increment of $1,000$ each time.
For each $N_o$,
we plot the Hoeffding bound $2e^{-2 N_o \epsilon^2}$
for the probability of violation
$p_{vio}$ $= p( |\hat{p}_{\prec}(f|D) - p_{\prec}(f|D)| \geq \epsilon)$
in Figure~\ref{fig-Hoeffding_Tic_Tac_Toe_No_2000_10000}.
(Note that the Hoeffding bound decreases at an exponential rate as $N_o$ increases.)
Then for each $N_o$,
we also plot both the maximum and the mean of all the $n(n-1)$ $\hat{p}_{vio}$'s
in Figure~\ref{fig-Hoeffding_Tic_Tac_Toe_No_2000_10000}.
Again $\hat{p}_{vio}$ for each edge is the average of W
by independently running the DDS algorithm (with the same $N_o$) $R$ times.
We set $R = max\{400, \lceil 10 / (e^{-2 N_o \epsilon^2}) \rceil \}$
and use the larger $R$ for the larger $N_o$
since we expect that $\hat{p}_{vio}$ will be smaller for the larger $N_o$.
From Figure~\ref{fig-Hoeffding_Tic_Tac_Toe_No_2000_10000},
we can clearly see that
$\hat{p}_{vio}^*$,
the maximum of all the $n(n-1)$ $\hat{p}_{vio}$'s, is always far below the Hoeffding bound for each $N_o$.
Furthermore,
for each $N_o$
we can use the one-sided hypothesis testing to reject
the null hypothesis that $p_{vio}$ $ \geq 2e^{-2 N_o \epsilon^2}$ and
to conclude that $p_{vio}$ $ < 2e^{-2 N_o \epsilon^2}$ with the p-value
less than $3 \times 10^{-4}$.
Therefore, the Hoeffding inequality holds 
for each $N_o$.

\begin{figure}
\centering
\begin{minipage}[b]{0.46\linewidth}
  \includegraphics[width=1 \linewidth]{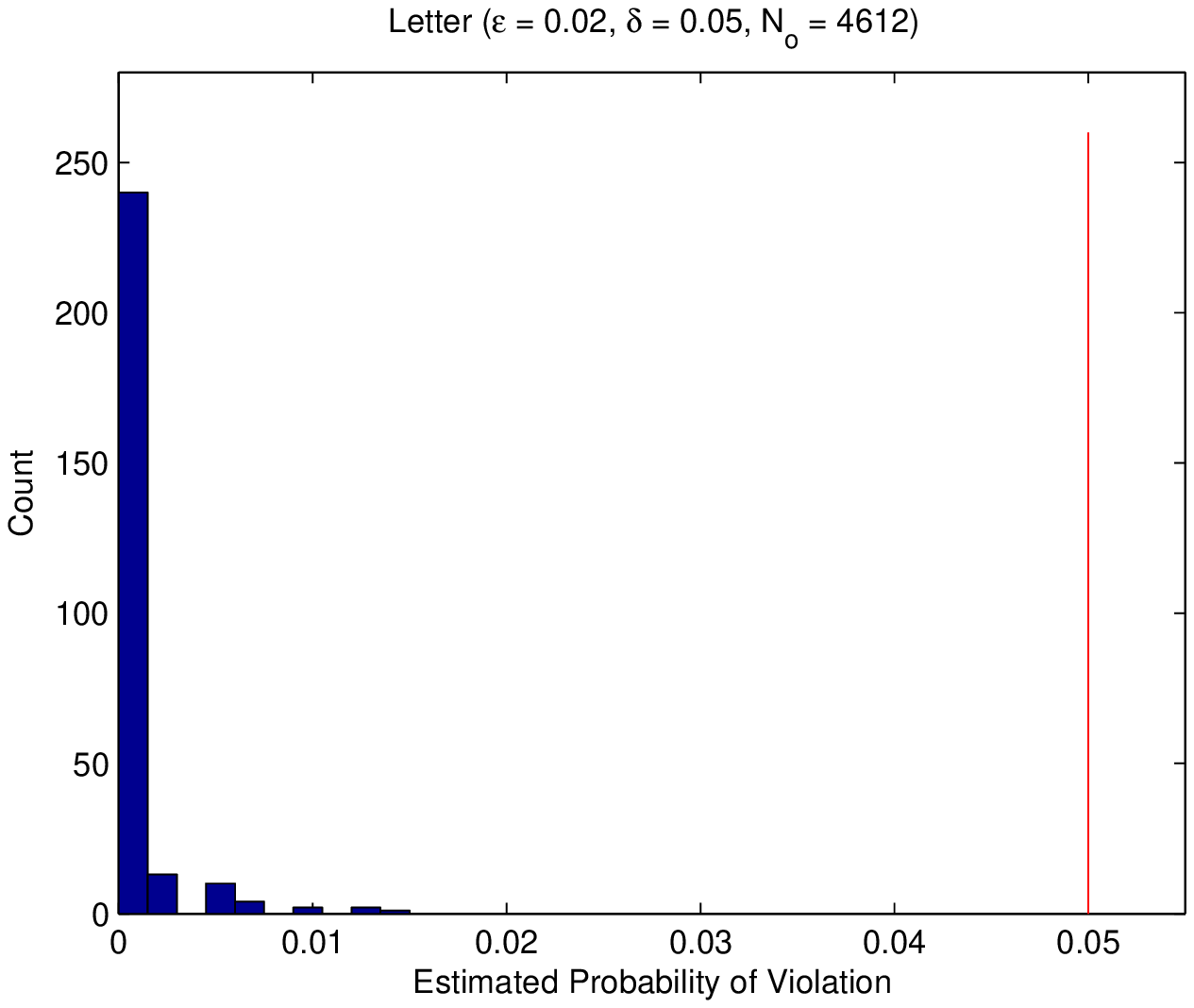}
  \caption{Histogram of Estimated Probabilities of Violation in Edge Learning for Letter
($m = 100$) with $\epsilon = 0.02$, $\delta = 0.05$ and $N_o = 4612$}
  \label{fig-Hoeffding_Letter_No_4612}
\end{minipage}
\qquad
\begin{minipage}[b]{0.46\linewidth}
  \includegraphics[width=1 \linewidth]{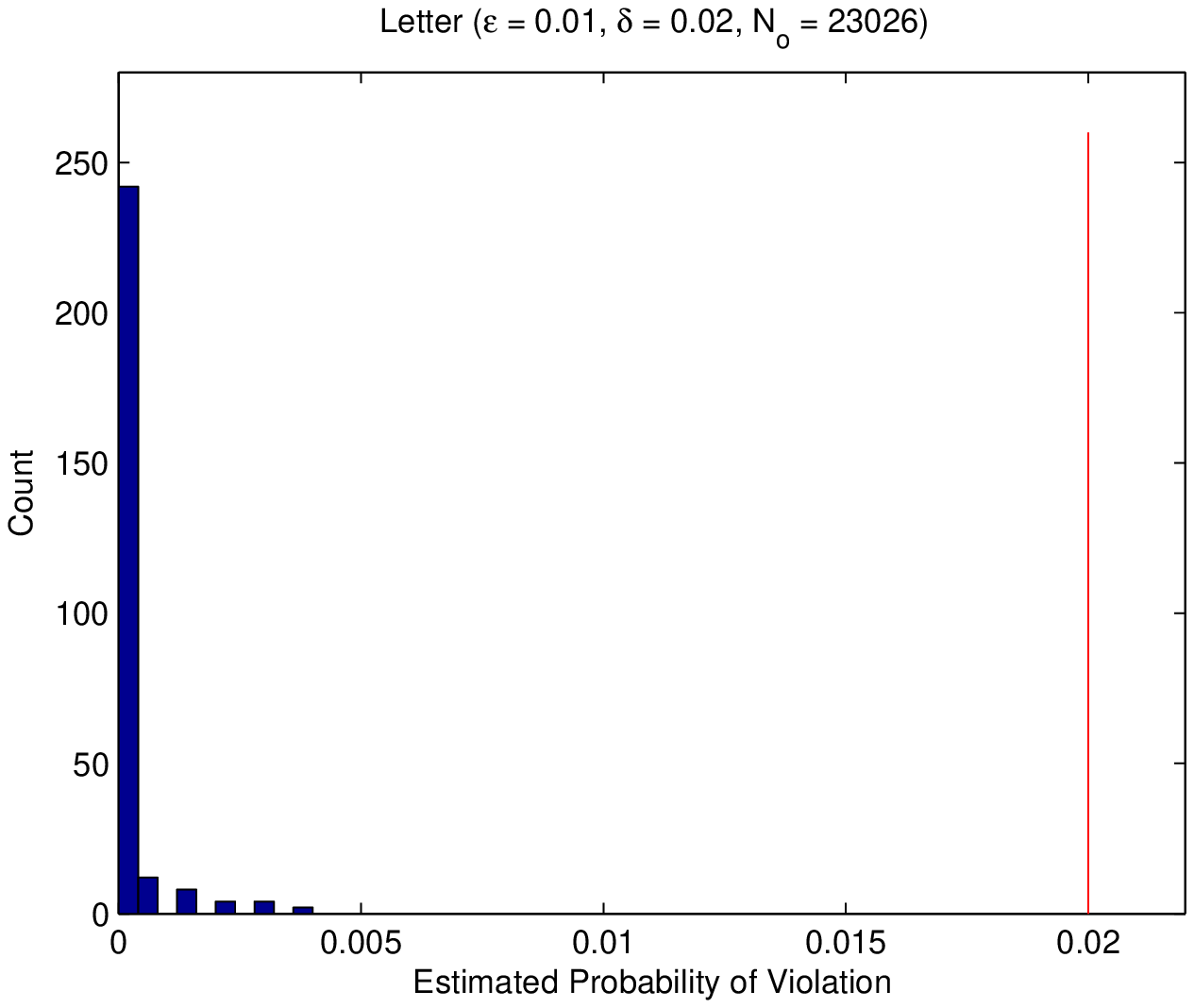}
  \caption{Histogram of Estimated Probabilities of Violation in Edge Learning for Letter
($m = 100$) with $\epsilon = 0.01$, $\delta = 0.02$ and $N_o = 23026$}
  \label{fig-Hoeffding_Letter_No_23026}
\end{minipage}
\end{figure}

\begin{figure}
\centering
\begin{minipage}[b]{0.46\linewidth}
  \includegraphics[width=1 \linewidth]{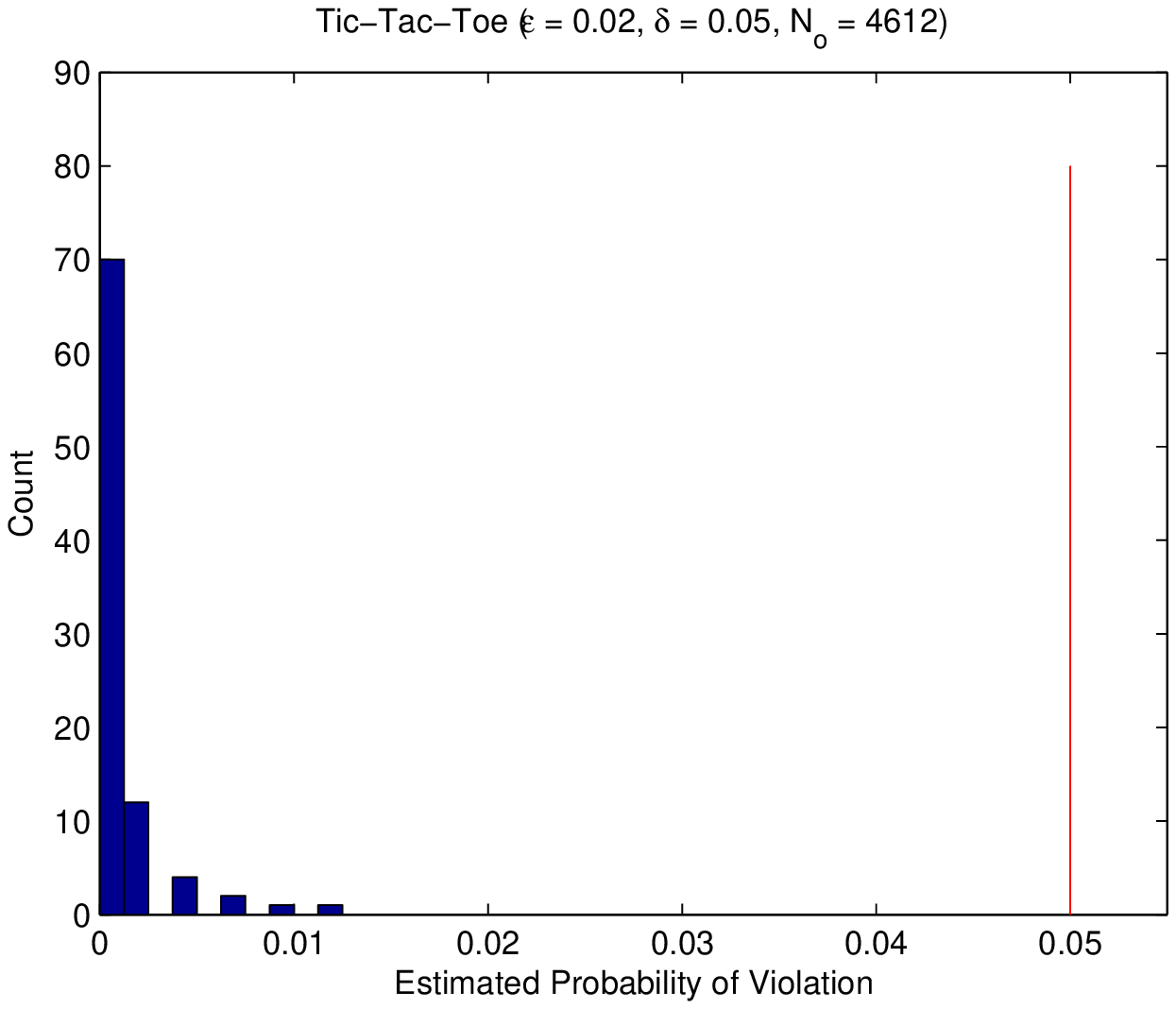}
  \caption{Histogram of Estimated Probabilities of Violation in Edge Learning for Tic-Tac-Toe
  with $\epsilon = 0.02$, $\delta = 0.05$ and $N_o = 4612$}
  \label{fig-Hoeffding_Tic_Tac_Toe_No_4612}
\end{minipage}
\qquad
\begin{minipage}[b]{0.46\linewidth}
  \includegraphics[width=1 \linewidth]{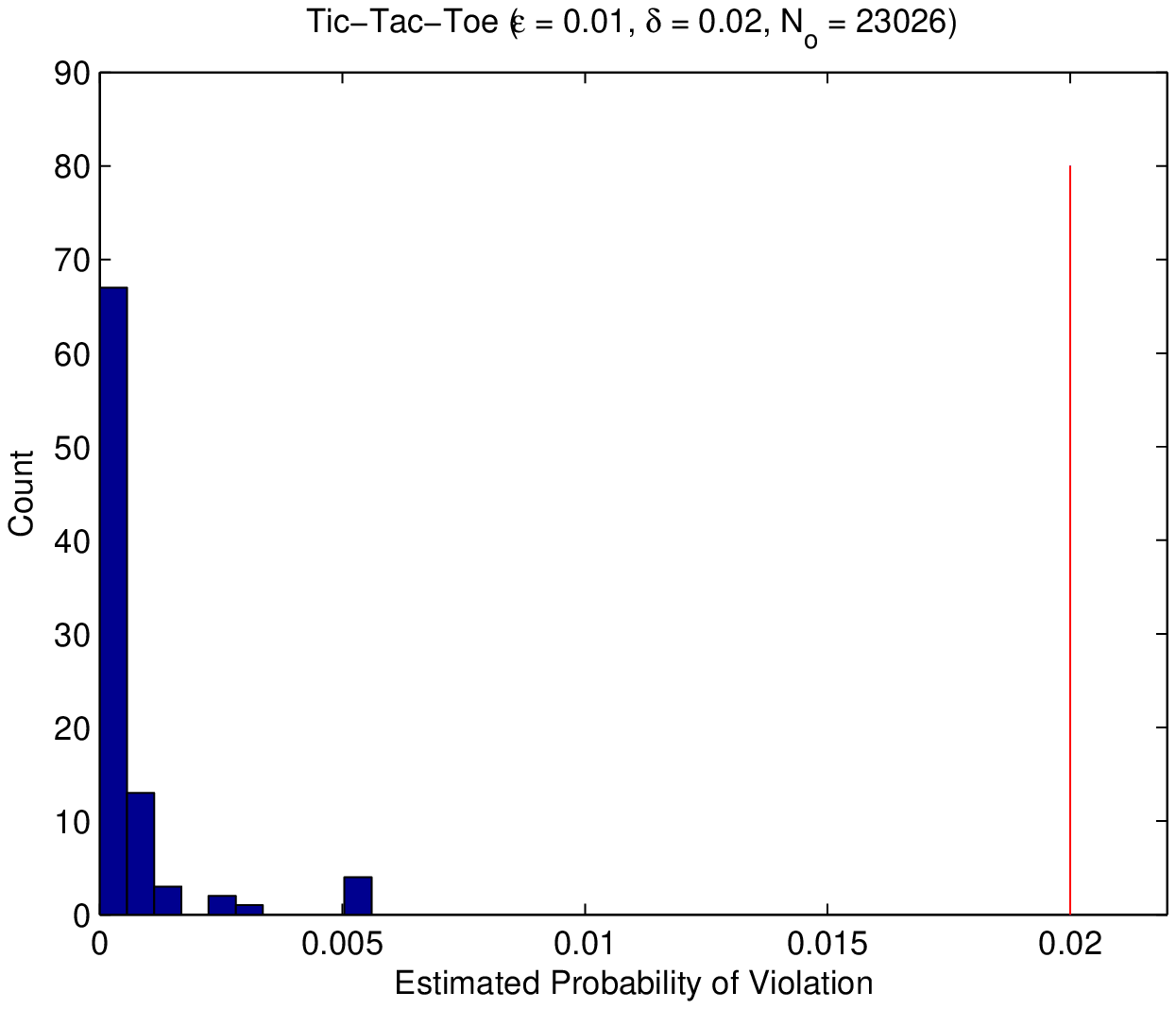}
  \caption{Histogram of Estimated Probabilities of Violation in Edge Learning for Tic-Tac-Toe
  with $\epsilon = 0.01$, $\delta = 0.02$ and $N_o = 23026$}
  \label{fig-Hoeffding_Tic_Tac_Toe_No_23026}
\end{minipage}
\end{figure}

\begin{figure}[ht]
\vskip 0.2in
\begin{center}
\centerline{\includegraphics[width=1.1 \textwidth, height=0.5 \textwidth]{}}
\caption{Plot of Probability of Violation versus $N_o$ for Tic-Tac-Toe
with $\epsilon = 0.02$}
\label{fig-Hoeffding_Tic_Tac_Toe_No_2000_10000}
\end{center}
\vskip -0.2in
\end{figure}

\section{Conclusion \label{sec-con}}

We develop new algorithms for efficiently sampling Bayesian network structures (DAGs). The sampled DAGs can then be used to build estimators for the posteriors of any features of interests.
Theoretically we show that our estimators have several desirable properties.
For example, unlike the existing MCMC algorithms,
the estimators based on the DDS algorithm satisfy the Hoeffding bound
and therefore enjoy the quality guarantee of the estimation with the given number of samples.
Empirically we show that our estimators considerably outperform the previous state-of-the-art with or without assuming the order-modular prior.

Our algorithms are capable of estimating the posteriors of arbitrary (non-modular) features
(the DDS under the order-modular prior, the IW-DDS under the structure-modular prior);
while the exact algorithms are available for computing modular features under the order-modular prior
with time $O(n^{k+1} C(m) + k n 2^n)$ and space $O(n 2^n)$
\citep{Koivisto:soo04,Koivisto:06};
computing path features under the order-modular prior with time $O(n^{k+1} C(m) + n 3^n)$ and space $O(3^n)$
\citep{Parviainen:ECML2011};
and computing modular features under the structure-modular prior with time $O(n^{k+1} C(m) + kn2^n + 3^n)$ and space $O(n2^n)$
\citep{tian:he2009}.
The bottleneck of our algorithms is their first computation step,
the DP algorithm
of \citet{Koivisto:soo04}
whose space cost is $O(n 2^n)$.
Therefore the application of our algorithms is limited to the data sets on which the DP algorithm
of \citet{Koivisto:soo04}
is able to run -- up to around 25 variables in current desktops,
while a parallel implementation of the DP algorithm has been demonstrated on a data set with 33 variables
using a cluster including totally 2,048 processors and 8,192 GB memory
\citep{chen:etal14}.

\clearpage

\appendix{}


\section{Proofs of Propositions, Theorems and Corollary \label{sec-appendix-proofs}}
This appendix provides the proofs of the propositions, theorems and corollary
in the paper.

\subsection{Proof of Proposition \ref{prop-1} \label{sec-appendix-proof-prop1}}

We first prove a lemma for Proposition 1.

Let an order $\prec$ be represented as $(\sigma_1, \ldots, \sigma_n)$ 
where $\sigma_i$ is the $i$th element in the order.
\begin{lemma}\label{lm-1}

The probability that
the last $n-k+1$ elements along the order are $\sigma_k, \sigma_{k+1}, \ldots, \sigma_n$ respectively
is given by
\begin{align*}
p(\sigma_k, \sigma_{k+1}, \ldots, \sigma_n, D) = L'(U^{\prec}_{\sigma_k})\prod_{i=k}^n\alpha'_{\sigma_i}(U^{\prec}_{\sigma_i}),
\end{align*}
where $U^{\prec}_{\sigma_i}= V-\{\sigma_i, \sigma_{i+1}, \ldots, \sigma_n\}$.
\end{lemma}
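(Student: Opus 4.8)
The plan is to prove Lemma~\ref{lm-1} by relating the joint probability $p(\sigma_k, \sigma_{k+1}, \ldots, \sigma_n, D)$ to the quantities $L'$ and $\alpha'$ already defined from the DP algorithm of \citet{Koivisto:soo04}. The key observation is that specifying the last $n-k+1$ elements of an order as $\sigma_k, \ldots, \sigma_n$ is equivalent to specifying, for each $i \in \{k, k+1, \ldots, n\}$, that the predecessor set of $\sigma_i$ is exactly $U^{\prec}_{\sigma_i} = V - \{\sigma_i, \sigma_{i+1}, \ldots, \sigma_n\}$, while leaving the relative order of the first $k-1$ elements (i.e., the elements of $U^{\prec}_{\sigma_k}$) completely unconstrained.

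First I would start from Eq.~(\ref{eq-p_prec_D}), namely $p(\prec, D) = \prod_{i=1}^n \alpha'_i(U^{\prec}_i)$, which holds for any complete order. Then I would write $p(\sigma_k, \sigma_{k+1}, \ldots, \sigma_n, D)$ as the sum of $p(\prec', D)$ over all complete orders $\prec'$ whose last $n-k+1$ positions carry $\sigma_k, \ldots, \sigma_n$. For any such $\prec'$, the factors $\alpha'_{\sigma_i}(U^{\prec}_{\sigma_i})$ for $i \ge k$ are fixed (they depend only on the tail, which is common to all these orders), so they factor out of the sum. What remains is $\sum \prod_{j \in U^{\prec}_{\sigma_k}} \alpha'_j(U^{\prec'}_j)$, where the sum ranges over all linear orders of the set $S := U^{\prec}_{\sigma_k}$ and $U^{\prec'}_j$ is the predecessor set of $j$ within that sub-order. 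By the definition of the forward contribution in Eq.~(\ref{eq-L_S}), this sum is exactly $L'(S) = L'(U^{\prec}_{\sigma_k})$. Combining these gives the claimed identity
\begin{align*}
p(\sigma_k, \sigma_{k+1}, \ldots, \sigma_n, D) = L'(U^{\prec}_{\sigma_k}) \prod_{i=k}^n \alpha'_{\sigma_i}(U^{\prec}_{\sigma_i}).
\end{align*}

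The one subtle point I want to be careful about is the claim that $\alpha'_j$ evaluated at the within-$S$ predecessor set equals the factor appearing in $p(\prec', D)$; this works precisely because for $j \in S$ the full-order predecessor set $U^{\prec'}_j$ is a subset of $S$ (none of $\sigma_k, \ldots, \sigma_n$ precedes $j$), so the restriction to $S$ changes nothing. I would also note the boundary cases: when $k = n$ the product is the single factor $\alpha'_{\sigma_n}(U^{\prec}_{\sigma_n}) = \alpha'_{\sigma_n}(V - \{\sigma_n\})$ and $L'(U^{\prec}_{\sigma_n}) = L'(V - \{\sigma_n\})$, recovering the marginal for the last element; and when $k = 1$ we get $L'(\emptyset) \prod_{i=1}^n \alpha'_{\sigma_i}(U^{\prec}_{\sigma_i}) = p(\prec, D)$ since $L'(\emptyset) = 1$, consistent with Eq.~(\ref{eq-p_prec_D}). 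The main (minor) obstacle is just bookkeeping the decomposition of the order-sum cleanly; there is no real difficulty, since everything reduces to the recursive structure of $L'$ already established in Eq.~(\ref{eq-L_S}) and Eq.~(\ref{eq-L_S_recursive}).
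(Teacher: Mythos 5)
Your proposal is correct and follows essentially the same route as the paper's proof: marginalize $p(\sigma_k,\ldots,\sigma_n,D)$ over all linear orders of the head set $U^{\prec}_{\sigma_k}$, factor out the tail factors $\alpha'_{\sigma_i}(U^{\prec}_{\sigma_i})$ using Eq.~(\ref{eq-p_prec_D}), and identify the remaining sum with $L'(U^{\prec}_{\sigma_k})$ via Eq.~(\ref{eq-L_S}). Your additional remarks on the predecessor sets restricting correctly to $S$ and the boundary cases $k=n$, $k=1$ are fine but not needed beyond what the paper records.
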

\begin{proof}
\begin{align*}
&   p(\sigma_k, \sigma_{k+1}, \ldots, \sigma_n, D) \nonumber \\
&=  \sum_{\prec'\in\mathcal{L}(U^{\prec}_{\sigma_k})} p(\prec',\sigma_k, \sigma_{k+1}, \ldots, \sigma_n, D)\\
&=  \prod_{i=k}^n\alpha'_{\sigma_i}(U^{\prec}_{\sigma_i})\sum_{\prec'\in\mathcal{L}(U^{\prec}_{\sigma_k})}\prod_{i\in U^{\prec}_{\sigma_k} } \alpha'_i(U^{\prec}_i) \quad (\textrm{from (\ref{eq-p_prec_D})})\\
&=  L'(U^{\prec}_{\sigma_k})\prod_{i=k}^n\alpha'_{\sigma_i}(U^{\prec}_{\sigma_i}).\quad (\textrm{from (\ref{eq-L_S})})
\end{align*}
\end{proof}

Proposition~\ref{prop-1} can be directly proved from the conclusion of Lemma~\ref{lm-1} according to the definition of conditional probability.

\subsection{Proof of Proposition \ref{prop-2}  }

\begin{proof}

From Proposition~\ref{prop-1} and $L'(U^{\prec}_{\sigma_1}=\emptyset)=1$, we have
\begin{align*}
& \prod_{k=1}^n p(\sigma_k| \sigma_{k+1}, \ldots, \sigma_n, D) \\
& = \frac{\prod_{i=1}^n\alpha'_{\sigma_i}(U^{\prec}_{\sigma_i})}{L'(V)} \\
& = \frac{p(\prec, D)}{p_{\prec}(D)} \quad (\textrm{from (\ref{eq-p_prec_D}) \& (\ref{eq-120330237}) }) \\
& = p(\prec | D),
\end{align*}
which proves Eq.~(\ref{eq-120330235}).  
\end{proof}

\subsection{Proof of Theorem \ref{thm-DDS}  }

\begin{proof}

First, we show that each DAG $G$ sampled according to our DDS algorithm has its pmf $p_s(G)$ equal to
the exact posterior $p_{\prec}(G|D)$ by assuming the order-modular prior.

On one hand, from the following derivation, we can get the exact form for $p_{\prec}(G|D)$:
\begin{align}
&   p_{\prec}(f | D)    \nonumber \\
&=  \sum_{\prec} p(\prec | D) p(f | \prec, D)   \nonumber \\
&=  \sum_{\prec} p(\prec | D) \sum_{G \subseteq  \prec} f(G) p(G | \prec, D) \nonumber \\
&=  \sum_{\prec} \sum_{G \subseteq  \prec} f(G) p(\prec, G | D) \nonumber \\
&=  \sum_{G} f(G) \sum_{\prec s.t. G \subseteq  \prec}  p(\prec, G | D).    \label{eq-prop-3-1}
\end{align}
Thus,
for each possible DAG $G_i$,
by setting $f(G)$ to be the indication function $I[G = G_i]$
and
then relating Eq.~(\ref{eq-prop-3-1}) with Eq.~(\ref{eq-HR-pre1}),
we know that
$p_{\prec}(G_i|D) = \sum_{\prec s.t. G_i \subseteq  \prec}  p(\prec, G_i  | D)$ for each $G_i$.

On the other hand, the event that a DAG $G$ gets sampled according to our DDS algorithm occurs if and only if
one of the orders that $G$ is consistent with gets sampled in Step 2 of the DDS algorithm and
then $G$ gets sampled from the sampled order in Step 3 of the DDS algorithm.
Therefore, based on Total Probability Formula,
$p_s(G)$
$= \sum_{\prec s.t. G \subseteq  \prec}  p(\prec | D) p(G | \prec, D) $
$= \sum_{\prec s.t. G \subseteq  \prec}  p(\prec, G  | D) $
$= p_{\prec}(G|D)$.

Second, since $N_o$ orders are sampled independently and each DAG per sampled order is sampled independently,
$p_s(G_1, G_2, \ldots, G_{N_o} | D)$
$= \prod_{i=1}^{N_o} [ \sum_{\prec s.t. G \subseteq  \prec}  p(\prec | D) p(G_i | \prec, D)  ] $
$= \prod_{i=1}^{N_o} $  $p_{\prec}(G_i|D)$.

Therefore, Theorem \ref{thm-DDS} is proved.

\end{proof}

\subsection{Proof of Corollary \ref{coro-1}}

\begin{proof}

For each $G_i$ in the DAG set $\{G_1, $  $G_2, \ldots, G_{N_o} \}$ sampled from the DDS algorithm,
$f(G_i) \in \{0, 1\}$.
Since  $G_1, $  $G_2, \ldots, G_{N_o} $
are $iid$ with pmf $p_{\prec}(G|D)$ (by Theorem \ref{thm-DDS}),
$f(G_1), $  $f(G_2), \ldots, $  $f(G_{N_o}) $ are $iid$ with Bernoulli pmf.

For each $G_i$, the following is true for $E (f(G_i))$, the expectation of $f(G_i)$.
\begin{align*}
& E  (f(G_i)) \\
& = \sum_{G} f(G) p_{\prec}(G | D) \\
& = p_{\prec}(f|D).
\end{align*}

Thus, $f(G_1), $  $f(G_2), \ldots, f(G_{N_o}) $ are $iid$ with Bernoulli($p_{\prec}(f|D)$).
In other words, $f(G_1), $  $f(G_2), \ldots, $ $f(G_{N_o}) $ are independent; and for each $G_i$,
$P(f(G_i) = 1) = p_{\prec}(f|D)$ and $P(f(G_i) = 0) = 1 - p_{\prec}(f|D)$.

(i) Proof that $\hat{p}_{\prec}(f|D)$ is an unbiased estimator for $p_{\prec}(f|D)$,
that is, $E ( \hat{p}_{\prec}(f|D) )$ $ = p_{\prec}(f|D)$.

\begin{align*}
& E \left( \hat{p}_{\prec}(f|D) \right) \\
& = E \left( \frac{1}{N_o} \sum_{i = 1}^{N_o}  f(G_i) \right) \\
& =   \frac{1}{N_o} \sum_{i = 1}^{N_o}  E \left( f(G_i) \right) \\
& =  \frac{1}{N_o} N_o  p_{\prec}(f|D) \\
& =  p_{\prec}(f|D).
\end{align*}

(ii) Proof that $\hat{p}_{\prec}(f|D)$ converges almost surely to $p_{\prec}(f|D)$.

Since
$f(G_1), $  $f(G_2), \ldots, f(G_{N_o}) $ are $iid$
with $E(f(G_i))$ 
$= p_{\prec}(f|D) < \infty$,
and since
\begin{align*}
\hat{p}_{\prec}(f|D) = \frac{1}{N_o} \sum_{i = 1}^{N_o}  f(G_i),
\end{align*}
based on the Strong Law of Large Numbers (Theorem 5.5.9)
\citep{CB:StatisticalI},
$\hat{p}_{\prec}(f|D)$ converges almost surely to $p_{\prec}(f|D)$ (as $N_o \rightarrow \infty$).

Note that, by Theorem 2.5.1
\citep{athreya:measure2006},
the property that $\hat{p}_{\prec}(f|D)$ converges almost surely to $p_{\prec}(f|D)$
implies
that
$\hat{p}_{\prec}(f|D)$ converges in probability to $p_{\prec}(f|D)$, that is,
$\hat{p}_{\prec}(f|D)$ is a consistent estimator for $p_{\prec}(f|D)$.

(iii) Proof that
if $0 < p_{\prec}(f|D) < 1$,
then the random variable
$$
\frac{\sqrt{N_o}(\hat{p}_{\prec}(f|D) - p_{\prec}(f|D))} { \sqrt{\hat{p}_{\prec}(f|D)(1 - \hat{p}_{\prec}(f|D))} }
$$
has a limiting standard normal distribution,
denoted by

$$
\frac{\sqrt{N_o}(\hat{p}_{\prec}(f|D) - p_{\prec}(f|D))} { \sqrt{\hat{p}_{\prec}(f|D)(1 - \hat{p}_{\prec}(f|D))} }
\longrightarrow^{d} \mathcal{N}(0, 1).
$$

Since $f(G_1), $  $f(G_2), \ldots, f(G_{N_o}) $ are $iid$ with Bernoulli($p_{\prec}(f|D)$),
for each $G_i$, the following is true for $Var(f(G_i))$, the variance of $f(G_i)$.
\begin{align*}
& Var(f(G_i)) \\
& =  E(f(G_i)) (1 - E(f(G_i)))\\
& = p_{\prec}(f|D) ( 1- p_{\prec}(f|D) ) \\
& < \infty.
\end{align*}

Since $ 0 < p_{\prec}(f|D) < 1$,
$Var(f(G_i))$ is also strictly greater than 0.

Again, we have already known that $E(f(G_i))$
$= p_{\prec}(f|D) < \infty$.

Thus, by the Central Limit Theorem (Theorem 5.5.15) \citep{CB:StatisticalI},

$$
\frac{\sqrt{N_o}(\hat{p}_{\prec}(f|D) - E(f(G)) )} { \sqrt{ Var(f(G))  } }
\longrightarrow^{d} \mathcal{N}(0, 1),
$$

that is,

$$
\frac{\sqrt{N_o}(\hat{p}_{\prec}(f|D) - p_{\prec}(f|D))} { \sqrt{p_{\prec}(f|D)(1 - p_{\prec}(f|D))} }
\longrightarrow^{d} \mathcal{N}(0, 1).
$$

Since      $\hat{p}_{\prec}(f|D)$ converges in probability to $p_{\prec}(f|D)$,
denoted by $\hat{p}_{\prec}(f|D)$ $\longrightarrow^{p}$       $p_{\prec}(f|D)$,
by the Continuous Mapping Theorem (Theorem 5.5.4) \citep{CB:StatisticalI},
$$
\sqrt{\hat{p}_{\prec}(f|D)(1 - \hat{p}_{\prec}(f|D))} \longrightarrow^{p}
\sqrt{p_{\prec}(f|D)(1 - p_{\prec}(f|D))}.
$$

Finally, by Slutsky's Theorem (Theorem 5.5.17) \citep{CB:StatisticalI},

$$
\frac{\sqrt{N_o}(\hat{p}_{\prec}(f|D) - p_{\prec}(f|D))} { \sqrt{\hat{p}_{\prec}(f|D)(1 - \hat{p}_{\prec}(f|D))} }
\longrightarrow^{d} \mathcal{N}(0, 1).
$$

(iv) 
Proof that for any $\epsilon > 0$, any $0 < \delta < 1$,
if $N_o \geq (\ln (2/\delta))/(2\epsilon^2)$,
then $p(|\hat{p}_{\prec}(f|D) -p_{\prec}(f|D)| < \epsilon)$ $ \geq 1 - \delta$.

Since $f(G_1), $  $f(G_2), \ldots, f(G_{N_o}) $ are $iid$ with Bernoulli($p_{\prec}(f|D)$),
the Hoeffding bound \citep{Hoeffding:1963,Koller:PGM} holds:
\begin{align*}
p(|\hat{p}_{\prec}(f|D) -p_{\prec}(f|D)| \geq \epsilon) \leq 2e^{-2 N_o \epsilon^2}.
\end{align*}

This is equivalent to

\begin{align*}
p(|\hat{p}_{\prec}(f|D) -p_{\prec}(f|D)| < \epsilon) \geq 1 - 2e^{-2 N_o \epsilon^2},
\end{align*}

and the conclusion is implied straightforward.

\end{proof}

\subsection{Proof of Equation~(\ref{eq-relation_p-sigma_j-U_sigma_j})}
\begin{proof}
For any $j$ $\in \{1, \ldots, n\}$:
\begin{align*}
& P((\sigma_j, U_{\sigma_j}) | D) \\
& \propto P((\sigma_j, U_{\sigma_j}) , D) \\
& = \sum_{\substack{ (\sigma_1,     \ldots, \sigma_{j-1}) \\ \in \mathcal{L}(U_{\sigma_j}) }}
    \sum_{\substack{ (\sigma_{j+1}, \ldots, \sigma_n)     \\ \in \mathcal{L}(V - U_{\sigma_j} - \{\sigma_j\}) }}
    P(\sigma_1, \ldots, \sigma_{j-1}, \sigma_j, \sigma_{j+1}, \ldots, \sigma_n, D)\\
& = \sum_{\substack{ (\sigma_1,     \ldots, \sigma_{j-1}) \\ \in \mathcal{L}(U_{\sigma_j}) }}
    \sum_{\substack{ (\sigma_{j+1}, \ldots, \sigma_n)     \\ \in \mathcal{L}(V - U_{\sigma_j} - \{\sigma_j\}) }}
    \prod_{i=1}^n \alpha'_{\sigma_i}(U_{\sigma_i})\\
& = \alpha'_{\sigma_j}(U_{\sigma_j}) L'(U_{\sigma_j}) R'(V - U_{\sigma_j} - \{ \sigma_j \}).
\end{align*}
\end{proof}

\subsection{Proof of Equation~(\ref{eq-relation_p_prec_p_nprec})}
\begin{proof}
\begin{align*}
& p_{\prec}(G|D)\\
& = \sum_{\prec s.t. G \subseteq  \prec}  p(\prec, G | D) \\
& = \frac{1}{p_{\prec}(D)} \sum_{\prec s.t. G \subseteq  \prec}  p(\prec, G, D) \\
& = \frac{1}{p_{\prec}(D)} \sum_{\prec s.t. G \subseteq  \prec}  p(\prec, G) p(D | G) \\
& = \frac{1}{p_{\prec}(D)} \sum_{\prec s.t. G \subseteq  \prec}  (\prod_{i=1}^n q_i(U_i) \rho_i(Pa_i)) p(D | G) \\
& = \frac{1}{p_{\prec}(D)} \sum_{\prec s.t. G \subseteq  \prec}  (\prod_{i=1}^n p_i(Pa_i)) p(D | G) \\
& = \frac{1}{p_{\prec}(D)} \sum_{\prec s.t. G \subseteq  \prec}  p(G) p(D | G) \\
& = \frac{1}{p_{\prec}(D)} \cdot |\prec_{G}| \cdot  p_{\nprec}(G,D) \\
& = \frac{p_{\nprec}(D)}{p_{\prec}(D)} \cdot |\prec_{G}| \cdot p_{\nprec}(G | D).
\end{align*}

Since
both $p_{\nprec}(D) > 0$  and $p_{\prec}(D) > 0$,
$p_{\prec}(G|D) \propto $  
$|\prec_{G}| \cdot  p_{\nprec}(G|D)$, which also has been shown
by \citet{ellis:won08}.

\end{proof}

\subsection{Proof of Theorem~\ref{thm-IW-DDS}}
\begin{proof}

Let $\Omega$ denote the set of all the DAGs.
Define $\mathcal{U}^+$ $ = $ $\{G \in \Omega: p_{\nprec}(G, D) > 0 \}$.
$\mathcal{U}^+$ will equal $\Omega$
if the user chooses a prior such that $p(G) > 0 $ for every DAG $G$ (such as a uniform DAG prior $p(G) \equiv 1$).
However, if the user has some additional domain knowledge
so that he/she sets some prior to exclude some DAGs a priori,
$\mathcal{U}^+$ will be a proper subset of $\Omega$.
Note that $p_{\nprec}(f|D)$ $ = p_{\nprec}(f, D) / p_{\nprec}(D) $,
where $p_{\nprec}(f, D)$ $ = \sum_{G} f(G) p_{\nprec}(G, D)$ $ = \sum_{G \in \mathcal{U}^+} f(G) p_{\nprec}(G, D) $,
and $p_{\nprec}(D)$ $ = p_{\nprec}(f \equiv 1, D)$ $ = \sum_{G} p_{\nprec}(G, D)$ $ = \sum_{G \in \mathcal{U}^+} p_{\nprec}(G, D) $.

Let $I[]$ denote the indicator function.
Rewrite $\hat{p}_{\nprec}(f|D)$ in Eq.~(\ref{eq-esti-f-Given-D}) as $\hat{p}_{\nprec}(f, D) / \hat{p}_{\nprec}(D) $,
where
$\hat{p}_{\nprec}(f, D) $
$= \sum_{G \in \mathcal{G}} f(G) p_{\nprec}(G, D)$
$= \sum_{G} f(G)$ $ p_{\nprec}(G, D) I[G\in \mathcal{G}] $
$= \sum_{G \in \mathcal{U}^+} f(G) p_{\nprec}(G, D) I[ $ $G \in \mathcal{G}] $,
and
$\hat{p}_{\nprec}(D)$
$ = \hat{p}_{\nprec}(f \equiv 1, D)$
$= \sum_{G \in \mathcal{G}} p_{\nprec}(G, D)$
$= \sum_{G} p_{\nprec}(G, D) I[G \in \mathcal{G}] $
$= \sum_{G \in \mathcal{U}^+} $ $p_{\nprec}(G, D) I[G \in \mathcal{G}] $.

Note that by Theorem \ref{thm-DDS},
$P(G\in \mathcal{G})$ = $1 - (1-p_{\prec}(G|D))^{N_o}$,
where $p_{\prec}(G|D)$ is the exact posterior of $G$ under the order-modular prior assumption.
Also note that by Eq.~(\ref{eq-relation_p_prec_p_nprec}), $\forall G: (p_{\nprec}(G, D) > 0) \Rightarrow (p_{\prec}(G|D) > 0)$.

(i) Proof that $\hat{p}_{\nprec}(f|D)$ is an asymptotically unbiased estimator for $p_{\nprec}(f|D)$,
that is,
\begin{align}
\lim_{N_o \to \infty} E(\hat{p}_{\nprec}(f|D)) = p_{\nprec}(f|D). \label{eq-thm-2-proof-target}
\end{align}

For notational convenience,
let $\gamma$ denote $\hat{p}_{\nprec}(f, D)$,
and let $\tau$ denote $\hat{p}_{\nprec}(D)$.
Define $g(\gamma, \tau)$ $ = \gamma / \tau$ so that $g(\gamma, \tau)$ is essentially $\hat{p}_{\nprec}(f | D)$.

Note that
\begin{align*}
& E(\gamma) \\
& = E(\hat{p}_{\nprec}(f, D)) \\
& = E \left( \sum_{G \in \mathcal{U}^+} f(G) p_{\nprec}(G, D) I[G\in \mathcal{G}] \right) \\
& = \sum_{G \in \mathcal{U}^+} E( f(G) p_{\nprec}(G, D) I[G\in \mathcal{G}] ) \\
& = \sum_{G \in \mathcal{U}^+} f(G) p_{\nprec}(G, D) P(G\in \mathcal{G}) \\
& = \sum_{G \in \mathcal{U}^+} f(G) p_{\nprec}(G, D) (1 - (1-p_{\prec}(G|D))^{N_o}).
\end{align*}
Thus,
\begin{align*}
& \lim_{N_o \to \infty} E(\gamma) \\
& = \lim_{N_o \to \infty} \left( \sum_{G \in \mathcal{U}^+} f(G) p_{\nprec}(G, D) (1 - (1-p_{\prec}(G|D))^{N_o}) \right) \\
& =  \sum_{G \in \mathcal{U}^+} f(G) p_{\nprec}(G, D) \lim_{N_o \to \infty}(1 - (1-p_{\prec}(G|D))^{N_o}) \\
& =  \sum_{G \in \mathcal{U}^+} f(G) p_{\nprec}(G, D) \\
& = p_{\nprec}(f, D).
\end{align*}

Similarly, by setting $f \equiv 1$, we have
\begin{align*}
& E(\tau) \\
& = E(\hat{p}_{\nprec}(D)) \\
& = \sum_{G \in \mathcal{U}^+} p_{\nprec}(G, D) (1 - (1-p_{\prec}(G|D))^{N_o}),
\end{align*}
and
$$
\lim_{N_o \to \infty} E(\tau) = p_{\nprec}(D).
$$


Next, by Taylor's Theorem (with the Lagrange form of the remainder),
\begin{align*}
& g(\gamma, \tau) \\
& = g(E(\gamma), E(\tau))
+ \left[ \frac{\partial g(\gamma, \tau)}{\partial \gamma} \right]_{\gamma = E(\gamma), \tau = E(\tau)} (\gamma^* - E(\gamma)) \\
& \;\;\;\; + \left[ \frac{\partial g(\gamma, \tau)}{\partial \tau}   \right]_{\gamma = E(\gamma), \tau = E(\tau)} (\tau^* - E(\tau)),
\end{align*}

where
$\gamma^*$ $= E(\gamma) + \theta (\gamma - E(\gamma))$,
$\tau^*$ $= E(\tau) + \theta (\tau - E(\tau))$,
and $\theta$ is a random variable
such that $0 < \theta < 1$.

Examine the components separately:
\begin{align*}
& g(E(\gamma), E(\tau))
= E(\gamma) / E(\tau).
\end{align*}

\begin{align*}
& \left[ \frac{\partial g(\gamma, \tau)}{\partial \gamma} \right]_{\gamma = E(\gamma), \tau = E(\tau)} \\
& = \left[ \tau^{-1} \right]_{\gamma = E(\gamma), \tau = E(\tau)} \\
& = (E(\tau))^{-1}. \\
\end{align*}

\begin{align*}
& \left[ \frac{\partial g(\gamma, \tau)}{\partial \tau}   \right]_{\gamma = E(\gamma), \tau = E(\tau)} \\
& = \left[ - \gamma (\tau)^{-2} \right]_{\gamma = E(\gamma), \tau = E(\tau)} \\
& = - E(\gamma) (E(\tau))^{-2}. \\
\end{align*}

Since neither $E(\gamma)$ nor $E(\tau)$ is random,
\begin{align*}
& E \left( g(\gamma, \tau) \right) \\
& = E(\gamma) / E(\tau)
+  (E(\tau))^{-1}  E(\gamma^* - E(\gamma))
- E(\gamma) (E(\tau))^{-2}  E(\tau^* - E(\tau))\\
& = E(\gamma) / E(\tau)
+  (E(\tau))^{-1}  E(\theta (\gamma - E(\gamma)) )
- E(\gamma) (E(\tau))^{-2}  E(\theta (\tau - E(\tau)) ).
\end{align*}

Consider the limit of each component separately:
\begin{align*}
& \lim_{N_o \to \infty} \left( E(\gamma) / E(\tau) \right) \\
& = \left( \lim_{N_o \to \infty} E(\gamma) \right) / \left( \lim_{N_o \to \infty} E(\tau) \right) \\
& = p_{\nprec}(f, D) / p_{\nprec}(D) \\
& = p_{\nprec}(f | D).
\end{align*}

\begin{align*}
& \lim_{N_o \to \infty} (E(\tau))^{-1} \\
& = \left( \lim_{N_o \to \infty}E(\tau ) \right)^{-1}\\
& = (p_{\nprec}(D) )^{-1}.
\end{align*}

\begin{align*}
& \lim_{N_o \to \infty} - E(\gamma) (E(\tau))^{-2} \\
& = - \left(\lim_{N_o \to \infty} E(\gamma) \right)  \left(\lim_{N_o \to \infty} E(\tau) \right)^{-2}\\
& = - p_{\nprec}(f, D) (p_{\nprec}(D))^{-2}.
\end{align*}

Note that both $(p_{\nprec}(D) )^{-1}$ and $- p_{\nprec}(f, D) (p_{\nprec}(D))^{-2}$ are constant real numbers.

Finally, we intend to prove the following two equalities:
\begin{align}
\lim_{N_o \to \infty}  E(\theta (\gamma - E(\gamma)) ) = 0,  \label{eq-thm-2-main-proof1} \\
\lim_{N_o \to \infty}  E(\theta (\tau - E(\tau)) )     = 0.  \label{eq-thm-2-main-proof2}
\end{align}

Once this is done,
\begin{align*}
& \lim_{N_o \to \infty} E \left( g(\gamma, \tau) \right) \\
& = \lim_{N_o \to \infty} \left( E(\gamma) / E(\tau) \right)
+  \lim_{N_o \to \infty} (E(\tau))^{-1}  \cdot 0
+ \lim_{N_o \to \infty} \left( - E(\gamma) (E(\tau))^{-2} \right) \cdot 0 \\
& = p_{\nprec}(f | D).
\end{align*}

The whole proof for Eq.~(\ref{eq-thm-2-proof-target}) will then be done.

The proof for Eq.~(\ref{eq-thm-2-main-proof1}) is as follows.

Based on the definition of limit,
proving Eq.~(\ref{eq-thm-2-main-proof1})
is equivalent to proving



\begin{align}
\lim_{N_o \to \infty}  \left| E(\theta (\gamma - E(\gamma)) ) \right| = 0  \label{eq-thm-2-main-proof1-abs}.
\end{align}

Since
\begin{align*}
& \left| E(\theta (\gamma - E(\gamma)) ) \right| \\
& \leq E \left( | \theta (\gamma - E(\gamma)) | \right) \\
& = E \left( | \theta | \cdot | \gamma - E(\gamma) | \right)   \\
& \leq  E \left(  | \gamma - E(\gamma) | \right) \quad (\textrm{due to $0 < \theta < 1$  }),
\end{align*}

and $\left| E(\theta (\gamma - E(\gamma)) ) \right| \geq 0$,
to prove Eq.~(\ref{eq-thm-2-main-proof1-abs}), it is sufficient to prove
\begin{align}
\lim_{N_o \to \infty}  E \left(  | \gamma - E(\gamma) | \right)  = 0  \label{eq-thm-2-main-proof1-abs-del-theta}.
\end{align}

\begin{align*}
& E \left(  | \gamma - E(\gamma) | \right) \\
& = E \left( \left|  \sum_{G \in \mathcal{U}^+} f(G) p_{\nprec}(G, D) I[G\in \mathcal{G}]  \right. \right.
\left. \left. - \sum_{G \in \mathcal{U}^+} f(G) p_{\nprec}(G, D) P(G\in \mathcal{G}) \right| \right)\\
& = E \left( \left|  \sum_{G \in \mathcal{U}^+} f(G) p_{\nprec}(G, D) ( I[G\in \mathcal{G}] - P(G\in \mathcal{G}) ) \right| \right)\\
& \leq E \left(   \sum_{G \in \mathcal{U}^+} f(G) p_{\nprec}(G, D) \left| I[G\in \mathcal{G}] - P(G\in \mathcal{G})  \right| \right) \\
& =   \sum_{G \in \mathcal{U}^+} f(G) p_{\nprec}(G, D) E \left(  \left|  I[G\in \mathcal{G}] - P(G\in \mathcal{G})  \right| \right) \\
& =   \sum_{G \in \mathcal{U}^+} f(G) p_{\nprec}(G, D) [  (1 - P(G\in \mathcal{G})) P(G\in \mathcal{G}) \\
& \;\;\;\;\;\;\;\;\;\;      +  (P(G\in \mathcal{G}) - 0) (1 - P(G\in \mathcal{G}))   ] \\
& =   \sum_{G \in \mathcal{U}^+} f(G) p_{\nprec}(G, D) [  2 P(G\in \mathcal{G}) (1 - P(G\in \mathcal{G})) ] \\
& =   \sum_{G \in \mathcal{U}^+} f(G) p_{\nprec}(G, D) [  2 (1 - (1-p_{\prec}(G|D))^{N_o})
\times (1-p_{\prec}(G|D))^{N_o}  ].
\end{align*}

Since $\forall G \in  \mathcal{U}^+: p_{\prec}(G|D) > 0$,
\begin{align*}
& \lim_{N_o \to \infty}   \sum_{G \in \mathcal{U}^+} f(G) p_{\nprec}(G, D) [  2 (1 - (1-p_{\prec}(G|D))^{N_o})
\times (1-p_{\prec}(G|D))^{N_o}   ]  \\
& =  \sum_{G \in \mathcal{U}^+} f(G) p_{\nprec}(G, D) \lim_{N_o \to \infty} [  2 (1 - (1-p_{\prec}(G|D))^{N_o})
\times (1-p_{\prec}(G|D))^{N_o}   ]  \\
& =  \sum_{G \in \mathcal{U}^+} f(G) p_{\nprec}(G, D) \cdot 0 \\
& = 0,
\end{align*}

Eq.~(\ref{eq-thm-2-main-proof1-abs-del-theta}) is proved, and the proof for Eq.~(\ref{eq-thm-2-main-proof1}) is done.

Setting $f \equiv 1$ in Eq.~(\ref{eq-thm-2-main-proof1}) leads to Eq.~(\ref{eq-thm-2-main-proof2}).

Thus, the whole proof for Eq.~(\ref{eq-thm-2-proof-target}) is done.

(ii) Proof that $\hat{p}_{\nprec}(f|D)$ converges almost surely to $p_{\nprec}(f|D)$,
denoted by $\hat{p}_{\nprec}(f|D) \longrightarrow^{a.s.} p_{\nprec}(f|D)$.

Remember that $\Omega$ denotes the set of all the DAGs,
that is, $\Omega$ $ = \{G_1, G_2, \ldots, G_{W^*} \}$,
where $W^*$ is $| \Omega |$, the number of all the DAGs.
Note that $W^*$ is a finite positive integer though it is super-exponential in the number of variables $n$.

Let $\mathcal{F}$ be $\mathcal{P}(\Omega)$, the power set of $\Omega$.
Thus, $\mathcal{F}$ is a $\sigma-$algebra on $\Omega$ \citep{athreya:measure2006}.
Define for any $A \in \mathcal{F}$,
$\mu(A) $
$ = \sum_{G_j \in A} p_{\prec}(G_j|D)$.
It is well-known that $\mu$ is a probability measure on $\mathcal{F}$
so that $(\Omega, \mathcal{F}, \mu)$ is a probability space \citep{athreya:measure2006}.

For each $i \geq 1$,
let $\Omega_i = \Omega$, $\mathcal{F}_i = \mathcal{F}$ and $\mu_i = \mu$.
Let $\Omega^{\infty} $ $ = \{ (G^{(1)}, G^{(2)}, \ldots): G^{(i)} \in \Omega_i, i \geq 1 \}$.
Let a cylinder set $\uwave{A} = $ $A_1 \times A_2 \times \ldots \times A_k \times \Omega_{k+1} \times \Omega_{k+2} \times \ldots$,
where there exists $1 \leq k < \infty$
such that $A_i \in \mathcal{F}_i$ for $1 \leq i \leq k$ and $A_i = \Omega_i$ for $i > k$.
Let $\mathcal{F}^{\infty}$ $ = \sigma< \{ \uwave{A}: \uwave{A} $ is a cylinder set $\} > $,
that is,  $\mathcal{F}^{\infty}$ is a $\sigma-$algebra generated by the set of all the $\uwave{A}$'s.
$\mathcal{F}^{\infty}$ is a a $\sigma-$algebra on $\Omega^{\infty} $
and is called a product $\sigma-$algebra.

Define, for each $(A_1, A_2, \ldots, A_k, \Omega_{k+1}, \Omega_{k+2}, \ldots) \in \mathcal{F}^{\infty}$,
$\mu^{\infty}(A_1, A_2, \ldots, A_k, \Omega_{k+1}, \Omega_{k+2}, $ $\ldots)$
$ = \mu_1(A_1) \times \mu_2(A_2) \times \ldots \times \mu_k(A_k)$.
By Kolmogorov's consistency theorem,
$(\Omega^{\infty}, \mathcal{F}^{\infty}, \mu^{\infty})$ is a probability space \citep{athreya:measure2006}.

Let
$\Omega^{\infty, 0}$
$ = \{ (G^{(1)}, G^{(2)}, \ldots) \in \Omega^{\infty} : \exists G \in \mathcal{U}^+: \forall i \geq 1:  G^{(i)} \neq G \}$.
Let $\Omega^{\infty, 1}$
$ = \Omega^{\infty} - \Omega^{\infty, 0} $.
Thus,
$\Omega^{\infty, 1}$
$ = \{ (G^{(1)}, G^{(2)}, \ldots) \in \Omega^{\infty} : \forall G \in \mathcal{U}^+: \exists i \geq 1:  G^{(i)} = G \}$.

Define, for each $\omega^{\infty} \in \Omega^{\infty}$,
\begin{align*}
\hat{p}_{\nprec}^{N_o}(\omega^{\infty})
= \frac{ \sum_{G \in \mathcal{U}^+} f(G) p_{\nprec}(G, D) I[G\in \mathcal{G}^{N_o}(\omega^{\infty})] }
{\sum_{G \in \mathcal{U}^+}  p_{\nprec}(G, D) I[G\in \mathcal{G}^{N_o}(\omega^{\infty})]},
\end{align*}
where $\mathcal{G}^{N_o}(\omega^{\infty})$ is the DAG set including the first $N_o$ coordinates 
of $\omega^{\infty}$.
By the definition, we know that $\hat{p}_{\nprec}^{N_o}(\omega^{\infty})$ $= \hat{p}_{\nprec}(f|D)$.

For each $\omega^{\infty} \in \Omega^{\infty, 1}$,
for each $G \in \mathcal{U}^+$,
let $N(G, \omega^{\infty})$ be the smallest integer such that $G^{(N(G, \omega^{\infty}))} = G$.
Let $N(\omega^{\infty})$ $ = max_{G \in \mathcal{U}^+} N(G, \omega^{\infty})$.
Then for each $N_o \geq N(\omega^{\infty})$,
for each $G \in \mathcal{U}^+$,
$I[G\in \mathcal{G}^{N_o}(\omega^{\infty})] $ $ = 1$.

Accordingly,
for each $\omega^{\infty} \in \Omega^{\infty, 1}$,
there exists $N(\omega^{\infty})$ such that for each $N_o \geq  N(\omega^{\infty})$:
\begin{align*}
&  \hat{p}_{\nprec}^{N_o}(\omega^{\infty}) \\
& = \frac{ \sum_{G \in \mathcal{U}^+} f(G) p_{\nprec}(G, D)  }
{\sum_{G \in \mathcal{U}^+}  p_{\nprec}(G, D) } \\
& = p_{\nprec}(f|D).
\end{align*}
This implies that
$\lim_{N_o \to \infty} \hat{p}_{\nprec}^{N_o}(\omega^{\infty}) = p_{\nprec}(f|D)$ for each $\omega^{\infty} \in \Omega^{\infty, 1}$.

Finally, we intend to prove the following equality:
\begin{align}
\mu^{\infty}(\Omega^{\infty, 1}) = 1  \label{eq-thm-2-almost_surely-proof1}.
\end{align}
Once this is done, the whole proof for $\hat{p}_{\nprec}(f|D) $ $ \longrightarrow^{a.s.} p_{\nprec}(f|D)$ is done.

Proving Eq.~(\ref{eq-thm-2-almost_surely-proof1}) is equivalent to proving
\begin{align}
\mu^{\infty}(\Omega^{\infty, 0}) = 0  \label{eq-thm-2-almost_surely-proof2}.
\end{align}

For any $G \in \Omega$,
let $\Omega^{\infty, 0, G} $
$= \{ (G^{(1)}, G^{(2)}, \ldots) \in \Omega^{\infty} : \forall i \geq 1:  G^{(i)} \neq G \}$.
Thus,
$\Omega^{\infty, 0}  $
$= \bigcup_{G \in \mathcal{U}^+} \Omega^{\infty, 0, G}$.
Accordingly, $\mu^{\infty}(\Omega^{\infty, 0}) $
$\leq \sum_{G \in \mathcal{U}^+} \mu^{\infty}(\Omega^{\infty, 0, G})$.

For each $j \geq 1$, let $\Omega^{\infty, 0, G, j}$
$= \{ (G^{(1)}, G^{(2)}, \ldots) \in \Omega^{\infty} : \forall i \in \{1, \ldots, j \}:  G^{(i)} \neq G \}$.
Note
$\Omega^{\infty, 0, G, 1}$
$\supseteq \Omega^{\infty, 0, G, 2}$
$\supseteq \ldots$
$\supseteq \Omega^{\infty, 0, G, j-1}$
$\supseteq \Omega^{\infty, 0, G, j}$ for each $j \geq 1$.
Thus, $\Omega^{\infty, 0, G}$
$= \bigcap_{j=1}^{\infty} \Omega^{\infty, 0, G, j}$.

Finally, for any $G \in \mathcal{U}^+$,
\begin{align*}
& \mu^{\infty}(\Omega^{\infty, 0, G}) \\
&= \mu^{\infty}(\bigcap_{j=1}^{\infty} \Omega^{\infty, 0, G, j}) \\
&= \lim_{j \to \infty} \mu^{\infty}(\Omega^{\infty, 0, G, j}) \\
&= \lim_{j \to \infty} \mu_1(\Omega - \{G \}) \times \mu_2(\Omega - \{G \}) \times \ldots \times \mu_j(\Omega - \{G \})\\
&= \lim_{j \to \infty} [1 - \mu(\{G \})]^j \\
&= \lim_{j \to \infty} [1 - p_{\prec}(G | D)]^j \\
&= 0.
\end{align*}

Thus,
$\sum_{G \in \mathcal{U}^+} \mu^{\infty}(\Omega^{\infty, 0, G}) = 0$,
so that
Eq.~(\ref{eq-thm-2-almost_surely-proof2}) is proved.
The whole proof is done.

Note that, by Theorem 2.5.1
\citep{athreya:measure2006},
the property that
$\hat{p}_{\nprec}(f|D)$ converges almost surely to $p_{\nprec}(f|D)$
implies
that
$\hat{p}_{\nprec}(f|D)$ converges in probability to $p_{\nprec}(f|D)$, that is,
$\hat{p}_{\nprec}(f|D)$ is a consistent estimator for $p_{\nprec}(f|D)$.

(iii) Proof that the convergence rate of $\hat{p}_{\nprec}(f|D)$ is $o(a^{N_o})$ for any $0 < a < 1$.

In the proof for (ii), we have shown that
for each $\omega^{\infty} \in \Omega^{\infty, 1}$, 
there exists $N(\omega^{\infty})$ such that for each $N_o \geq N(\omega^{\infty})$,
$\hat{p}_{\nprec}^{N_o}(\omega^{\infty})  = p_{\nprec}(f|D)$.
This means that for any $0 < a < 1$,
$(a^{N_o})^{-1}  [\hat{p}_{\nprec}^{N_o}(\omega^{\infty})  - p_{\nprec}(f|D)] = 0$.
Thus, $ \lim_{N_o \to \infty} (a^{N_o})^{-1}  [\hat{p}_{\nprec}^{N_o}(\omega^{\infty})  - p_{\nprec}(f|D)] = 0$
so that the proof is done.

(iv) Proof that if the quantity
$\Delta  = \sum_{G\in \mathcal{G}} p_{\nprec}(G|D)$,
then
$\Delta \cdot \hat{p}_{\nprec}(f|D) \leq p_{\nprec}(f|D) \leq \Delta \cdot \hat{p}_{\nprec}(f|D) + 1 - \Delta$.

The proof is essentially the same as the proof for Proposition 1 of \citet{tian:he2010}
which proves Eq.~(\ref{eq-p_nprec_f_given_D_interval_equiv}), an equivalent form of Eq.~(\ref{eq-p_nprec_f_given_D_interval}).
The direct proof for Eq.~(\ref{eq-p_nprec_f_given_D_interval}) is also provided in the supplementary material.

\end{proof}

\clearpage

\bibliographystyle{hapalike}

\bibliography{./csl_cites_HR}

\end{document}